\documentclass[11pt]{article}

\usepackage[
  shownumpages,
  bgcolor={250,248,241},
  braincolor={133,76,101},
  linkcolor={133,76,101},
  citecolor={133,76,101},
  urlcolor={133,76,101},
  citingstyle=authoryear,
  bibliostyle=unsrtnat,
  bibfile=refs
]{Styles/brainlab}

\usepackage{microtype}
\usepackage{mathtools}
\usepackage{bm}
\usepackage{multirow}
\usepackage{tabularx}
\usepackage{longtable}
\usepackage{threeparttable}
\usepackage{subcaption}
\usepackage{enumitem}
\usepackage{wrapfig}

\definecolor{HWPrimary}{HTML}{854C65}
\definecolor{HWSecondary}{HTML}{F0C987}
\definecolor{HWSoft}{HTML}{FBEDD6}
\definecolor{HWBg}{HTML}{FAF8F1}
\definecolor{HWAccent}{HTML}{854C65}

\usepackage{tikz}
\usetikzlibrary{calc,positioning,arrows.meta,shapes,fit,decorations.pathreplacing}
\usepackage{pgfplots}
\pgfplotsset{compat=1.18}
\usepgfplotslibrary{fillbetween}

\graphicspath{{plots/}{./plots/}}

\setlist[itemize]{leftmargin=1.2em,itemsep=0.35em,topsep=0.35em}
\setlist[enumerate]{leftmargin=1.4em,itemsep=0.35em,topsep=0.35em}

\DeclareBrainTcbTheorem{lemma}{Lemma}
\DeclareBrainTcbTheorem{proposition}{Proposition}
\DeclareBrainTcbTheorem{remark}{Remark}
\DeclareBrainTcbTheorem{example}{Example}
\DeclareBrainTcbTheorem{note}{Note}
\newtheorem*{theorem*}{Theorem}

\DeclareMathOperator*{\argmin}{arg\,min}

\setbrainmeta{
  title={LoRA-TSD: Tangent-Space Spectral Descent for LoRA via Muon-Style Updates},
  authors={
    Dmitrii Andriianov\textsuperscript{1},
    Andrey Veprikov\textsuperscript{1,2},
    Aleksandr Beznosikov\textsuperscript{1,3}
  },
  affiliations={
    \textsuperscript{1}Basic Research of Artificial Intelligence Laboratory (BRAIn Lab)\\
    \textsuperscript{2}SB AI Lab\\
    \textsuperscript{3}Innopolis University
  },
  abstract={
  Low-rank adaptation (LoRA) is the standard way to fine-tune large models, yet when
  its two factors are trained independently, the update ignores the geometry
  of the low-rank weight change it induces. We introduce LoRA-TSD, an optimizer
  that treats every LoRA step as a tangent vector of the fixed-rank matrix
  manifold and takes the spectral-norm steepest-descent step of Muon inside that
  tangent space, mapping the result back to the factors through a retraction
  native to the LoRA parametrization. The step avoids expensive operations on full
  weight matrices, and its retraction is up to $2.8\times$ cheaper than the
  truncated-SVD retraction used by prior manifold methods. We prove that the Frobenius-norm
  version of our surrogate recovers LoRA-Pro, and we identify the
  tangent-projected gradient, the Riemannian gradient of the manifold, as the
  stationarity measure natural to LoRA training and computable from the factor
  gradients alone. Under this measure we give the first global convergence
  guarantees for both LoRA-Pro and LoRA-TSD, with rates that drive the
  factor-gradient norms to zero. Across six commonsense and
  natural-language-inference benchmarks with Llama-3.2-1B, Llama-3.1-8B and
  Qwen3-32B, LoRA-TSD outperforms every competing LoRA optimizer and stays
  robust to the adapter rank. Code is available at
  \url{https://github.com/brain-lab-research/LoRA-TSD}.
  },
}

\begin{document}
\begin{mainpart}

\section{Introduction}
\label{sec:intro}
Full fine-tuning adapts all parameters of a pretrained language model, but its
memory and storage costs become prohibitive at scale. Training requires
gradients and optimizer states for every parameter, and each downstream task
produces a separate model copy. Parameter-efficient fine-tuning (PEFT) reduces
these costs by
updating a small set of parameters while keeping the pretrained weights
frozen~\citep{han2024peft}. Among PEFT methods, low-rank adaptation
(LoRA) has become a common approach to adapting foundation models. Its compact
task-specific updates can be merged into the backbone without additional
inference latency, while multiple adapters can share the same frozen
model~\citep{hu2022lora}. This structure is also used in deployed systems,
including multi-adapter serving and domain-specific language models for
customer-facing applications~\citep{nguyen2024multitenant,scozzafava2025impel}.

For a frozen weight $W_0\in\mathbb{R}^{m\times n}$, LoRA represents the update
as $\Delta W=BA$, where
$B\in\mathbb{R}^{m\times r}$, $A\in\mathbb{R}^{r\times n}$, and
$r\ll\min(m,n)$. This parameterization has motivated a broad line of subsequent
work. Existing methods reduce training memory, allocate the rank budget
adaptively, improve initialization, modify factor scaling or weight
parameterization, and select the most useful
adapters~\citep{dettmers2023qlora,zhang2023adalora,meng2024pissa,hayou2024loraplus,liu2024dora,veprikov2026weightlora}.

Despite this progress, the optimization of the LoRA factors remains less
settled. Standard optimizers such as Adam~\citep{kingma2015adam} update $A$ and
$B$ as independent parameter blocks, even though the model depends on their
product $BA$. Consequently, a well-scaled step in factor space can produce a
poorly scaled or misaligned update in weight space. LoRA-specific optimizers
address this mismatch by correcting the induced weight update, removing its
dependence on the chosen factorization, or optimizing the low-rank weight
directly through fixed-rank manifold
geometry~\citep{lora-pro,tastan2025loft,yen2024lorarite,bonnabel2013stochastic,zhang2024riemannian,riemannion,cesista2026loramuon}.
Section~\ref{sec:rel_work} reviews these approaches and explains why applying a
full-matrix optimizer such as Muon within the LoRA parameterization remains
nontrivial.

We propose LoRA-TSD (Tangent-Space Spectral Descent), which brings the
spectral update of Muon~\citep{muon} to LoRA. Muon balances the singular
directions of a matrix update instead of allowing a few directions to dominate.
LoRA-TSD computes the corresponding update for the low-rank weight and maps it
back to $A$ and $B$. This mapping gives a factor-induced retraction that follows
directly from the LoRA parametrization. Both our retraction and the
truncated-SVD retraction used by prior manifold methods admit low-rank
implementations. Ours avoids the truncated SVD and is faster in practice.

\paragraph{Contributions.}
\begin{itemize}
  \item We develop LoRA-TSD, a LoRA-native extension of the tangent spectral
    iteration proposed by Riemannion~\citep{riemannion}. It operates through standard LoRA factors
    and uses a factor-induced retraction without materializing $m\times n$
    matrices.
  \item In the $L$-smooth setting, we prove convergence to stationarity for
    momentum-free LoRA-TSD.
  \item We identify LoRA-Pro~\citep{lora-pro} as the Frobenius counterpart of
    LoRA-TSD and prove its convergence to stationarity.
  \item Across the evaluated benchmarks, LoRA-TSD outperforms generic and
    LoRA-specific baselines and is the most stable method across adapter ranks.
\end{itemize}


\section{Related Work}
\label{sec:rel_work}
\paragraph{LoRA-specific optimizers.}
The induced weight update $\Delta(BA) = \Delta B\,A + B\,\Delta A$ is bilinear in
the factors, so a step that is well-scaled for $A$ and $B$ can be misaligned in
weight space: it depends on the factor norms and conditioning and generally
differs from direct descent on $BA$.
LoRA-specific optimizers address this mismatch. One line of work improves
the parametrization or its
initialization. LoRA+~\citep{hayou2024loraplus} assigns separate learning rates
to the two factors, removing a width-dependent
inefficiency of naive LoRA. DoRA~\citep{liu2024dora} decomposes each pretrained
weight into separately trainable magnitude and directional components, using
LoRA to parameterize the directional update. PiSSA~\citep{meng2024pissa} and
LoRA-GA~\citep{wang2024loraga} instead pick the initial factors from the leading
singular directions of the weight or of its gradient, aligning the first LoRA
step with full fine-tuning. A second line keeps the parametrization fixed and
corrects the update itself. LoRA-Pro~\citep{lora-pro} is the central example.
Let $\mathcal{L}$ denote the training loss. LoRA-Pro chooses the factor updates
\begin{equation}
  (\Delta A^{\star}, \Delta B^{\star})
  = \argmin_{\Delta A, \Delta B}
  \bigl\|\nabla_W\mathcal{L} - \bigl(B\Delta A + \Delta B A\bigr)\bigr\|_F^2 .
  \label{eq:lora-pro-projection}
\end{equation}
LoFT~\citep{tastan2025loft} pursues a related
correction at the optimizer level by projecting the Adam moments into the
low-rank subspace. The same factor ambiguity also creates an optimization
problem: if two factorizations of the same adapter lead to different
weight-space steps, training can become sensitive to the arbitrary scale and
rotation of the LoRA factors, with one factor moving while the other stays nearly
fixed~\citep{yen2024lorarite}. LoRA-Rite formalizes the desired consistency as
transformation invariance, meaning that replacing $(B,A)$ by $(BR^{-1},RA)$ for
any invertible $R \in \mathbb{R}^{r \times r}$ does not change the induced
update to $BA$. LoRA-TSD satisfies the same invariance, but chooses the target
direction through spectral-norm tangent geometry. In the Frobenius-norm case,
our formulation recovers the LoRA-Pro direction, with the step scale set by the
optimizer learning rate.

\paragraph{Orthogonalized and spectral-norm optimizers.}
Muon~\citep{muon} orthogonalizes the momentum update before the step, which
equalizes its singular values and prevents it from collapsing onto a few dominant
directions. \citet{bernstein2024old} identify this update as steepest
descent under the spectral norm, placing Muon within a broader family of
norm-constrained optimizers. Muon assumes access to the full weight matrix, so
transferring its geometry to the low-rank setting is not immediate. LoRA-TSD does
so by solving the spectral-norm oracle inside the tangent space of the fixed-rank
manifold.

\paragraph{Riemannian optimization for LoRA.}
Riemannian SGD projects the Euclidean gradient onto the tangent space and
retracts the resulting step back to the
manifold~\citep{bonnabel2013stochastic,absil2008optimization,vandereycken2013lowrank}.
\citet{zhang2024riemannian} keep this scheme but replace the Euclidean metric
with an $r \times r$ preconditioner that rescales each factor update by the Gram
matrix of the other factor. Both descend along the ordinary tangent gradient,
whereas Riemannion and LoRA-Muon, like our method, orthogonalize this tangent
step to obtain a spectral-norm update.

The closest work to ours is Riemannion~\citep{riemannion}, which applies Muon-like orthogonalized updates
to low-rank adapters using the Riemannian geometry of fixed-rank matrices. We share the
tangent-space motivation but differ in how the spectral-norm tangent
update is approximated and how the tangent direction is mapped back
to the factors. Section~\ref{sec:method} makes this comparison precise.

A closely related optimizer, LoRA-Muon~\citep{cesista2026loramuon}, develops the same
spectral steepest descent view of the LoRA tangent space, together with a gauge-invariance
analysis and a matched split weight-decay rule. To make the tangent problem tractable in
closed form, it decouples the two tangent components $\Delta B\,A$ and $B\,\Delta A$,
splitting the trust region evenly between them and orthogonalizing each half with a single
matrix sign step. The decoupling buys a closed form but discards the interaction between the
two components, and the even split approximates the spectral-norm constraint only while the
two stay aligned during training. The resulting step is a single closed-form expression with
no mechanism to refine the direction toward the
constrained spectral solution. LoRA-TSD retains the interaction through the tangent
projector and reaches this solution by the alternating projection of
Eq.~\eqref{eq:lora-alt-proj-iteration}.

Concurrently, PoLoRA~\citep{ghosh2026polora} derives a product-aware LMO with a
curvature-weighted spectral constraint. Like LoRA-Muon, it decouples the
constraint on $B\Delta A+\Delta B A$ into two factorwise constraints, then adds
curvature preconditioning and magnitude control. LoRA-TSD instead retains the
coupled tangent update.


\section{Method}
\label{sec:method}
\subsection{Notation}
\label{subsec:notation}
We derive the update for a single adapted layer. Let
$W \in \mathbb{R}^{m \times n}$ be its weight matrix, and let
$\mathcal{L}(W)$ denote the training loss with all other parameters fixed. The
same update is applied independently to each adapted layer.

\paragraph{LoRA parametrization and gradients.}
LoRA keeps the pretrained matrix $W_0$ fixed and trains a rank-$r$ residual,
\begin{equation}
  W_k = W_0 + B_k A_k,
\end{equation}
where $B_k \in \mathbb{R}^{m \times r}$, $A_k \in \mathbb{R}^{r \times n}$,
and $r \ll \min(m,n)$. At iteration $k$, write
\begin{equation}
  G_W^k = \nabla_W \mathcal{L}(W_k), \qquad
  G_A^k = \nabla_A \mathcal{L}(W_k), \qquad
  G_B^k = \nabla_B \mathcal{L}(W_k).
\end{equation}
The chain rule gives the relations between the full-space gradient and the
observable factor gradients,
\begin{equation}
  G_A^k = B_k^\top G_W^k, \qquad
  G_B^k = G_W^k A_k^\top .
  \label{eq:lora-factor-gradients}
\end{equation}
The matrix $G_W^k$ is the gradient we would obtain under full fine-tuning, but
LoRA training exposes only $G_A^k$ and $G_B^k$.

\paragraph{Spectral smoothness.}
Muon~\citep{muon} is steepest descent in the spectral-norm
geometry~\citep{bernstein2024old}, so we use the same norm for the local model.
Assume that $\nabla_W \mathcal{L}(W)$ is $1/\lambda$-Lipschitz continuous with
respect to the spectral norm. Then, for any perturbation $\Delta W_k$,
\begin{equation}
  \mathcal{L}(W_k + \Delta W_k)
  \le \mathcal{L}(W_k)
  + \langle \nabla \mathcal{L}(W_k), \Delta W_k \rangle
  + \frac{1}{2\lambda}\|\Delta W_k\|_2^2 .
  \label{eq:spectral-smoothness}
\end{equation}

\paragraph{First-order LoRA steps.}
A transition from $(B_k,A_k)$ to
$(B_k+\Delta B_k,A_k+\Delta A_k)$ induces the weight change
\begin{equation}
  \Delta W_k^{\mathrm{LoRA}}
  = \Delta B_k A_k + B_k \Delta A_k + \Delta B_k \Delta A_k .
  \label{eq:lora-induced-step}
\end{equation}
The first two terms are linear in the factor increments. The product
$\Delta B_k \Delta A_k$ is second order. Following standard LoRA optimizer
derivations~\citep{lora-pro,tastan2025loft,riemannion}, we choose the direction
from the first-order transition
$\Delta B_k A_k + B_k \Delta A_k$.

\paragraph{Fixed-rank manifold and tangent space.}
The reachable LoRA residuals $BA$ have rank at most $r$. We work on the
rank-exact stratum because every matrix of rank at most $r$ can be approximated
arbitrarily well by rank-$r$ matrices. The residual then lies on the smooth
fixed-rank manifold
$\mathcal{M}_r \subset \mathbb{R}^{m \times n}$~\citep{absil2008optimization,vandereycken2013lowrank}.
The affine shift by $W_0$ does not change the tangent geometry. The first-order
LoRA updates at $W_k=W_0+B_kA_k$ form the tangent space
\begin{equation}
  \mathcal{T}_{W_k}\mathcal{M}_r
  = \bigl\{\Delta B_k A_k + B_k \Delta A_k
  \mid \Delta B_k \in \mathbb{R}^{m \times r},\
  \Delta A_k \in \mathbb{R}^{r \times n}\bigr\}.
  \label{eq:lora-tangent-space}
\end{equation}
After the second-order term is dropped, this is the exact search space of LoRA.
It depends only on the column space of $B_k$ and the row space of $A_k$, not on a
particular scaling of the factors.
When the iteration index is clear, write
$T=\mathcal{T}_{W_k}\mathcal{M}_r$ and denote by $\mathcal{P}_T$ the orthogonal
projector onto this tangent space.

\paragraph{Linear minimization oracle.}
\citet{bernstein2024old} derive connections between many optimizers and linear
minimization oracles over norm balls. For a norm ball $\mathcal{B}$ and a
gradient $G$, the oracle returns
\[
  \operatorname{LMO}_{\mathcal{B}}(G)
  \in \argmin_{X \in \mathcal{B}} \langle G, X\rangle .
\]
For the spectral-norm ball, the oracle is the negative polar factor of $G$,
which is the matrix-sign direction used by Muon.

\subsection{Deriving the optimization problem}
Apply the spectral smoothness bound in Eq.~\eqref{eq:spectral-smoothness} to the
first-order LoRA transition:
\begin{equation}
\begin{aligned}
  \mathcal{L}\bigl(W_k + \Delta B_k A_k + B_k \Delta A_k\bigr)
  \le \mathcal{L}(W_k)
  + \langle G_W^k, \Delta B_k A_k + B_k \Delta A_k \rangle
  + \frac{1}{2\lambda}
  \|\Delta B_k A_k + B_k \Delta A_k\|_2^2 .
  \label{eq:lora-second-order-full}
\end{aligned}
\end{equation}
This yields the local surrogate
\begin{equation}
  \min_{\Delta A_k,\Delta B_k}
  \left(
  \langle G_W^k, \Delta B_k A_k + B_k \Delta A_k \rangle +
  \frac{1}{2\lambda}\|\Delta B_k A_k + B_k \Delta A_k\|_2^2
  \right).
  \label{eq:lora-surrogate-objective}
\end{equation}
The factor increments are not unique. Different pairs
$(\Delta B_k,\Delta A_k)$ can induce the same weight-space change, and the
network only sees this change. We therefore optimize directly over
$X=\Delta B_k A_k+B_k\Delta A_k$. Using
Eq.~\eqref{eq:lora-tangent-space}, Eq.~\eqref{eq:lora-surrogate-objective}
becomes
\begin{equation}
  \min_{X \in \mathcal{T}_{W_k}\mathcal{M}_r}
  \left(\langle G_W^k, X \rangle + \frac{1}{2\lambda} \|X\|_2^2\right).
  \label{eq:lora-tangent-surrogate}
\end{equation}
The tangent-space surrogate of Eq.~\eqref{eq:lora-tangent-surrogate} is the
common formulation behind several LoRA optimizers. Its Frobenius-norm version
recovers the LoRA-Pro direction, as proved in Section~\ref{sec:theory}.
LoRA-TSD instead retains the spectral norm, like
Riemannion~\citep{riemannion}, and uses the LMO formulation from
Section~\ref{subsec:notation} to obtain a Muon-style tangent update.

\subsection{Tangent-space spectral descent}
\label{subsec:tangent-spectral-descent}
We now solve Eq.~\eqref{eq:lora-tangent-surrogate} in the spectral norm, matching
the geometry of Muon. The quadratic term sets the step scale, while the linear
term sets the descent direction. Since the spectral norm is homogeneous, the
direction of the penalized problem can be obtained from the corresponding
trust-region form: for a fixed radius, minimize the same linear term over the
spectral-norm ball. This direction-scale separation is standard in normalized
deep-learning optimizers. Sign and matrix-sign methods first choose a
norm-steepest direction and then set the step length with a learning rate.
\citet{bernstein2024old} derive this LMO view for a broad class of optimizers,
including the spectral-norm update underlying Muon.

For LoRA, the trust region must also respect the tangent constraint
$X \in \mathcal{T}_{W_k}\mathcal{M}_r$, giving
\begin{equation}
  \min_{\substack{X \in \mathcal{T}_{W_k}\mathcal{M}_r \\ \|X\|_2 \le \rho}}
  \langle G_W^k, X \rangle .
  \label{eq:lora-constrained-tangent}
\end{equation}
This move isolates the descent direction from the step scale. The scale is then
set by the learning rate, while the convergence analysis only requires a
positive descent margin for the direction returned by the constrained oracle.
We approximate the constrained oracle with alternating
projections~\citep{bauschke1996projection}: a tangent projection followed by an
orthogonalization toward the spectral-norm ball,
\begin{equation}
  X_{t+1} = \mathcal{P}_T\bigl(\mathrm{msign}(X_t)\bigr),
  \qquad X_0 = \mathcal{P}_T(G_W^k).
  \label{eq:lora-alt-proj-iteration}
\end{equation}

where $\mathrm{msign}(Z) = UV^\top$ for $Z = U\Sigma V^\top$, the polar factor.
The limit $X_\infty$ is the flattest matrix in $T$ consistent with
$\mathcal{P}_T(G_W^k)$. This procedure is close to the tangent-space
orthogonalization of Riemannion~\citep{riemannion}, but applies the projection
iteratively as a refinement toward Eq.~\eqref{eq:lora-constrained-tangent}.

Appendix~\ref{app:experiments} analyzes the update spectrum and shows that its
first $2r$ singular values are almost equalized. This is the analogue of Muon
for the LoRA tangent space. Theorem~\ref{thm:lora-tsd-convergence} establishes
convergence for this update under the stated assumptions.

\paragraph{Closed form for $\mathcal{P}_T(G_W^k)$.}
We omit the iteration index $k$ for brevity. For $P_B = B(B^\top B)^{-1}B^\top$ 
and $P_A = A^\top(AA^\top)^{-1}A$ the projector onto the tangent space is
\begin{equation}
  \mathcal{P}_T(Z) = P_B Z + Z P_A - P_B Z P_A.
  \label{eq:lora-tangent-projector}
\end{equation}
Substituting $Z = G_W$, the unobserved full-space gradient cancels, and
$\mathcal{P}_T(G_W)$ is expressed purely through the factor gradients $G_A = B^\top G_W$
and $G_B = G_W A^\top$:
\begin{equation}
  \mathcal{P}_T(G_W)
  = B(B^\top B)^{-1} G_A
  + G_B(AA^\top)^{-1} A
  - B(B^\top B)^{-1}(G_A A^\top)(AA^\top)^{-1} A .
  \label{eq:lora-tangent-g-closed-form}
\end{equation}
Thus $\mathcal{P}_T(G_W)$ can be computed without forming $G_W$, which LoRA
does not expose (Section~\ref{subsec:notation}).

\paragraph{Reconstructing the LoRA factor updates.}
The alternating projections in Eq.~\eqref{eq:lora-alt-proj-iteration}
yield a full-space update $\Delta W = \Delta B A + B \Delta A \in T$. To apply
the update in the LoRA factors, we recover $\Delta A$ and $\Delta B$. The split
is not unique, so we use the following canonical choice.
Let $P_A = A^\top(AA^\top)^{-1}A$ be the orthogonal projector onto
$\mathrm{row}(A)$. As $\mathrm{row}(\Delta B A) \subseteq \mathrm{row}(A)$ we set
\begin{equation}
  \Delta B A = \Delta W P_A = \Delta W A^\top (AA^\top)^{-1} A ,
  \label{eq:lora-delta-ba}
\end{equation}
so that
\begin{equation}
  \Delta B = \Delta W A^\top (AA^\top)^{-1} .
  \label{eq:lora-delta-b}
\end{equation}
The residual is assigned to $B \Delta A$,
\begin{equation}
  B \Delta A = \Delta W - \Delta B A ,
  \label{eq:lora-delta-ba-residual}
\end{equation}
and hence
\begin{equation}
  \Delta A = (B^\top B)^{-1} B^\top (\Delta W - \Delta B A) .
  \label{eq:lora-delta-a}
\end{equation}


\begin{wrapfigure}[13]{r}{0.55\textwidth}
  \vspace{-10pt}
  \begin{algorithm}[width=\linewidth]{LoRA-TSD (conceptual form)}
    \label{alg:lora-tsd-simple}
    \begin{algorithmic}[1]
      \For{$k = 0, 1, \dots$}
        \State $G_A^k \gets \nabla_A \mathcal{L}(W_k)$, \;
          $G_B^k \gets \nabla_B \mathcal{L}(W_k)$
        \State $X_0 \gets \mathcal{P}_T(G_W^k)$
          \Comment{Eq.~\eqref{eq:lora-tangent-g-closed-form}}
        \For{$t = 0, \dots, \tau - 1$}
          \State $X_{t+1} \gets \mathcal{P}_T\bigl(\mathrm{msign}(X_t)\bigr)$
            \Comment{Eq.~\eqref{eq:lora-alt-proj-iteration}}
        \EndFor
        \State $\Delta W_k \gets -\eta X_\tau$
        \State $\Delta B_k \gets \Delta W_k A_k^\top (A_k A_k^\top)^{-1}$
        \State $\Delta A_k \gets (B_k^\top B_k)^{-1} B_k^\top
          (\Delta W_k - \Delta B_k A_k)$
        \State $A_{k+1} \gets A_k + \Delta A_k$, \;
          $B_{k+1} \gets B_k + \Delta B_k$
      \EndFor
    \end{algorithmic}
  \end{algorithm}
  \vspace{-6pt}
\end{wrapfigure}

\paragraph{Algorithm 1.}
Algorithm~\ref{alg:lora-tsd-simple} gives the conceptual form of LoRA-TSD. It
contains the full mathematical update: compute the tangent-projected gradient,
refine it with the tangent spectral step, and map the resulting tangent update
back to LoRA factors through Eqs.~\eqref{eq:lora-delta-b}
and~\eqref{eq:lora-delta-a}. It omits momentum and engineering details, but
this is the form analyzed in Section~\ref{sec:theory}.

\paragraph{Discussion.}
For $\tau=1$, Algorithm~\ref{alg:lora-tsd-simple} recovers the tangent-direction
part of Riemannion~\citep{riemannion}. Table~\ref{tab:tau-ablation} shows that
using $\tau$ from $3$ to $5$ improves average downstream accuracy over
$\tau=1$. LoRA-TSD maps the resulting direction through a LoRA-native factor
retraction rather than the truncated-SVD retraction, which keeps the update in
the factor parametrization and avoids an expensive SVD.
Appendix~\ref{app:retraction} compares the resulting iterates in detail.

\paragraph{Transformation invariance.}
Since the update is defined in weight space, it should not depend on which LoRA
factorization represents the same adapter.
Following LoRA-Rite~\citep{yen2024lorarite}, we call a LoRA optimizer
transformation invariant as follows. Let $(B_k,A_k)$ be a pair of LoRA factors
and let $B'_k = B_kR^{-1}$, $A'_k = RA_k$ for some invertible
$R \in \mathbb{R}^{r \times r}$. If one optimizer step gives updates
$(\Delta B_k,\Delta A_k)$ and $(\Delta B'_k,\Delta A'_k)$, then the optimizer is
transformation invariant if
\begin{equation}
  (B_k+\Delta B_k)(A_k+\Delta A_k)
  = (B'_k+\Delta B'_k)(A'_k+\Delta A'_k)
  := B_kA_k+\Delta W_k .
  \nonumber
\end{equation}
\vspace{-12pt}
\begin{proposition}
  \label{prop:transformation-invariance}
  LoRA-TSD (Algorithm~\ref{alg:lora-tsd-simple}) is transformation invariant.
\end{proposition}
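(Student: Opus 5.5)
The plan is to separate the step into its weight-space part and its factor-reconstruction part. I would show that the first part depends on $(B_k,A_k)$ only through $W_k$ and the tangent space, and that the second part transforms equivariantly under $(B,A)\mapsto(BR^{-1},RA)$.

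\emph{Step 1: weight-space invariance.} Since $B'_kA'_k=B_kR^{-1}RA_k=B_kA_k$, the weight $W_k$ is unchanged, and so is $G_W^k=\nabla_W\mathcal{L}(W_k)$. The column space of $B'_k=B_kR^{-1}$ equals that of $B_k$, and the row space of $A'_k=RA_k$ equals that of $A_k$. Hence $P_{B'}=P_B$ and $P_{A'}=P_A$. One can also check directly: $B'(B'^\top B')^{-1}B'^\top = BR^{-1}\,R(B^\top B)^{-1}R^\top\,R^{-\top}B^\top=P_B$, and similarly for $P_A$. By Eq.~\eqref{eq:lora-tangent-projector}, $\mathcal{P}_T$ is therefore the same operator for both factorizations. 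Consequently $X_0=\mathcal{P}_T(G_W^k)$, every iterate of Eq.~\eqref{eq:lora-alt-proj-iteration}, and $\Delta W_k=-\eta X_\tau$ coincide. The closed form Eq.~\eqref{eq:lora-tangent-g-closed-form} is merely a rewriting of $\mathcal{P}_T(G_W)$, so it needs no separate check. Still, it is reassuring that $G_{A'}=R^{-\top}G_A$ and $G_{B'}=G_BR^{\top}$ make it invariant.

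\emph{Step 2: factor reconstruction is equivariant.} With the same $\Delta W=\Delta W_k$, I would compute
$\Delta B' = \Delta W A^\top R^\top (RAA^\top R^\top)^{-1} = \Delta W A^\top (AA^\top)^{-1}R^{-1}=\Delta B\,R^{-1}$.
This gives $\Delta B' A' = \Delta B A$, so the residual $\Delta W-\Delta B'A'$ is unchanged. Then
$\Delta A' = (R^{-\top}B^\top B R^{-1})^{-1}R^{-\top}B^\top(\Delta W-\Delta BA) = R\,(B^\top B)^{-1}B^\top(\Delta W-\Delta BA)=R\,\Delta A$.

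\emph{Step 3: conclusion.} It follows that $(B'+\Delta B')(A'+\Delta A')=(B+\Delta B)R^{-1}R(A+\Delta A)=(B+\Delta B)(A+\Delta A)$, which is the required equality. For the identification with $B_kA_k+\Delta W_k$, I would expand the product and use $\Delta W\in T$. We have $\Delta BA+B\Delta A=\Delta WP_A+P_B(\Delta W-\Delta WP_A)$. Since $\Delta W=\mathcal{P}_T(\Delta W)$, one gets $P_B\Delta W(I-P_A)=\Delta W(I-P_A)$, so the linear part equals $\Delta W$ exactly. The remaining term $\Delta B\Delta A$ is itself invariant by Step 2. So both factorizations produce the same new weight, and I read $\Delta W_k$ in the definition as this common induced change.

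The only delicate point is the bookkeeping of transposes and inverses in Step 2, in particular $(RAA^\top R^\top)^{-1}=R^{-\top}(AA^\top)^{-1}R^{-1}$. Invertibility of $B^\top B$ and $AA^\top$ holds on the rank-$r$ stratum assumed in Section~\ref{subsec:notation}.
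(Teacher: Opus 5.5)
Your proof is correct and matches the paper's argument step for step. Both show that $G_W$, $P_A$ and $P_B$ are unchanged, so $\mathcal{P}_T$ and every iterate $X_t$ coincide, and both show that the canonical split transforms as $\Delta B'=\Delta B\,R^{-1}$, $\Delta A'=R\,\Delta A$. Your extra check that the reconstructed linear part equals $\Delta W$ exactly, so the induced change is $\Delta W+\Delta B\,\Delta A$, resolves the double use of $\Delta W_k$ in the invariance definition, which the paper leaves implicit.
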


This property is induced by the manifold view. The
tangent projector, the projected gradient $\mathcal{P}_T(G_W^k)$, and the
spectral tangent step $X_\tau$ depend only on the point
$W_k \in W_0+\mathcal{M}_r$, not on the particular LoRA factors used to
represent it. 

We next show how to implement the same update in practice without expensive
operations on $m \times n$ matrices.

\paragraph{Factorization trick.}
The closed form in Eq.~\eqref{eq:lora-tangent-g-closed-form} still appears to
define an $m \times n$ matrix. In practice we never materialize it. Regrouping
the tangent projector of
Eq.~\eqref{eq:lora-tangent-projector} as
\begin{equation}
  \mathcal{P}_T(Z) = P_B Z(I-P_A) + Z P_A
  \label{eq:lora-projector-2term}
\end{equation}
exhibits every tangent matrix as a sum of two rank-$r$ terms, so it has rank at
most $2r$. We therefore store each iterate of
Eq.~\eqref{eq:lora-alt-proj-iteration} in a factored form $X_t = L_t R_t$ with
$L_t \in \mathbb{R}^{m \times 2r}$ and $R_t \in \mathbb{R}^{2r \times n}$.

For the initialization $X_0 = \mathcal{P}_T(G_W)$, applying
Eq.~\eqref{eq:lora-projector-2term} with $Z = G_W$ gives
\begin{equation}
  X_0 = L_0 R_0,
  \label{eq:lora-factorized-init}
\end{equation}
where
\begin{align}
  L_0
  =
  \bigl[\,
    B,\;
    G_B(AA^\top)^{-1}
  \,\bigr],
  \qquad
  R_0
  =
  \begin{bmatrix}
    (B^\top B)^{-1}G_A(I-P_A) \\
    A
  \end{bmatrix} .
  \label{eq:lora-factorized-init-factors}
\end{align}
Here $G_A(I-P_A) = G_A - (G_A A^\top)(AA^\top)^{-1}A$ stays $r \times n$, so no
$m \times n$ product is ever formed.

The matrix sign step is also applied through small matrices. Given
$X_t = L_tR_t$, compute thin QR decompositions
$L_t = Q_L \widehat{R}_L$ and $R_t^\top = Q_R \widehat{R}_R$. Then
\begin{equation}
  \mathrm{msign}(X_t)
  = Q_L\,\mathrm{msign}(\widehat{R}_L\widehat{R}_R^\top)\,Q_R^\top ,
  \label{eq:lora-factorized-msign}
\end{equation}
so the matrix sign reduces to a single $2r \times 2r$ problem. In practice we
compute $\mathrm{msign}(\widehat{R}_L\widehat{R}_R^\top)$ with a Newton--Schulz
iteration rather than an explicit SVD, as in Muon~\citep{muon}. Writing
$\mathrm{msign}(\widehat{R}_L\widehat{R}_R^\top)=UV^\top$, we set
$\widetilde{L}_t = Q_LU$ and $\widetilde{R}_t = V^\top Q_R^\top$.

Finally, we project the msign output back onto the tangent space, again without
forming the $m \times n$ product $\widetilde{L}_t\widetilde{R}_t$. Using the
two-term projector~\eqref{eq:lora-projector-2term} and the small matrices
\begin{equation}
  C_t = B^\top \widetilde{L}_t \in \mathbb{R}^{r \times 2r},
  \qquad
  D_t = \widetilde{R}_t A^\top \in \mathbb{R}^{2r \times r} ,
  \nonumber
\end{equation}
the two projector terms become, with $Z = \widetilde{L}_t\widetilde{R}_t$,
\begin{align}
  P_B Z(I-P_A) &= B(B^\top B)^{-1}(C_t\widetilde{R}_t)(I-P_A), \nonumber \\
  Z P_A &= (\widetilde{L}_t D_t)(AA^\top)^{-1}A , \nonumber
\end{align}
neither of which forms an $m \times n$ matrix. Collecting the shared factors
gives the projection directly in factored form,
$\mathcal{P}_T(\widetilde{L}_t\widetilde{R}_t) = L_{t+1}R_{t+1}$ with
\begin{align}
  L_{t+1}
  =
  \bigl[\,
    B,\;
    \widetilde{L}_tD_t(AA^\top)^{-1}
  \,\bigr],
  \qquad
  R_{t+1}
  =
  \begin{bmatrix}
    (B^\top B)^{-1}C_t\widetilde{R}_t(I-P_A) \\
    A
  \end{bmatrix} .
  \label{eq:lora-factorized-projection}
\end{align}

\begin{wrapfigure}[23]{r}{0.59\textwidth}
  \vspace{-15pt}
  \begin{algorithm}[width=\linewidth]{LoRA-TSD with factorization trick}
    \label{alg:lora-tsd}
    \begin{algorithmic}[1]
    \State $M_A^{0}, M_B^{0} \gets 0, 0$
    \For{$k = 1, 2, \dots$}
      \State $G_A^k \gets \nabla_A \mathcal{L}(W_k)$, \;
        $G_B^k \gets \nabla_B \mathcal{L}(W_k)$
      \State $M_A^k \gets \mu M_A^{k-1} + (1-\mu) G_A^k$
      \State  $M_B^k \gets \mu M_B^{k-1} + (1-\mu) G_B^k$
      \Statex \textbf{Factorized projection onto} \(\mathcal{T}_{W_k}\mathcal{M}_r\)
      \State $L_0 \gets \bigl[B_k,\; M_B^k(A_kA_k^\top)^{-1}\bigr]$
      \State $R_0 \gets \bigl[(B_k^\top B_k)^{-1}M_A^k(I-P_{A_k});\; A_k\bigr]$
      \For{$t = 0, \dots, \tau - 1$}
        \Statex \textbf{Factorized msign}
        \State $L_t = Q_L\widehat{R}_L$, \;
          $R_t^\top = Q_R\widehat{R}_R$
        \State $UV^\top \gets \mathrm{msign}\bigl(\widehat{R}_L\widehat{R}_R^\top\bigr)$
          \Comment{Newton--Schulz}
        \State $\widetilde{L}_t \gets Q_LU$, \;
          $\widetilde{R}_t \gets V^\top Q_R^\top$
        \Statex \textbf{Factorized projection}  
        \State $C_t \gets B_k^\top\widetilde{L}_t$, \;
          $D_t \gets \widetilde{R}_tA_k^\top$
        \State $L_{t+1} \gets \bigl[B_k,\; \widetilde{L}_tD_t(A_kA_k^\top)^{-1}\bigr]$
        \State $R_{t+1} \gets \bigl[(B_k^\top B_k)^{-1}C_t\widetilde{R}_t(I-P_{A_k});\; A_k\bigr]$
      \EndFor
      \Statex \textbf{Retraction}  
      \State $\Delta W_k \gets -\eta L_\tau R_\tau$
      \State $\Delta B_k \gets \Delta W_k A_k^\top (A_k A_k^\top)^{-1}$
      \State $\Delta A_k \gets (B_k^\top B_k)^{-1} B_k^\top
        (\Delta W_k - \Delta B_k A_k)$
      \State $A_{k+1} \gets A_k + \Delta A_k$, \;
        $B_{k+1} \gets B_k + \Delta B_k$
      \EndFor
    \end{algorithmic}
  \end{algorithm}
  \vspace{0pt}
\end{wrapfigure}

This mirrors the block structure of the initialization $L_0, R_0$ in
Eqs.~\eqref{eq:lora-factorized-init}--\eqref{eq:lora-factorized-init-factors}.
Hence the alternating projection loop reuses the Gram inverses $(B^\top B)^{-1}$
and $(AA^\top)^{-1}$ already formed for $X_0$, stores only the thin factors $L_t$
and $R_t$ of size $m \times 2r$ and $2r \times n$ together with the small
$2r \times 2r$ matrix used for the msign step, and never allocates an
$m \times n$ matrix.

\paragraph{Algorithm 2.}
Algorithm~\ref{alg:lora-tsd} is the practical implementation of LoRA-TSD. It
adds momentum to Algorithm~\ref{alg:lora-tsd-simple} and replaces each explicit
tangent matrix by its factored representation $X_t=L_tR_t$ from
Eqs.~\eqref{eq:lora-factorized-init}--\eqref{eq:lora-factorized-projection}.
This factorization is only an implementation device: the underlying tangent
update is unchanged, but all storage and multiplications stay in the thin
factors, avoiding expensive operations on $m \times n$ matrices.


\section{Experiments}
\label{sec:experiments}
\paragraph{Setup.}
We evaluate two models, Llama-3.2-1B-Instruct and Llama-3.1-8B, on
BoolQ~\citep{clark2019boolq}, PIQA~\citep{bisk2020piqa},
SIQA~\citep{sap2019socialiqa}, OBQA~\citep{mihaylov2018openbookqa},
QNLI~\citep{wang2019glue}, and MultiNLI~\citep{williams2018multinli}.
We also evaluate Qwen3-32B~\citep{yang2025qwen3technicalreport} on SIQA and
OBQA.
These benchmarks test natural-language understanding through classification
and multiple-choice questions, covering reading comprehension, physical and
social commonsense, elementary science, and textual entailment. Such
classification-style benchmarks are standard in the LoRA
literature~\citep{yen2024lorarite,tastan2025loft,liu2024dora}.
To isolate optimizer effects, all
methods use Riemannion's locally optimal initialization (LOI), which improves
average accuracy over standard initialization~\citep{riemannion}.
Appendix~\ref{app:experiments} specifies the
adapter placement, preprocessing, and hyperparameters.

\vspace{4pt}
\begin{table*}[h]
\centering
\small
\setlength{\tabcolsep}{4pt}
\resizebox{\textwidth}{!}{%
\begin{tabular}{l rrrrrrrrr | r}
\toprule
Dataset
  & SGD & AdamW & Muon & Riem-SGD & LoRA-Rite
  & Riemannion & LoRA-Pro & LoRA-Muon & \textbf{LoRA-TSD}
  & Muon FT \\
\midrule
BoolQ    & $80.93_{\pm0.21}$ & $80.90_{\pm0.80}$ & $83.07_{\pm0.40}$ & $79.70_{\pm0.35}$ & $83.03_{\pm0.06}$ & $81.73_{\pm0.21}$ & $80.07_{\pm0.49}$ & $83.47_{\pm0.59}$ & $\mathbf{84.63_{\pm0.42}}$ & $83.10_{\pm0.30}$ \\
PIQA     & $76.23_{\pm0.29}$ & $75.77_{\pm0.35}$ & $78.10_{\pm0.50}$ & $64.43_{\pm12.19}$ & $78.80_{\pm0.35}$ & $77.40_{\pm0.52}$ & $74.23_{\pm0.23}$ & $79.17_{\pm0.51}$ & $\mathbf{80.53_{\pm0.25}}$ & $79.20_{\pm0.66}$ \\
SIQA     & $71.97_{\pm0.35}$ & $71.03_{\pm0.32}$ & $73.90_{\pm0.50}$ & $43.80_{\pm2.26}$ & $74.37_{\pm0.32}$ & $72.10_{\pm0.30}$ & $68.53_{\pm0.50}$ & $73.93_{\pm0.12}$ & $\mathbf{75.40_{\pm0.35}}$ & $73.67_{\pm0.46}$ \\
OBQA     & $70.87_{\pm0.50}$ & $72.20_{\pm0.20}$ & $71.87_{\pm0.76}$ & $46.80_{\pm1.06}$ & $\mathbf{74.00_{\pm1.00}}$ & $72.07_{\pm0.81}$ & $71.33_{\pm0.42}$ & $73.73_{\pm1.45}$ & $\mathbf{74.00_{\pm1.20}}$ & $71.93_{\pm0.95}$ \\
QNLI     & $91.03_{\pm0.35}$ & $90.80_{\pm0.44}$ & $92.27_{\pm0.06}$ & $82.30_{\pm4.12}$ & $92.80_{\pm0.26}$ & $91.57_{\pm0.23}$ & $89.30_{\pm0.20}$ & $92.93_{\pm0.21}$ & $\mathbf{93.07_{\pm0.42}}$ & $93.23_{\pm0.25}$ \\
MultiNLI & $53.07_{\pm27.31}$ & $83.33_{\pm0.50}$ & $85.90_{\pm0.53}$ & $74.70_{\pm5.19}$ & $86.17_{\pm0.40}$ & $84.63_{\pm0.31}$ & $83.02_{\pm1.14}$ & $86.37_{\pm0.06}$ & $\mathbf{87.33_{\pm0.06}}$ & $86.63_{\pm0.06}$ \\
\midrule
AVG      & $74.02_{\pm4.42}$ & $79.01_{\pm0.14}$ & $80.85_{\pm0.09}$ & $65.29_{\pm1.39}$ & $81.53_{\pm0.25}$ & $79.92_{\pm0.27}$ & $77.75_{\pm0.19}$ & $81.60_{\pm0.32}$ & $\mathbf{82.49_{\pm0.16}}$ & $81.29_{\pm0.24}$ \\
\bottomrule
\end{tabular}%
}
\caption{Accuracy on commonsense and NLI benchmarks with Llama-3.2-1B-Instruct,
  reported as mean $\pm$ standard deviation over three runs. Bold indicates the
  best LoRA result per dataset.}
\label{tab:main_results}
\end{table*}

\vspace{-8pt}

\paragraph{Baselines.}
We compare against standard factor-space optimizers, SGD, AdamW, and Muon
applied independently to the two LoRA factors. We include three manifold
methods, LoRA-Muon, Riemannian SGD, and Riemannion. LoRA-Rite and LoRA-Pro
cover LoRA-specific optimizer designs. Together, these baselines cover much
of the LoRA optimization design space, including direct factor-space updates,
manifold optimization, gradient preconditioning, and weight-space correction.
Finally, Muon FT fine-tunes the same adapted projections with Muon and
serves as a full-tuning reference.

\vspace{4pt}
\begin{table*}[h]
\centering
\small
\setlength{\tabcolsep}{4pt}
\resizebox{\textwidth}{!}{%
\begin{tabular}{l rrrrrrrrr | r}
\toprule
Dataset
  & SGD & AdamW & Muon & Riem-SGD & LoRA-Rite
  & Riemannion & LoRA-Pro & LoRA-Muon & \textbf{LoRA-TSD}
  & Muon FT \\
\midrule
BoolQ    & 88.6 & 89.6 & 89.1 & 88.8 & 88.9 & 88.5 & 88.3 & 89.8 & \textbf{90.0} & 89.3 \\
PIQA     & 89.1 & 89.2 & 87.9 & 88.3 & 88.4 & 86.9 & 86.3 & 89.2 & \textbf{90.1} & 89.1 \\
SIQA     & 82.2 & 81.8 & 81.2 & 34.1 & 81.4 & 79.4 & 77.9 & 81.8 & \textbf{82.7} & 80.8 \\
OBQA     & 86.0 & 86.8 & 86.0 & 51.4 & 85.6 & 84.0 & 86.8 & 86.6 & \textbf{88.6} & 86.6 \\
QNLI     & 94.8 & 94.3 & 94.8 & 93.7 & 95.0 & 93.5 & 90.2 & 95.1 & \textbf{96.1} & 95.1 \\
MultiNLI & 89.9 & 89.5 & 89.7 & 89.9 & 90.3 & 88.1 & 88.2 & 90.8 & \textbf{90.9} & 90.5 \\
\midrule
AVG      & 88.5 & 88.5 & 88.1 & 74.4 & 88.3 & 86.8 & 86.3 & 88.9 & \textbf{89.7} & 88.5 \\
\bottomrule
\end{tabular}
}
\caption{Accuracy on commonsense and NLI benchmarks with Llama-3.1-8B.
  Bold indicates the best LoRA result per dataset.}
\label{tab:main_results_2}
\end{table*}

\vspace{-8pt}

\paragraph{Results.}
Tables~\ref{tab:main_results}, \ref{tab:main_results_2},
and~\ref{tab:qwen32b-transfer} report accuracy on the three models. Across the
two Llama models, LoRA-TSD is best or tied for best among LoRA optimizers in
all 12 model-dataset settings and achieves the highest average at both model
scales. Notably, it also outperforms Muon Full-FT in 11 of the 12 settings,
despite updating only low-rank adapters. Full parameter access therefore does not consistently
translate into better downstream accuracy than a spectral update constrained
to the LoRA tangent space. LoRA-Muon is the strongest competing LoRA method on
average at both scales, although its average accuracy varies about twice as
much across seeds as LoRA-TSD at the 1B scale. Both SGD baselines are less
stable. Standard SGD has high variance on MultiNLI at the 1B scale. Riemannian
SGD fails to converge reliably on SIQA and OBQA at both scales and is unstable
on several additional 1B tasks. We tuned both with the same protocol as every
other method, described in Appendix~\ref{app:experiments}, and report the
measured results without additional tuning. Table~\ref{tab:qwen32b-transfer}
reports Qwen3-32B results on SIQA and OBQA, the two lowest-accuracy
Llama-3.1-8B benchmarks in Table~\ref{tab:main_results_2}. Qwen3-32B uses the
transferred 8B hyperparameters with minimal tuning, gradient clipping reduced
from $1$ to $0.25$, a frozen FP8-E4M3FN backbone, and FP32 LoRA parameters and
optimizer states.
LoRA-TSD and LoRA-Rite are the two strongest methods on both tasks.

\begingroup
\setlength{\intextsep}{4pt plus 1pt minus 1pt}

\begin{braintablebox}
\centering
\small
\setlength{\tabcolsep}{10pt}
\begin{tabular}{l rrrrr}
\toprule
Dataset & AdamW & Muon & LoRA-Muon & LoRA-Rite & \textbf{LoRA-TSD} \\
\midrule
SIQA & 84.3 & 84.2 & 84.4 & 84.8 & \textbf{85.0} \\
OBQA & 94.6 & 94.4 & 95.0 & 95.6 & \textbf{96.0} \\
\bottomrule
\end{tabular}
\captionof{table}{Accuracy on SIQA and OBQA with Qwen3-32B. Bold indicates the best
  result per dataset.}
\label{tab:qwen32b-transfer}
\end{braintablebox}

\begin{wrapfigure}{r}{0.43\textwidth}
  \vspace{-2pt}
  \centering
\begin{tikzpicture}
\begin{axis}[
    width=\linewidth,
    height=0.54\linewidth,
    xlabel={LoRA rank $r$},
    ylabel={SIQA accuracy},
    xmode=log,
    log basis x=2,
    xtick={1,2,4,8,16,32,64},
    xticklabels={1,2,4,8,16,32,64},
    ymin=77, ymax=84,
    ytick={77,78,79,80,81,82,83,84},
    legend pos=south west,
    legend style={font=\scriptsize, cells={anchor=west}, inner sep=1.5pt,
      row sep=-1pt, nodes={inner sep=1.5pt}},
    legend image post style={scale=0.7},
    grid=major,
    grid style={gray!25},
    tick label style={font=\footnotesize},
    label style={font=\small},
    mark size=2pt,
]
\addplot[HWPrimary, thick, mark=*] coordinates {
    (1,82.14) (2,82.60) (4,82.50) (8,83.47) (16,83.06) (32,82.45) (64,82.29)
};
\addlegendentry{LoRA-TSD}
\addplot[HWSecondary!80!black, thick, mark=square*] coordinates {
    (1,81.78) (2,82.65) (4,82.55) (8,81.73) (16,81.40) (32,81.06) (64,77.84)
};
\addlegendentry{LoRA-Rite}
\addplot[HWAccent!70!black, thick, mark=triangle*] coordinates {
    (1,80.55) (2,80.71) (4,80.86) (8,81.53) (16,81.20) (32,81.01) (64,81.06)
};
\addlegendentry{Muon}
\addplot[gray, thick, dashed, mark=diamond*] coordinates {
    (1,80.96) (2,81.78) (4,80.86) (8,81.73) (16,81.80) (32,81.83) (64,81.83)
};
\addlegendentry{AdamW}
\end{axis}
\end{tikzpicture}
  \captionsetup{font=normalsize}
  \caption{Rank ablation on SIQA (Llama-3.1-8B). LoRA-TSD maintains high
    accuracy across ranks.}
  \label{fig:rank_ablation}
\end{wrapfigure}

\paragraph{Number of inner iterations.}
Table~\ref{tab:tau-ablation} shows a trade-off between quality and training
cost. With $\tau=1$, LoRA-TSD is $0.92$ average accuracy points below
$\tau=3$, but runs faster than Riemannion because its factor-induced retraction
is cheaper than truncated SVD (Table~\ref{tab:retraction-time}). Increasing
$\tau$ to $3$ gives the best average accuracy at the cost of two additional
inner projections per training step. Increasing $\tau$ further to $4$ or $5$
provides no noticeable performance gain.

\paragraph{Rank ablation.}
Figure~\ref{fig:rank_ablation} shows that LoRA-TSD maintains the most stable
high-accuracy performance across ranks on SIQA. Appendix~\ref{app:experiments}
provides a deeper comparison on BoolQ and SIQA.

\endgroup

\begin{braintablebox}
\centering
\small
\setlength{\tabcolsep}{7pt}
\begin{tabular}{r rrr r}
\toprule
$\tau$ & BoolQ & PIQA & SIQA & AVG \\
\midrule
1 & 84.16 & 79.05 & 75.59 & 79.60 \\
2 & 84.40 & 79.65 & 74.51 & 79.52 \\
3 & 84.89 & \textbf{80.20} & \textbf{76.46} & \textbf{80.52} \\
4 & 84.68 & 80.14 & 75.90 & 80.24 \\
5 & \textbf{85.08} & 79.87 & \textbf{76.46} & 80.47 \\
\bottomrule
\end{tabular}
\captionof{table}{Accuracy with Llama-3.2-1B-Instruct for different numbers $\tau$ of
  alternating-projection steps. Bold marks the best result in each column.}
\label{tab:tau-ablation}
\end{braintablebox}


\vspace{-12pt}
\section{Theory}
\label{sec:theory}
\subsection{Convergence guarantees}
\label{subsec:convergence-guarantees}
To our knowledge, LoRA-TSD is the first LoRA-specific optimizer with a
stationary-point convergence guarantee for the nonconvex fixed-rank LoRA
problem, expressed through the tangent-projected gradient. Existing convergence
analyses of LoRA study standard gradient methods on the factorized objective.
\citet{mu2026on} establish a convergence rate for deterministic LoRA gradient
descent, while \citet{wang2026on} derive guarantees for deterministic and
stochastic variants. LoRA-RITE~\citep{yen2024lorarite} provides a different
result: an online-optimization bound on an averaged first-order term,
strengthened under an additional smooth-change assumption on its
preconditioners. Our analysis instead directly bounds the tangent-projected
gradient, the natural stationarity measure for the fixed-rank problem. We next
state the assumptions and define this measure formally.

\begin{assumption}[Spectral smoothness and bounded gradients]
The gradient $\nabla\mathcal{L}$ is Lipschitz continuous with respect to
the spectral norm:
\[
  \|\nabla\mathcal{L}(W) - \nabla\mathcal{L}(W')\|_* \le M\|W - W'\|_2 .
\]
Moreover, $\|\nabla\mathcal{L}(W)\|_2 \le G_{\max}$ along the trajectory.
\end{assumption}

\begin{assumption}[Lower-bounded objective]
The objective is bounded from below: $\mathcal{L} \ge \mathcal{L}_*$.
\end{assumption}

\begin{assumption}[Uniform factor nondegeneracy]
The LoRA factors remain full rank along the trajectory: there exists
$\delta > 0$ such that
\[
  \sigma_r(A_k),\, \sigma_r(B_k) \ge \delta > 0 \quad \forall\, k.
\]
\end{assumption}

\begin{assumption}[Uniform descent alignment]
For some $c_0 > 0$,
\[
  \langle \mathcal{P}_{T_k}(G_W^k), X_\tau \rangle
  \ge c_0 \|\mathcal{P}_{T_k}(G_W^k)\|_* ,
\]
where $X_\tau$ is the alternating-projection output.
\end{assumption}

\paragraph{Discussion of assumptions.}
Assumptions \textbf{(A1)} and \textbf{(A2)}, together with the step-size bound,
are standard in non-convex
optimization~\citep{ghadimi2013stochastic,bottou2018optimization}. Assumption
\textbf{(A3)} keeps the LoRA factors away from rank deficiency. This condition
must account for the gauge ambiguity because the same residual $B_kA_k$ can be
represented with one factor arbitrarily small and the other arbitrarily large.
We remove this artificial scale imbalance by periodically applying a gauge
transformation that preserves $B_kA_k$, and hence $W_k$, while rebalancing the
factor norms:
\begin{equation}
  c_k = \sqrt{\frac{\|B_k\|_F}{\|A_k\|_F}},
  \qquad A_k \leftarrow c_k A_k,
  \qquad B_k \leftarrow c_k^{-1} B_k .
  \label{eq:rebalancing}
  \nonumber
\end{equation}
We rescale the momentum buffers as $M_A \leftarrow c_k^{-1}M_A$ and
$M_B \leftarrow c_kM_B$ to preserve the induced update direction.
Figure~\ref{fig:sigma_r} shows that the smallest singular values
$\sigma_r(A_k)$ and $\sigma_r(B_k)$ remain above a positive $\delta$ throughout
training, as required by \textbf{(A3)}. The last condition, \textbf{(A4)}, requires the approximate tangent oracle to
retain a uniform descent component. For $\tau=1$, it holds with $c_0=1$ because
$\mathcal{P}_T=\mathcal{P}_T^*$ and
$\mathcal{P}_T(g)=g$,
\begin{align*}
  \langle g, X_1 \rangle
  = \langle g, \mathcal{P}_T(\mathrm{msign}(g)) \rangle
  = \langle \mathcal{P}_T(g), \mathrm{msign}(g) \rangle
  = \langle g, \mathrm{msign}(g) \rangle
  = \|g\|_* .
\end{align*}
For $\tau>1$, \textbf{(A4)} can be enforced by returning
\[
  \widehat X_\tau
  \in \operatorname*{arg\,max}_{1\le t\le\tau}\langle g,X_t\rangle .
\]
Since $X_1$ is included, this safeguard gives $c_0=1$. It does not noticeably
improve downstream performance, so we use $X_\tau$ and
verify \textbf{(A4)} empirically in Figure~\ref{fig:a4_cos}.

\newif\ifwideassumptionfigure
\wideassumptionfiguretrue
\ifwideassumptionfigure
  \begin{figure}[t]
    \centering
    \captionsetup{font=normalsize}
    \begin{subfigure}[t]{0.485\textwidth}
      \centering
\begin{tikzpicture}
\begin{axis}[
    width=0.92\linewidth, height=0.55\linewidth,
    xlabel={training step $k$},
    ylabel={$\min_\ell \sigma_r$},
    xmin=250, xmax=3000,
    ymin=0, ymax=0.13,
    xtick={250,500,1000,1500,2000,2500,3000},
    ytick={0,0.03,0.06,0.09,0.12},
    yticklabels={0,0.03,0.06,0.09,0.12},
    grid=major, grid style={gray!25},
    tick label style={font=\small}, label style={font=\small},
    legend pos=south east,
    legend style={font=\scriptsize, cells={anchor=west}, inner sep=1.5pt, row sep=-1pt},
]
\addplot[HWPrimary, thick, mark=*, mark size=0.8pt] coordinates {(251,3.678032e-02) (276,3.928674e-02) (301,4.166880e-02) (326,4.371792e-02) (351,4.492046e-02) (376,4.547013e-02) (401,4.602891e-02) (426,4.650519e-02) (451,4.696169e-02) (476,4.741003e-02) (501,4.161361e-02) (526,4.195654e-02) (551,4.227622e-02) (576,4.259074e-02) (601,4.288288e-02) (626,4.317441e-02) (651,4.344507e-02) (676,4.371039e-02) (701,4.396308e-02) (726,4.419487e-02) (751,4.317682e-02) (776,4.339064e-02) (801,4.360814e-02) (826,4.381288e-02) (851,4.401708e-02) (876,4.420240e-02) (901,4.439357e-02) (926,4.456717e-02) (951,4.474041e-02) (976,4.490259e-02) (1001,4.444576e-02) (1026,4.459822e-02) (1051,4.474092e-02) (1076,4.488202e-02) (1101,4.501350e-02) (1126,4.514527e-02) (1151,4.527238e-02) (1176,4.539738e-02) (1201,4.551546e-02) (1226,4.563063e-02) (1251,4.540843e-02) (1276,4.551300e-02) (1301,4.561284e-02) (1326,4.571224e-02) (1351,4.580429e-02) (1376,4.589474e-02) (1401,4.598290e-02) (1426,4.606984e-02) (1451,4.615213e-02) (1476,4.623501e-02) (1501,4.607995e-02) (1526,4.615276e-02) (1551,4.622294e-02) (1576,4.629213e-02) (1601,4.636202e-02) (1626,4.642565e-02) (1651,4.648423e-02) (1676,4.654402e-02) (1701,4.659889e-02) (1726,4.665198e-02) (1751,4.657568e-02) (1776,4.662289e-02) (1801,4.667160e-02) (1826,4.671757e-02) (1851,4.676051e-02) (1876,4.680089e-02) (1901,4.683981e-02) (1926,4.687649e-02) (1951,4.691093e-02) (1976,4.694424e-02) (2001,4.688404e-02) (2026,4.691399e-02) (2051,4.694283e-02) (2076,4.697026e-02) (2101,4.699653e-02) (2126,4.702058e-02) (2151,4.704479e-02) (2176,4.706559e-02) (2201,4.708618e-02) (2226,4.710575e-02) (2251,4.708093e-02) (2276,4.709788e-02) (2301,4.711373e-02) (2326,4.712908e-02) (2351,4.714284e-02) (2376,4.715525e-02) (2401,4.716659e-02) (2426,4.717705e-02) (2451,4.718688e-02) (2476,4.719600e-02) (2501,4.717372e-02) (2526,4.718112e-02) (2551,4.718798e-02) (2576,4.719428e-02) (2601,4.719964e-02) (2626,4.720430e-02) (2651,4.720816e-02) (2676,4.721179e-02) (2701,4.721493e-02) (2726,4.721751e-02) (2751,4.720293e-02) (2776,4.720475e-02) (2801,4.720613e-02) (2826,4.720720e-02) (2851,4.720804e-02) (2876,4.720866e-02) (2901,4.720908e-02) (2926,4.720934e-02) (2951,4.720948e-02) (2976,4.720954e-02)};
\addlegendentry{$\sigma_r(A_k)$}
\addplot[HWSecondary!80!black, thick, mark=square*, mark size=0.8pt] coordinates {(251,4.471150e-02) (276,4.805045e-02) (301,5.111825e-02) (326,5.394792e-02) (351,5.626781e-02) (376,5.826166e-02) (401,6.022089e-02) (426,6.186109e-02) (451,6.329467e-02) (476,6.458308e-02) (501,7.567568e-02) (526,7.695246e-02) (551,7.807636e-02) (576,7.921745e-02) (601,8.043171e-02) (626,8.150914e-02) (651,8.253291e-02) (676,8.347933e-02) (701,8.444596e-02) (726,8.526360e-02) (751,8.856214e-02) (776,8.935172e-02) (801,9.008878e-02) (826,9.076120e-02) (851,9.142306e-02) (876,9.201965e-02) (901,9.270095e-02) (926,9.327639e-02) (951,9.389823e-02) (976,9.447765e-02) (1001,9.634112e-02) (1026,9.685105e-02) (1051,9.736886e-02) (1076,9.779619e-02) (1101,9.826240e-02) (1126,9.869048e-02) (1151,9.910443e-02) (1176,9.948806e-02) (1201,9.988174e-02) (1226,1.002605e-01) (1251,1.013432e-01) (1276,1.016774e-01) (1301,1.020140e-01) (1326,1.023325e-01) (1351,1.026369e-01) (1376,1.029419e-01) (1401,1.032025e-01) (1426,1.034822e-01) (1451,1.037529e-01) (1476,1.040015e-01) (1501,1.047875e-01) (1526,1.050079e-01) (1551,1.052344e-01) (1576,1.054673e-01) (1601,1.056919e-01) (1626,1.058841e-01) (1651,1.060905e-01) (1676,1.062845e-01) (1701,1.064688e-01) (1726,1.066290e-01) (1751,1.070814e-01) (1776,1.072300e-01) (1801,1.073766e-01) (1826,1.075102e-01) (1851,1.076397e-01) (1876,1.077785e-01) (1901,1.078999e-01) (1926,1.080076e-01) (1951,1.081220e-01) (1976,1.082158e-01) (2001,1.085333e-01) (2026,1.086235e-01) (2051,1.087143e-01) (2076,1.087997e-01) (2101,1.088739e-01) (2126,1.089481e-01) (2151,1.090115e-01) (2176,1.090733e-01) (2201,1.091419e-01) (2226,1.091934e-01) (2251,1.093525e-01) (2276,1.094040e-01) (2301,1.094495e-01) (2326,1.094893e-01) (2351,1.095290e-01) (2376,1.095564e-01) (2401,1.095943e-01) (2426,1.096246e-01) (2451,1.096572e-01) (2476,1.096829e-01) (2501,1.097769e-01) (2526,1.098047e-01) (2551,1.098263e-01) (2576,1.098434e-01) (2601,1.098580e-01) (2626,1.098714e-01) (2651,1.098767e-01) (2676,1.098846e-01) (2701,1.098977e-01) (2726,1.099040e-01) (2751,1.099468e-01) (2776,1.099532e-01) (2801,1.099570e-01) (2826,1.099607e-01) (2851,1.099649e-01) (2876,1.099676e-01) (2901,1.099693e-01) (2926,1.099697e-01) (2951,1.099702e-01) (2976,1.099706e-01)};
\addlegendentry{$\sigma_r(B_k)$}
\addplot[gray, dashed, thick] coordinates {(250,0.030) (3000,0.030)};
\addlegendentry{$\delta$}
\end{axis}
\end{tikzpicture}
      \caption{Minimum factor singular values across adapted layers. Both remain
        above $\delta$.}
      \label{fig:sigma_r}
    \end{subfigure}
    \hfill
    \begin{subfigure}[t]{0.485\textwidth}
      \centering
\begin{tikzpicture}
\begin{axis}[
    width=0.92\linewidth,
    height=0.55\linewidth,
    xlabel={training step $k$},
    ylabel={$\langle g_k,X_\tau\rangle_F/\|g_k\|_*$},
    ylabel style={font=\small},
    xmin=0, xmax=2000,
    ymin=0.5, ymax=1,
    xtick={0,500,1000,1500,2000},
    ytick={0.5,0.6,0.7,0.8,0.9,1.0},
    grid=major, grid style={gray!25},
    tick label style={font=\small},
    label style={font=\small},
    legend style={
      font=\tiny,
      at={(0.98,0.03)},
      anchor=south east,
      draw=black,
      fill=white,
      fill opacity=0.9,
      text opacity=1,
      row sep=-1pt,
    },
]
\addplot[HWPrimary, thick] coordinates {
  (50,0.6322) (100,0.7995) (150,0.8240) (200,0.8225) (250,0.8579)
  (300,0.8736) (350,0.8715) (400,0.8629) (450,0.8495) (500,0.8475)
  (550,0.8677) (600,0.8391) (650,0.8633) (700,0.8869) (750,0.8882)
  (800,0.9263) (850,0.8920) (900,0.9008) (950,0.8861) (1000,0.8842)
  (1050,0.8794) (1100,0.9041) (1150,0.8818) (1200,0.9103) (1250,0.8893)
  (1300,0.9042) (1350,0.9161) (1400,0.8911) (1450,0.8750) (1500,0.8718)
  (1550,0.8756) (1600,0.8643) (1650,0.8749) (1700,0.8996) (1750,0.8907)
  (1800,0.9392) (1850,0.9104) (1900,0.8672) (1950,0.8997) (2000,0.8970)
};
\addlegendentry{\texttt{q\_proj}}
\addplot[HWSecondary!80!black, thick] coordinates {
  (50,0.6351) (100,0.8073) (150,0.8323) (200,0.8488) (250,0.8570)
  (300,0.8805) (350,0.8655) (400,0.8804) (450,0.8399) (500,0.8672)
  (550,0.9024) (600,0.8907) (650,0.8661) (700,0.8747) (750,0.8469)
  (800,0.8800) (850,0.9104) (900,0.8915) (950,0.9034) (1000,0.9084)
  (1050,0.8578) (1100,0.8971) (1150,0.8700) (1200,0.8872) (1250,0.8585)
  (1300,0.8671) (1350,0.9034) (1400,0.8499) (1450,0.8275) (1500,0.8575)
  (1550,0.8619) (1600,0.8596) (1650,0.9127) (1700,0.8987) (1750,0.8715)
  (1800,0.8678) (1850,0.8593) (1900,0.9129) (1950,0.8905) (2000,0.9190)
};
\addlegendentry{\texttt{k\_proj}}
\addplot[HWAccent!70!black, thick] coordinates {
  (50,0.6050) (100,0.6348) (150,0.7465) (200,0.7570) (250,0.7920)
  (300,0.8775) (350,0.8571) (400,0.8885) (450,0.8898) (500,0.8950)
  (550,0.8837) (600,0.8835) (650,0.8684) (700,0.8748) (750,0.8847)
  (800,0.9074) (850,0.8944) (900,0.8931) (950,0.8947) (1000,0.9006)
  (1050,0.8682) (1100,0.8941) (1150,0.8755) (1200,0.8954) (1250,0.9248)
  (1300,0.8836) (1350,0.8535) (1400,0.8690) (1450,0.9280) (1500,0.8800)
  (1550,0.8933) (1600,0.9066) (1650,0.9501) (1700,0.8699) (1750,0.8853)
  (1800,0.9025) (1850,0.8614) (1900,0.9325) (1950,0.8732) (2000,0.8870)
};
\addlegendentry{\texttt{v\_proj}}
\addplot[black, thick, dashed] coordinates {
  (50,0.5794) (100,0.6076) (150,0.6990) (200,0.7054) (250,0.7315)
  (300,0.7925) (350,0.8059) (400,0.7803) (450,0.7640) (500,0.7881)
  (550,0.7663) (600,0.7477) (650,0.7945) (700,0.8030) (750,0.7970)
  (800,0.7757) (850,0.8121) (900,0.7603) (950,0.8146) (1000,0.8210)
  (1050,0.8317) (1100,0.8009) (1150,0.8177) (1200,0.8198) (1250,0.8444)
  (1300,0.8326) (1350,0.7749) (1400,0.8361) (1450,0.8098) (1500,0.8095)
  (1550,0.8015) (1600,0.8090) (1650,0.8153) (1700,0.8250) (1750,0.8142)
  (1800,0.8010) (1850,0.8220) (1900,0.8116) (1950,0.8251) (2000,0.8207)
};
\addlegendentry{minimum}
\end{axis}
\end{tikzpicture}
      \caption{Descent ratio for selected projection types with $\tau=3$, where
        $g_k$ denotes $\mathcal{P}_{T_k}(G_W^k)$. The dashed curve is the
        minimum across all 96 adapted matrices.}
      \label{fig:a4_cos}
    \end{subfigure}
    \caption{Empirical checks of Assumptions \textbf{(A3)} and \textbf{(A4)}
      during LoRA-TSD training on BoolQ with Llama-3.1-8B.}
    \label{fig:assumption_checks}
  \end{figure}
\else
  \begin{figure}[H]
    \centering
    \captionsetup{font=normalsize}
    \begin{subfigure}[t]{\linewidth}
      \centering
      \makebox[\linewidth][c]{\hspace*{-0.01\linewidth}%
\begin{tikzpicture}
\begin{axis}[
    width=0.92\linewidth, height=0.55\linewidth,
    xlabel={training step $k$},
    ylabel={$\min_\ell \sigma_r$},
    xmin=250, xmax=3000,
    ymin=0, ymax=0.13,
    xtick={250,500,1000,1500,2000,2500,3000},
    ytick={0,0.03,0.06,0.09,0.12},
    yticklabels={0,0.03,0.06,0.09,0.12},
    grid=major, grid style={gray!25},
    tick label style={font=\small}, label style={font=\small},
    legend pos=south east,
    legend style={font=\scriptsize, cells={anchor=west}, inner sep=1.5pt, row sep=-1pt},
]
\addplot[HWPrimary, thick, mark=*, mark size=0.8pt] coordinates {(251,3.678032e-02) (276,3.928674e-02) (301,4.166880e-02) (326,4.371792e-02) (351,4.492046e-02) (376,4.547013e-02) (401,4.602891e-02) (426,4.650519e-02) (451,4.696169e-02) (476,4.741003e-02) (501,4.161361e-02) (526,4.195654e-02) (551,4.227622e-02) (576,4.259074e-02) (601,4.288288e-02) (626,4.317441e-02) (651,4.344507e-02) (676,4.371039e-02) (701,4.396308e-02) (726,4.419487e-02) (751,4.317682e-02) (776,4.339064e-02) (801,4.360814e-02) (826,4.381288e-02) (851,4.401708e-02) (876,4.420240e-02) (901,4.439357e-02) (926,4.456717e-02) (951,4.474041e-02) (976,4.490259e-02) (1001,4.444576e-02) (1026,4.459822e-02) (1051,4.474092e-02) (1076,4.488202e-02) (1101,4.501350e-02) (1126,4.514527e-02) (1151,4.527238e-02) (1176,4.539738e-02) (1201,4.551546e-02) (1226,4.563063e-02) (1251,4.540843e-02) (1276,4.551300e-02) (1301,4.561284e-02) (1326,4.571224e-02) (1351,4.580429e-02) (1376,4.589474e-02) (1401,4.598290e-02) (1426,4.606984e-02) (1451,4.615213e-02) (1476,4.623501e-02) (1501,4.607995e-02) (1526,4.615276e-02) (1551,4.622294e-02) (1576,4.629213e-02) (1601,4.636202e-02) (1626,4.642565e-02) (1651,4.648423e-02) (1676,4.654402e-02) (1701,4.659889e-02) (1726,4.665198e-02) (1751,4.657568e-02) (1776,4.662289e-02) (1801,4.667160e-02) (1826,4.671757e-02) (1851,4.676051e-02) (1876,4.680089e-02) (1901,4.683981e-02) (1926,4.687649e-02) (1951,4.691093e-02) (1976,4.694424e-02) (2001,4.688404e-02) (2026,4.691399e-02) (2051,4.694283e-02) (2076,4.697026e-02) (2101,4.699653e-02) (2126,4.702058e-02) (2151,4.704479e-02) (2176,4.706559e-02) (2201,4.708618e-02) (2226,4.710575e-02) (2251,4.708093e-02) (2276,4.709788e-02) (2301,4.711373e-02) (2326,4.712908e-02) (2351,4.714284e-02) (2376,4.715525e-02) (2401,4.716659e-02) (2426,4.717705e-02) (2451,4.718688e-02) (2476,4.719600e-02) (2501,4.717372e-02) (2526,4.718112e-02) (2551,4.718798e-02) (2576,4.719428e-02) (2601,4.719964e-02) (2626,4.720430e-02) (2651,4.720816e-02) (2676,4.721179e-02) (2701,4.721493e-02) (2726,4.721751e-02) (2751,4.720293e-02) (2776,4.720475e-02) (2801,4.720613e-02) (2826,4.720720e-02) (2851,4.720804e-02) (2876,4.720866e-02) (2901,4.720908e-02) (2926,4.720934e-02) (2951,4.720948e-02) (2976,4.720954e-02)};
\addlegendentry{$\sigma_r(A_k)$}
\addplot[HWSecondary!80!black, thick, mark=square*, mark size=0.8pt] coordinates {(251,4.471150e-02) (276,4.805045e-02) (301,5.111825e-02) (326,5.394792e-02) (351,5.626781e-02) (376,5.826166e-02) (401,6.022089e-02) (426,6.186109e-02) (451,6.329467e-02) (476,6.458308e-02) (501,7.567568e-02) (526,7.695246e-02) (551,7.807636e-02) (576,7.921745e-02) (601,8.043171e-02) (626,8.150914e-02) (651,8.253291e-02) (676,8.347933e-02) (701,8.444596e-02) (726,8.526360e-02) (751,8.856214e-02) (776,8.935172e-02) (801,9.008878e-02) (826,9.076120e-02) (851,9.142306e-02) (876,9.201965e-02) (901,9.270095e-02) (926,9.327639e-02) (951,9.389823e-02) (976,9.447765e-02) (1001,9.634112e-02) (1026,9.685105e-02) (1051,9.736886e-02) (1076,9.779619e-02) (1101,9.826240e-02) (1126,9.869048e-02) (1151,9.910443e-02) (1176,9.948806e-02) (1201,9.988174e-02) (1226,1.002605e-01) (1251,1.013432e-01) (1276,1.016774e-01) (1301,1.020140e-01) (1326,1.023325e-01) (1351,1.026369e-01) (1376,1.029419e-01) (1401,1.032025e-01) (1426,1.034822e-01) (1451,1.037529e-01) (1476,1.040015e-01) (1501,1.047875e-01) (1526,1.050079e-01) (1551,1.052344e-01) (1576,1.054673e-01) (1601,1.056919e-01) (1626,1.058841e-01) (1651,1.060905e-01) (1676,1.062845e-01) (1701,1.064688e-01) (1726,1.066290e-01) (1751,1.070814e-01) (1776,1.072300e-01) (1801,1.073766e-01) (1826,1.075102e-01) (1851,1.076397e-01) (1876,1.077785e-01) (1901,1.078999e-01) (1926,1.080076e-01) (1951,1.081220e-01) (1976,1.082158e-01) (2001,1.085333e-01) (2026,1.086235e-01) (2051,1.087143e-01) (2076,1.087997e-01) (2101,1.088739e-01) (2126,1.089481e-01) (2151,1.090115e-01) (2176,1.090733e-01) (2201,1.091419e-01) (2226,1.091934e-01) (2251,1.093525e-01) (2276,1.094040e-01) (2301,1.094495e-01) (2326,1.094893e-01) (2351,1.095290e-01) (2376,1.095564e-01) (2401,1.095943e-01) (2426,1.096246e-01) (2451,1.096572e-01) (2476,1.096829e-01) (2501,1.097769e-01) (2526,1.098047e-01) (2551,1.098263e-01) (2576,1.098434e-01) (2601,1.098580e-01) (2626,1.098714e-01) (2651,1.098767e-01) (2676,1.098846e-01) (2701,1.098977e-01) (2726,1.099040e-01) (2751,1.099468e-01) (2776,1.099532e-01) (2801,1.099570e-01) (2826,1.099607e-01) (2851,1.099649e-01) (2876,1.099676e-01) (2901,1.099693e-01) (2926,1.099697e-01) (2951,1.099702e-01) (2976,1.099706e-01)};
\addlegendentry{$\sigma_r(B_k)$}
\addplot[gray, dashed, thick] coordinates {(250,0.030) (3000,0.030)};
\addlegendentry{$\delta$}
\end{axis}
\end{tikzpicture}\hspace*{0.01\linewidth}}
      \caption{Minimum factor singular values across adapted layers. Both remain
        above $\delta$.}
      \label{fig:sigma_r}
    \end{subfigure}
    \medskip
    \begin{subfigure}[t]{\linewidth}
      \centering
\begin{tikzpicture}
\begin{axis}[
    width=0.92\linewidth,
    height=0.55\linewidth,
    xlabel={training step $k$},
    ylabel={$\langle g_k,X_\tau\rangle_F/\|g_k\|_*$},
    ylabel style={font=\small},
    xmin=0, xmax=2000,
    ymin=0.5, ymax=1,
    xtick={0,500,1000,1500,2000},
    ytick={0.5,0.6,0.7,0.8,0.9,1.0},
    grid=major, grid style={gray!25},
    tick label style={font=\small},
    label style={font=\small},
    legend style={
      font=\tiny,
      at={(0.98,0.03)},
      anchor=south east,
      draw=black,
      fill=white,
      fill opacity=0.9,
      text opacity=1,
      row sep=-1pt,
    },
]
\addplot[HWPrimary, thick] coordinates {
  (50,0.6322) (100,0.7995) (150,0.8240) (200,0.8225) (250,0.8579)
  (300,0.8736) (350,0.8715) (400,0.8629) (450,0.8495) (500,0.8475)
  (550,0.8677) (600,0.8391) (650,0.8633) (700,0.8869) (750,0.8882)
  (800,0.9263) (850,0.8920) (900,0.9008) (950,0.8861) (1000,0.8842)
  (1050,0.8794) (1100,0.9041) (1150,0.8818) (1200,0.9103) (1250,0.8893)
  (1300,0.9042) (1350,0.9161) (1400,0.8911) (1450,0.8750) (1500,0.8718)
  (1550,0.8756) (1600,0.8643) (1650,0.8749) (1700,0.8996) (1750,0.8907)
  (1800,0.9392) (1850,0.9104) (1900,0.8672) (1950,0.8997) (2000,0.8970)
};
\addlegendentry{\texttt{q\_proj}}
\addplot[HWSecondary!80!black, thick] coordinates {
  (50,0.6351) (100,0.8073) (150,0.8323) (200,0.8488) (250,0.8570)
  (300,0.8805) (350,0.8655) (400,0.8804) (450,0.8399) (500,0.8672)
  (550,0.9024) (600,0.8907) (650,0.8661) (700,0.8747) (750,0.8469)
  (800,0.8800) (850,0.9104) (900,0.8915) (950,0.9034) (1000,0.9084)
  (1050,0.8578) (1100,0.8971) (1150,0.8700) (1200,0.8872) (1250,0.8585)
  (1300,0.8671) (1350,0.9034) (1400,0.8499) (1450,0.8275) (1500,0.8575)
  (1550,0.8619) (1600,0.8596) (1650,0.9127) (1700,0.8987) (1750,0.8715)
  (1800,0.8678) (1850,0.8593) (1900,0.9129) (1950,0.8905) (2000,0.9190)
};
\addlegendentry{\texttt{k\_proj}}
\addplot[HWAccent!70!black, thick] coordinates {
  (50,0.6050) (100,0.6348) (150,0.7465) (200,0.7570) (250,0.7920)
  (300,0.8775) (350,0.8571) (400,0.8885) (450,0.8898) (500,0.8950)
  (550,0.8837) (600,0.8835) (650,0.8684) (700,0.8748) (750,0.8847)
  (800,0.9074) (850,0.8944) (900,0.8931) (950,0.8947) (1000,0.9006)
  (1050,0.8682) (1100,0.8941) (1150,0.8755) (1200,0.8954) (1250,0.9248)
  (1300,0.8836) (1350,0.8535) (1400,0.8690) (1450,0.9280) (1500,0.8800)
  (1550,0.8933) (1600,0.9066) (1650,0.9501) (1700,0.8699) (1750,0.8853)
  (1800,0.9025) (1850,0.8614) (1900,0.9325) (1950,0.8732) (2000,0.8870)
};
\addlegendentry{\texttt{v\_proj}}
\addplot[black, thick, dashed] coordinates {
  (50,0.5794) (100,0.6076) (150,0.6990) (200,0.7054) (250,0.7315)
  (300,0.7925) (350,0.8059) (400,0.7803) (450,0.7640) (500,0.7881)
  (550,0.7663) (600,0.7477) (650,0.7945) (700,0.8030) (750,0.7970)
  (800,0.7757) (850,0.8121) (900,0.7603) (950,0.8146) (1000,0.8210)
  (1050,0.8317) (1100,0.8009) (1150,0.8177) (1200,0.8198) (1250,0.8444)
  (1300,0.8326) (1350,0.7749) (1400,0.8361) (1450,0.8098) (1500,0.8095)
  (1550,0.8015) (1600,0.8090) (1650,0.8153) (1700,0.8250) (1750,0.8142)
  (1800,0.8010) (1850,0.8220) (1900,0.8116) (1950,0.8251) (2000,0.8207)
};
\addlegendentry{minimum}
\end{axis}
\end{tikzpicture}
      \caption{Descent ratio for selected projection types with $\tau=3$, where
        $g_k$ denotes $\mathcal{P}_{T_k}(G_W^k)$. The dashed curve is the
        minimum across all 96 adapted matrices.}
      \label{fig:a4_cos}
    \end{subfigure}
    \caption{Empirical checks of Assumptions \textbf{(A3)} and \textbf{(A4)}
      during LoRA-TSD training on BoolQ with Llama-3.1-8B.}
    \label{fig:assumption_checks}
  \end{figure}
\fi

\paragraph{Stationarity in the LoRA setting.}
Because $W_k$ is confined to the affine shift of the manifold $\mathcal{M}_r$, the
relevant notion of a stationary point is not $G_W^k = 0$, which is generally
unattainable under a rank constraint, but the vanishing of the gradient's
component tangent to $\mathcal{M}_r$, i.e.\ $\mathcal{P}_T(G_W^k) = 0$. This
tangential stationarity condition is the criterion used in the convergence
bounds below.

\paragraph{Convergence rate.}
For LoRA-TSD, which operates in the spectral-norm geometry matching the
Muon update, we have the following.
\begin{theorem}[Convergence of LoRA-TSD]
  \label{thm:lora-tsd-convergence}
  Let $\{W_k\}$ be the iterates produced by Algorithm~\ref{alg:lora-tsd-simple}
  (the momentum-free variant, Algorithm~\ref{alg:lora-tsd} differs only in
  the addition of momentum).
  Under assumptions \textbf{(A1)}--\textbf{(A4)} and step sizes
  $0 < \eta_k \le \min\!\bigl\{1,\,\delta^2/(2\sqrt{2})\bigr\}$, for any $N \ge 1$:
  \begin{enumerate}
    \item \emph{Constant step size.} If $\eta_k \equiv \eta$, then
    \begin{equation}
      \min_{0 \le k < N} \|\mathcal{P}_{T_k}(G_W^k)\|_*
      \le \frac{\mathcal{L}(W_0) - \mathcal{L}_*}{c_0 \eta N}
      + \frac{C}{c_0}\eta .
      \label{eq:lora-tsd-rate} \nonumber
    \end{equation}
    and in particular, for
    $\eta = \sqrt{(\mathcal{L}(W_0) - \mathcal{L}_*)/(CN)}$,
    \begin{equation}
      \min_{0 \le k < N} \|\mathcal{P}_{T_k}(G_W^k)\|_*
      \le \frac{2}{c_0}
      \sqrt{\frac{C(\mathcal{L}(W_0) - \mathcal{L}_*)}{N}}
      = O\bigl(N^{-1/2}\bigr) .
      \label{eq:lora-tsd-rate-tuned} \nonumber
    \end{equation}
    \item \emph{Diminishing step size.} If
    $\sum_k \eta_k = \infty$ and $\sum_k \eta_k^2 < \infty$, then
    $\sum_k \eta_k \|\mathcal{P}_{T_k}(G_W^k)\|_* < \infty$, and consequently
    $\lim_{k\to\infty} \|\mathcal{P}_{T_k}(G_W^k)\|_* = 0$.
  \end{enumerate}
\end{theorem}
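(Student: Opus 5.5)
The argument is a one-step descent inequality followed by telescoping. The key structural fact is that the iterate after the factor update is exactly on the second-order correction of the tangent step. Write $g_k=\mathcal{P}_{T_k}(G_W^k)$ and $\Delta W_k=-\eta_k X_\tau$.

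\textbf{Step 1: the weight increment.} By Eqs.~\eqref{eq:lora-delta-b}--\eqref{eq:lora-delta-a}, since $\Delta W_k\in T_k$ we have $\Delta B_kA_k+B_k\Delta A_k=\Delta W_k$. Hence
\[
W_{k+1}-W_k=\Delta W_k+\Delta B_k\Delta A_k .
\]

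\textbf{Step 2: bounds on the step.} Since $X_\tau=\mathcal{P}_T(\mathrm{msign}(\cdot))$, its spectral norm is bounded by a constant. The projector has the form $P_BZ+ZP_A-P_BZP_A$, so $\|\mathcal{P}_T(Z)\|_2\le 3\|Z\|_2$; using the two-term form, a tighter constant such as $\sqrt2$ may be obtainable. Using (A3), $\|(A_kA_k^\top)^{-1}\|\le\delta^{-2}$, so
\[
\|\Delta B_k\|_2\le \eta_k\|X_\tau\|_2\|A_k^\top(A_kA_k^\top)^{-1}\|_2\le \eta_k c/\delta .
\]
Similarly $\|\Delta A_k\|_2\le \eta_k c'/\delta$. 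Therefore
\[
\|\Delta B_k\Delta A_k\|_2\le C_1\eta_k^2/\delta^2 .
\]
The step-size cap $\eta_k\le\delta^2/(2\sqrt2)$ keeps this second-order term dominated by the first-order one, so that $\|W_{k+1}-W_k\|_2\le C_2\eta_k$. The cap $\eta_k\le 1$ makes the higher powers $\eta_k^3,\eta_k^4$ absorbable into $\eta_k^2$.

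\textbf{Step 3: descent inequality.} Apply (A1) in the form of Eq.~\eqref{eq:spectral-smoothness} with $1/\lambda=M$:
\[
\mathcal{L}(W_{k+1})\le\mathcal{L}(W_k)-\eta_k\langle G_W^k,X_\tau\rangle+\langle G_W^k,\Delta B_k\Delta A_k\rangle+\tfrac M2\|W_{k+1}-W_k\|_2^2 .
\]
Since $X_\tau\in T_k$ and $\mathcal{P}_T$ is self-adjoint, $\langle G_W^k,X_\tau\rangle=\langle g_k,X_\tau\rangle\ge c_0\|g_k\|_*$ by (A4). For the cross term, use duality, $|\langle G,Y\rangle|\le\|G\|_2\|Y\|_*$, together with $\|Y\|_*\le 2r\|Y\|_2$ for the rank-$\le r$ matrix $Y=\Delta B_k\Delta A_k$. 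This gives a bound $2rG_{\max}C_1\eta_k^2/\delta^2$; (A1) supplies the bound $G_{\max}$. Collecting the quadratic terms yields
\[
\mathcal{L}(W_{k+1})\le \mathcal{L}(W_k)-c_0\eta_k\|g_k\|_*+C\eta_k^2 ,
\]
with $C$ depending on $M,G_{\max},r,\delta$.

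\textbf{Step 4: telescoping.} Sum over $k$ and use (A2) to get
\[
c_0\sum_k\eta_k\|g_k\|_*\le \mathcal{L}(W_0)-\mathcal{L}_*+C\sum_k\eta_k^2 .
\]
With a constant step this gives part~1, and substituting the stated $\eta$ gives the $O(N^{-1/2})$ rate.

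For part~2, summability of $\sum_k\eta_k\|g_k\|_*$ together with $\sum_k\eta_k=\infty$ gives $\liminf_k\|g_k\|_*=0$. Upgrading this to a full limit is the main obstacle. I would use the standard argument of \citet{bertsekas}-type (Lemma on $\sum\eta_k a_k<\infty$ with $|a_{k+1}-a_k|\le L'\eta_k$). This requires showing that $k\mapsto\|g_k\|_*$ is Lipschitz in the step: $\|g_{k+1}-g_k\|_*\le L'\eta_k$.

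That Lipschitz bound follows from three facts:
\begin{itemize}
\item the Lipschitz continuity of $\nabla\mathcal{L}$ from (A1);
\item the perturbation bound $\|P_{B_{k+1}}-P_{B_k}\|\lesssim\|\Delta B_k\|/\delta$, valid under (A3), and likewise for $P_A$;
\item the bounded gradients from (A1).
\end{itemize}
Any gap here would be repaired by restating the conclusion as $\liminf$.
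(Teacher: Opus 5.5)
Your proposal is correct and follows the paper's proof essentially step for step. It uses the same descent inequality via (A4), the same remainder bound $\|\Delta B_k\Delta A_k\|_2=O(\eta_k^2/\delta^2)$, the same duality-plus-rank bound on the cross term and telescoping for part~1; for part~2 it uses the same upgrade from $\liminf$ to $\lim$ through a Lipschitz bound $\bigl|\|g_{k+1}\|_*-\|g_k\|_*\bigr|\le L'\eta_k$, which the paper proves from exactly your three ingredients plus $\mathrm{rank}(g_{k+1}-g_k)\le 4r$ to pass from spectral to nuclear norm. The differences are only in constants and bookkeeping: the paper takes $\|X_\tau\|_2\le\sqrt2$ from the orthogonal two-term split and rank $r$ for the cross term, giving $C=2G_{\max}rc_1+Mc_2^2$, and it needs the cap $\eta_k\le\delta^2/(2\sqrt2)$ not for the descent step (there $\eta_k\le 1$ suffices) but to ensure $\|\Delta A_k\|_2\le\delta/2$ in the projector-perturbation lemma.
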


\begin{remark}
  If the factor spectral norms are uniformly bounded, then the diminishing-step
  guarantee also implies $\|G_A^k\|_*,\|G_B^k\|_*\to0$. See
  Appendix~\ref{app:convergence-proofs}.
\label{rem:factor_convergence}
\end{remark}

\paragraph{Directional equivalence with LoRA-Pro.}
LoRA-Pro is a popular and intuitive weight-space correction, so recovering it
places our surrogate within an established LoRA optimization framework.
\begin{proposition}[Directional equivalence with LoRA-Pro]
  \label{prop:lora-pro-equivalence}
  Under the Frobenius norm, let $X_F$ and $X_{\text{LoRA-Pro}}$ denote the
  weight-space directions induced by Eq.~\eqref{eq:lora-tangent-surrogate} and
  LoRA-Pro, respectively. Then
  \begin{equation}
    X_{\text{LoRA-Pro}}=\frac{\eta}{\lambda}X_F .
    \nonumber
  \end{equation}
  The updates coincide when $\eta=\lambda$.
\end{proposition}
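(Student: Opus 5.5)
Both directions are computed explicitly and then compared. For the surrogate side, replace $\|X\|_2$ by $\|X\|_F$ in Eq.~\eqref{eq:lora-tangent-surrogate}, which gives
\[
  \min_{X\in T}\ \langle G_W^k,X\rangle+\tfrac{1}{2\lambda}\|X\|_F^2 .
\]
Since $T=\mathcal{T}_{W_k}\mathcal{M}_r$ is a linear subspace, write $X\in T$ and use $\langle G_W^k,X\rangle=\langle \mathcal{P}_T(G_W^k),X\rangle$, which holds because $\mathcal{P}_T$ is an orthogonal (self-adjoint) projector. The objective is then a strictly convex quadratic on $T$. Completing the square, it equals
\[
  \tfrac{1}{2\lambda}\bigl\|X+\lambda\mathcal{P}_T(G_W^k)\bigr\|_F^2
\]
up to a constant, so the unique minimizer is $X_F=-\lambda\,\mathcal{P}_T(G_W^k)$.

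For the LoRA-Pro side, the problem in Eq.~\eqref{eq:lora-pro-projection} is a least-squares problem over $\{B\Delta A+\Delta B A\}$. By Eq.~\eqref{eq:lora-tangent-space}, this set is exactly $T$. The optimal product $B\Delta A^\star+\Delta B^\star A$ is therefore the Frobenius-orthogonal projection of $\nabla_W\mathcal{L}$ onto $T$, namely $\mathcal{P}_T(G_W^k)$, even though the factor pair itself is not unique. To make this concrete, I will verify that the closed form in Eq.~\eqref{eq:lora-tangent-g-closed-form} agrees with the LoRA-Pro solution: LoRA-Pro's $\Delta A,\Delta B$ satisfy normal equations whose solution family all induces the same product $P_BG_W+G_WP_A-P_BG_WP_A$. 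LoRA-Pro then takes a step with learning rate $\eta$ along the negative of this direction. Its weight-space direction is therefore $X_{\text{LoRA-Pro}}=-\eta\,\mathcal{P}_T(G_W^k)$.

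Comparing the two expressions gives $X_{\text{LoRA-Pro}}=\tfrac{\eta}{\lambda}X_F$, and the two coincide when $\eta=\lambda$.

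The main obstacle is bookkeeping rather than analysis. I need to pin down what "weight-space direction induced by LoRA-Pro" means, namely the first-order product $B\Delta A+\Delta BA$ scaled by the step, while dropping the second-order term $\Delta B\Delta A$ as in Eq.~\eqref{eq:lora-induced-step}. I also need to ensure that the non-uniqueness of the LoRA-Pro factor solution, which LoRA-Pro resolves with a free $r\times r$ matrix, does not affect the product. This follows because any two minimizers differ by an element in the kernel of the linear map $(\Delta A,\Delta B)\mapsto B\Delta A+\Delta BA$. Full-rank $A,B$ (as in \textbf{(A3)}) are needed so that $P_A$ and $P_B$ are well defined.
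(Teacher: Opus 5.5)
Your proposal is correct and follows essentially the paper's argument. Both proofs complete the square in the Frobenius surrogate and use the fact that the LoRA-Pro least-squares product is the orthogonal projection $\mathcal{P}_T(G_W)$, which gives $X_F=-\lambda\mathcal{P}_T(G_W)$ and $X_{\text{LoRA-Pro}}=-\eta\mathcal{P}_T(G_W)$. The only cosmetic difference is that the paper completes the square at the factor level with the unprojected $G_W$ and substitutes $\widehat G_A=-\Delta A/\lambda$, $\widehat G_B=-\Delta B/\lambda$ to show that the surrogate is literally the LoRA-Pro problem, whereas you solve each side separately inside $T$.
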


This equivalence transfers the convergence analysis to LoRA-Pro, with
$\|\cdot\|_F$ replacing $\|\cdot\|_*$.
\begin{theorem}[Convergence of LoRA-Pro]
  \label{thm:lora-pro-convergence}
  Under the Frobenius-norm analogues of
  assumptions \textbf{(A1)}--\textbf{(A3)}, for the canonical $X=0$ LoRA-Pro
  representative defined in Appendix~\ref{app:convergence-proofs}, with step size
  $\eta \le \min\{1,\,1/(2C)\}$ and any
  $N \ge 1$,
  \begin{equation}
    \min_{0 \le k < N} \|\mathcal{P}_{T_k}(G_W^k)\|_F = O\bigl(N^{-1/2}\bigr) . \nonumber
    \label{eq:lora-pro-rate}
  \end{equation}
  Moreover, for the constant step size
  $\eta = \min\{1, 1/(2C)\}$, the iterates satisfy
  $\lim_{k\to\infty} \|\mathcal{P}_{T_k}(G_W^k)\|_F = 0$.
\end{theorem}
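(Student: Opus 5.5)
Since the LoRA-Pro iterate is the Frobenius analogue of Algorithm~\ref{alg:lora-tsd-simple}, I would redo the descent-lemma argument behind Theorem~\ref{thm:lora-tsd-convergence} in the Frobenius geometry.

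\textbf{The LoRA-Pro step.} By Proposition~\ref{prop:lora-pro-equivalence}, the canonical $X=0$ representative induces the weight-space direction $X_k=-\eta\,\mathcal{P}_{T_k}(G_W^k)$. It is split into factors by Eqs.~\eqref{eq:lora-delta-b}--\eqref{eq:lora-delta-a}. The actual weight change is
\[
W_{k+1}-W_k=\Delta B_kA_k+B_k\Delta A_k+\Delta B_k\Delta A_k = X_k+E_k,
\qquad E_k=\Delta B_k\Delta A_k .
\]

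\textbf{Step 1: bound the second-order term.} From Eq.~\eqref{eq:lora-delta-b}, $\|\Delta B_k\|_F\le \|X_k\|_F/\sigma_r(A_k)\le \eta\|g_k\|_F/\delta$, where $g_k=\mathcal{P}_{T_k}(G_W^k)$. Similarly, $\|\Delta A_k\|_F\le \eta\|g_k\|_F/\delta$, since the residual $X_k-\Delta B_kA_k$ is an orthogonal projection of $X_k$. Hence
\[
\|E_k\|_F\le \frac{\eta^2\|g_k\|_F^2}{\delta^2}.
\]
The bounded-gradient part of (A1) gives $\|g_k\|_F\le \|G_W^k\|_F\le G_{\max}$. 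This yields $\|E_k\|_F\le \eta^2 G_{\max}^2/\delta^2$, or alternatively $\le \eta^2 G_{\max}\|g_k\|_F/\delta^2$.

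\textbf{Step 2: descent inequality.} Apply the Frobenius $M$-smoothness bound to the actual step:
\[
\mathcal{L}(W_{k+1})\le \mathcal{L}(W_k)+\langle G_W^k,X_k\rangle+\langle G_W^k,E_k\rangle+\tfrac{M}{2}\|X_k+E_k\|_F^2 .
\]
Self-adjointness of $\mathcal{P}_T$ gives $\langle G_W^k,X_k\rangle=-\eta\|g_k\|_F^2$. The remaining terms are bounded by $\eta^2\|g_k\|_F^2$ times a constant, using $\eta\le 1$. Collecting them into a single constant $C=M+G_{\max}^2/\delta^2+\dots$ depending on $M$, $G_{\max}$, $\delta$, I aim for
\[
\mathcal{L}(W_{k+1})\le \mathcal{L}(W_k)-\eta(1-C\eta)\|g_k\|_F^2 .
\]

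\textbf{Step 3: telescoping.} With $\eta\le 1/(2C)$ the coefficient is at least $\eta/2$. Summing and using (A2),
\[
\sum_{k<N}\|g_k\|_F^2\le \frac{2(\mathcal{L}(W_0)-\mathcal{L}_*)}{\eta}.
\]
Hence $\min_{k<N}\|g_k\|_F\le \sqrt{2(\mathcal{L}(W_0)-\mathcal{L}_*)/(\eta N)}=O(N^{-1/2})$. Because the series is summable, $\|g_k\|_F\to 0$, which gives the limit statement.

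\textbf{Main obstacle.} The hardest step is Step~2. The cross term $\langle G_W^k,E_k\rangle$ must be absorbed into $\eta^2\|g_k\|_F^2$ rather than leaving an additive $\eta^2$ term. An additive term would only give a neighbourhood, not convergence with a constant step. I would use $\|E_k\|_F\lesssim \eta^2\|g_k\|_F^2/\delta^2$ directly, which is quadratic in $\|g_k\|_F$. Then $|\langle G_W^k,E_k\rangle|\le G_{\max}\eta^2\|g_k\|_F^2/\delta^2$ can be absorbed. The same bound controls $\|E_k\|_F^2$: it is at most $\eta^4 G_{\max}^2\|g_k\|_F^2/\delta^4$, using $\|g_k\|_F\le G_{\max}$.
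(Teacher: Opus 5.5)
Your overall strategy is the paper's: apply the descent lemma to the actual step $W_{k+1}-W_k=-\eta g_k+E_k$, keep the remainder quadratic in $\|g_k\|_F$, then telescope and use summability. The gap is in which iterates you analyze. The canonical $X=0$ LoRA-Pro representative is \emph{not} the LoRA-TSD split of Eqs.~\eqref{eq:lora-delta-b}--\eqref{eq:lora-delta-a}. It is $\Delta A_k=-\eta B_k^\dagger G_W^k$ and $\Delta B_k=-\eta(I-P_{B_k})G_W^kA_k^\dagger$, so that $B_k\Delta A_k=-\eta P_{B_k}G_W^k$. Applying Eqs.~\eqref{eq:lora-delta-b}--\eqref{eq:lora-delta-a} to $-\eta g_k$ instead gives $\Delta B_k=-\eta G_W^kA_k^\dagger$ and $\Delta A_k=-\eta B_k^\dagger G_W^k(I-P_{A_k})$, which is the LoRA-Pro gauge $X=-B_k^\dagger G_W^kA_k^\dagger$. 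Both choices induce the same first-order direction $-\eta g_k$, but the remainders differ: $\eta^2(I-P_{B_k})G_W^kA_k^\dagger B_k^\dagger G_W^k$ versus $\eta^2G_W^kA_k^\dagger B_k^\dagger G_W^k(I-P_{A_k})$. Hence $W_{k+1}$ differs. The theorem is explicitly about the first sequence, and your Step~1 bounds the remainder of the second.

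This matters exactly at the obstacle you identified. For your split, the increments are built from $X_k=-\eta g_k$, so $\|E_k\|_F\le\eta^2\|g_k\|_F^2/\delta^2$ is automatic. For the $X=0$ split, the increments are defined from $G_W^k$. The naive bound $\|B_k^\dagger G_W^k\|_F\le\|G_W^k\|_F/\delta$ then leaves the additive $\eta^2$ term you wanted to avoid, and this would only give $O(N^{-1/4})$ and no limit for a constant step. The missing step is to show that the $X=0$ increments depend on $G_W^k$ only through $g_k$. Since $G_W^k-g_k=(I-P_{B_k})G_W^k(I-P_{A_k})$, $B_k^\dagger(I-P_{B_k})=0$, and $(I-P_{A_k})A_k^\dagger=0$, you get $B_k^\dagger G_W^k=B_k^\dagger g_k$ and $(I-P_{B_k})G_W^kA_k^\dagger=(I-P_{B_k})g_kA_k^\dagger$. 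Equivalently, $G_A^k=B_k^\top g_k$ and $G_B^k=g_kA_k^\top$. Both increments then have Frobenius norm at most $\eta\|g_k\|_F/\delta$, so $\|E_k\|_F\le\eta^2\|g_k\|_F^2/\delta^2$ holds for the correct iterates. With that, your Steps~2--3 go through verbatim, and the result is the paper's proof. Also carry out the bookkeeping in Step~2 instead of writing ``$M+G_{\max}^2/\delta^2+\dots$''. It gives $C=G_{\max}/\delta^2+\tfrac{M}{2}(1+G_{\max}/\delta^2)^2$, which is the constant the stated step-size condition $\eta\le\min\{1,1/(2C)\}$ refers to.
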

Proofs and constant definitions are given in
Appendix~\ref{app:convergence-proofs}.


\section{Conclusion}
We presented LoRA-TSD, which performs Muon-style spectral descent in the LoRA
tangent space through a factor-induced retraction. We established convergence
to projected stationarity and recovered LoRA-Pro as the Frobenius counterpart
with an analogous guarantee. Across six benchmarks and three model scales from
1B to 32B, LoRA-TSD outperforms generic and LoRA-specific optimizers and
remains stable across adapter ranks. The approximate inner oracle remains the main limitation
because it lacks an optimality guarantee. Controlled bounds and generative
evaluation remain future work.


\end{mainpart}

\begin{appendixpart}

\section{Proofs}
\label{app:convergence-proofs}

\subsection{Preliminaries}

We state two classical results used throughout the proofs.
In both theorems $\hat{M}=M+E$.

\begin{definition}[Principal angles]
  Let $\mathcal{U},\mathcal{V}\subseteq\mathbb{R}^n$ be subspaces of equal
  dimension $r$.  The \emph{principal angles}
  $0\le\theta_1\le\cdots\le\theta_r\le\pi/2$ between $\mathcal{U}$ and
  $\mathcal{V}$ are defined recursively by
  \[
  \begin{aligned}
    \cos\theta_k
    &=\max_{\substack{u\in\mathcal{U},\,v\in\mathcal{V}\\\|u\|=\|v\|=1}}
    u^\top v\\
    &\quad\text{s.t.}\;
    u\perp u_1,\ldots,u_{k-1},\;
    v\perp v_1,\ldots,v_{k-1}.
  \end{aligned}
  \]
  Equivalently, if $U,V\in\mathbb{R}^{n\times r}$ are orthonormal bases for
  $\mathcal{U}$ and $\mathcal{V}$, then $\cos\theta_k=\sigma_k(U^\top V)$.
  The largest principal angle $\theta_r$ satisfies
  $\sin\theta_r=\|(I-UU^\top)V\|_2$.
\end{definition}

\begin{theorem}[$\sin\Theta$ theorem~\citep{wedin1972perturbation}]
  \label{thm:wedin}
  Let $M,\hat{M}\in\mathbb{R}^{m\times n}$ with $\hat{M}=M+E$.
  Let $U_0,V_0$ (resp.\ $\hat{U}_0,\hat{V}_0$) be orthonormal bases for
  the leading $r$ left and right singular subspaces of $M$
  (resp.\ $\hat{M}$).
  Suppose $\sigma_r(M)-\sigma_{r+1}(\hat{M})\ge\Delta>0$.  Then
  \[
    \max\!\bigl\{
      \mathrm{dist}(\hat{U}_0,U_0),\,
      \mathrm{dist}(\hat{V}_0,V_0)
    \bigr\}
    \le
    \frac{\max\!\bigl\{\|EV_0\|_2,\,\|E^\top U_0\|_2\bigr\}}{\Delta}
    \le \frac{\|E\|_2}{\Delta},
  \]
  where $\mathrm{dist}(\hat{X}_0,X_0):=\|(I-X_0X_0^\top)\hat{X}_0\|_2$
  is the sine of the largest principal angle between the subspaces.
\end{theorem}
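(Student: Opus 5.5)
The plan is the classical residual argument: write two Sylvester-type identities linking the unperturbed leading singular subspaces to the perturbed complementary ones, and use the gap $\Delta$ to invert them in norm. Write full SVDs
\[
M = U_0\Sigma_0V_0^\top + U_1\Sigma_1V_1^\top,\qquad
\hat M = \hat U_0\hat\Sigma_0\hat V_0^\top + \hat U_1\hat\Sigma_1\hat V_1^\top .
\]
Here $[\hat U_0,\hat U_1]$ and $[\hat V_0,\hat V_1]$ are square orthogonal matrices, and $\hat\Sigma_1$ is rectangular, possibly with zero padding. Two facts about these blocks will be used: $\|\hat\Sigma_1\|_2=\sigma_{r+1}(\hat M)$, and $\sigma_{\min}(\Sigma_0)=\sigma_r(M)$.

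The unknowns will be
\[
X := \hat U_1^\top U_0, \qquad Y := \hat V_1^\top V_0 .
\]
Since $U_0$ and $\hat U_0$ have the same dimension $r$, we have $\|X\|_2=\|(I-\hat U_0\hat U_0^\top)U_0\|_2=\sin\theta_r$. By the symmetry of principal angles this equals $\mathrm{dist}(\hat U_0,U_0)$ in the definition of the theorem, and likewise $\|Y\|_2=\mathrm{dist}(\hat V_0,V_0)$. So it suffices to bound $\max\{\|X\|_2,\|Y\|_2\}$.

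First identity: multiply $E=\hat M-M$ on the left by $\hat U_1^\top$ and on the right by $V_0$. Using $\hat U_1^\top\hat M=\hat\Sigma_1\hat V_1^\top$ and $MV_0=U_0\Sigma_0$, we get
\[
R_1:=\hat U_1^\top E V_0=\hat\Sigma_1 Y - X\Sigma_0 .
\]
Second identity: transpose, i.e.\ multiply $E^\top$ on the left by $\hat V_1^\top$ and on the right by $U_0$. Using $\hat V_1^\top\hat M^\top=\hat\Sigma_1^\top\hat U_1^\top$ and $M^\top U_0=V_0\Sigma_0$, we get
\[
R_2:=\hat V_1^\top E^\top U_0=\hat\Sigma_1^\top X - Y\Sigma_0 .
\]
Because $\hat U_1$ and $\hat V_1$ have orthonormal columns, $\|R_1\|_2\le\|EV_0\|_2$ and $\|R_2\|_2\le\|E^\top U_0\|_2$.

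Next take norms. From $\|X\Sigma_0\|_2\ge\sigma_r(M)\|X\|_2$ and $\|\hat\Sigma_1Y\|_2\le\sigma_{r+1}(\hat M)\|Y\|_2$ we obtain
\[
\sigma_r(M)\|X\|_2\le\|R_1\|_2+\sigma_{r+1}(\hat M)\|Y\|_2,
\]
and symmetrically
\[
\sigma_r(M)\|Y\|_2\le\|R_2\|_2+\sigma_{r+1}(\hat M)\|X\|_2 .
\]
Let $\mu=\max\{\|X\|_2,\|Y\|_2\}$ and take the larger of the two inequalities. This gives $(\sigma_r(M)-\sigma_{r+1}(\hat M))\,\mu\le\max\{\|R_1\|_2,\|R_2\|_2\}$. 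The gap hypothesis then yields the first claimed bound, $\mu\le\max\{\|EV_0\|_2,\|E^\top U_0\|_2\}/\Delta$. The second bound follows from $\|EV_0\|_2,\|E^\top U_0\|_2\le\|E\|_2$.

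The only delicate point is setting up the identities in the right direction. The hypothesis mixes $\sigma_r$ of $M$ with $\sigma_{r+1}$ of $\hat M$, so the residuals must pair the unperturbed top subspaces $U_0,V_0$ with the perturbed bottom subspaces $\hat U_1,\hat V_1$, and not the reverse. One must also handle the rectangular shape of $\hat\Sigma_1$ when $m\ne n$; bounding it only through its spectral norm avoids any trouble. The remaining care is the equality of $\|(I-\hat U_0\hat U_0^\top)U_0\|_2$ and $\|(I-U_0U_0^\top)\hat U_0\|_2$ for equal-dimensional subspaces, which follows from the principal-angle characterization in the definition.
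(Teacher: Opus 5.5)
Your proof is correct. The two residual identities $\hat U_1^\top E V_0=\hat\Sigma_1 Y-X\Sigma_0$ and $\hat V_1^\top E^\top U_0=\hat\Sigma_1^\top X-Y\Sigma_0$ hold, the norm estimates combined with taking the larger of the two inequalities give exactly the stated bound, and the equal-dimension symmetry $\|(I-\hat U_0\hat U_0^\top)U_0\|_2=\|(I-U_0U_0^\top)\hat U_0\|_2$ correctly matches your $\|X\|_2$ to the paper's $\mathrm{dist}$. The paper gives no proof of its own and imports the result from \citet{wedin1972perturbation}, whose argument is this same residual/gap argument up to interchanging the roles of $M$ and $\hat M$; your version also covers the degenerate case $r=\min(m,n)$ that the paper later uses for $A\in\mathbb{R}^{r\times n}$, where $\hat U_1$ is empty, $\sigma_{r+1}(\hat M)=0$, and only the second identity carries information.
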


\begin{theorem}[Weyl's inequality]
  \label{thm:weyl}
  Let $M,\hat{M}\in\mathbb{R}^{m\times n}$ with $\hat{M}=M+E$.  Then
  \[
    |\sigma_i(M)-\sigma_i(\hat{M})|
    \le \|E\|_2,
    \qquad \forall\,i=1,\ldots,\min(m,n).
  \]
  Consequently,
  \[
    \sigma_r(M)-\sigma_{r+1}(\hat{M})
    \ge \sigma_r(M)-\sigma_{r+1}(M)-\|E\|_2.
  \]
\end{theorem}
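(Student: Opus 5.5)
We use the variational characterization of singular values, in either its Courant--Fischer form or its Eckart--Young form, and the triangle inequality. No earlier result of the paper is needed.

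\textbf{Step 1: variational formula.} Recall that for any $M\in\mathbb{R}^{m\times n}$ and $1\le i\le\min(m,n)$,
\[
  \sigma_i(M)=\max_{\substack{S\subseteq\mathbb{R}^n\\ \dim S=i}}\ \min_{\substack{x\in S\\ \|x\|=1}}\|Mx\|.
\]
This follows from the Courant--Fischer min--max theorem applied to the symmetric matrix $M^\top M$, whose eigenvalues are $\sigma_i(M)^2$, together with the identity $x^\top M^\top M x=\|Mx\|^2$. Taking square roots preserves the ordering. Equivalently, one may use $\sigma_i(M)=\min_{\operatorname{rank}K\le i-1}\|M-K\|_2$.

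\textbf{Step 2: one-sided bound.} Fix $i$, and let $S^\star$ be a maximizing $i$-dimensional subspace for $M$. For every unit vector $x\in S^\star$,
\[
  \|\hat{M}x\|\ge\|Mx\|-\|Ex\|\ge\|Mx\|-\|E\|_2 .
\]
Taking the minimum over unit $x\in S^\star$ gives
\[
  \min_{x\in S^\star,\ \|x\|=1}\|\hat{M}x\|\ \ge\ \sigma_i(M)-\|E\|_2 .
\]
The left-hand side is at most $\sigma_i(\hat{M})$, by the max in the formula for $\hat{M}$. Hence $\sigma_i(\hat{M})\ge\sigma_i(M)-\|E\|_2$.

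\textbf{Step 3: symmetric bound.} Write $M=\hat{M}+(-E)$ and note that $\|-E\|_2=\|E\|_2$. Step 2 with the roles of $M$ and $\hat{M}$ exchanged gives $\sigma_i(M)\ge\sigma_i(\hat{M})-\|E\|_2$. Combining the two one-sided bounds yields $|\sigma_i(M)-\sigma_i(\hat{M})|\le\|E\|_2$.

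\textbf{Step 4: the consequence.} Apply the inequality with $i=r+1$ to get $\sigma_{r+1}(\hat{M})\le\sigma_{r+1}(M)+\|E\|_2$. Subtracting both sides from $\sigma_r(M)$ gives
\[
  \sigma_r(M)-\sigma_{r+1}(\hat{M})\ \ge\ \sigma_r(M)-\sigma_{r+1}(M)-\|E\|_2 .
\]
If $r+1>\min(m,n)$, we use the convention $\sigma_{r+1}=0$, and the statement is immediate.

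The only nontrivial ingredient is the min--max characterization in Step 1. We will cite it as classical rather than reprove it; everything after it is the triangle inequality and a symmetry argument.
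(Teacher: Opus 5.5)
Your proof is correct. The max--min characterization on a maximizing $i$-dimensional subspace, the bound $\|\hat{M}x\|\ge\|Mx\|-\|E\|_2$, and the swap $M=\hat{M}+(-E)$ give $|\sigma_i(M)-\sigma_i(\hat{M})|\le\|E\|_2$, and taking $i=r+1$ yields the gap inequality.

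The paper gives no proof of this statement; it lists it as a classical result next to Wedin's theorem. Your Courant--Fischer argument therefore supplies the standard argument the paper takes for granted.

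Your convention $\sigma_{r+1}=0$ when $r+1>\min(m,n)$ is also the one the paper implicitly relies on. In Lemma~\ref{lem:projector-perturbation} it sets $\sigma_{r+1}(A')=0$ for the $r\times n$ matrix $A'$.
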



\paragraph{Notation.}
At each outer iteration $k$ the algorithm produces:
\begin{itemize}
  \item $G_W^k:=\nabla\mathcal{L}(W_k)$ is the full-space gradient (never
    formed explicitly).
  \item $G_A^k:=B_k^\top G_W^k$ and $G_B^k:=G_W^k A_k^\top$ are the factor
    gradients, i.e.\ $\partial\mathcal{L}/\partial A_k$ and
    $\partial\mathcal{L}/\partial B_k$.
  \item $g_k:=\mathcal{P}_{T_k}(G_W^k)$ is the projection of the gradient
    onto the tangent space $T_k=\mathcal{T}_{W_k}\mathcal{M}_r$, which
    serves as the stationarity measure.
  \item $X_0:=\mathcal{P}_{T_k}(G_W^k)=g_k$ initializes the
    inner loop.
  \item $X_t:=\mathcal{P}_{T_k}(\mathrm{msign}(X_{t-1}))$,
    $t=1,\ldots,\tau$, are the alternating-projection iterates, and
    $X_\tau$ is the final inner-loop output used as the update direction.
  \item $\Delta W_k:=-\eta_k X_\tau$ is the full-space weight increment.
  \item $\Delta B_k,\Delta A_k$ are the factor increments recovered from
    $\Delta W_k$ via the canonical split
    (Eqs.~\eqref{eq:lora-delta-b}--\eqref{eq:lora-delta-a}).
\end{itemize}
We also define the constants
\[
  c_1:=\frac{1}{\delta^2},\quad
  c_2:=1+\sqrt{2}\,c_1,\quad
  C:=2G_{\max}rc_1+Mc_2^2.
\]


\subsection{Proof of Proposition~\ref{prop:transformation-invariance}}

We track the state through one pass of Algorithm~\ref{alg:lora-tsd-simple} and
show that each step produces the same result for $(B,A)$ and for the
reparametrized factors $(B',A')=(BR^{-1},RA)$, which by construction satisfy
$B'A'=BA=W-W_0$. We drop the iteration index $k$ and write $\eta$ for the step
size.

The two factorizations describe the same weight $W$,
so $G_W=\nabla_W\mathcal{L}(W)$ is identical for both. The factor gradients differ,
\[
  G_{A'}=(B')^\top G_W=R^{-\top}G_A,
  \qquad
  G_{B'}=G_W(A')^\top=G_B R^\top,
\]
but they enter the algorithm only through $G_W$, as we now use.

Consider the tangent projectors of
Eq.~\eqref{eq:lora-tangent-projector} for the primed factors. Using
$(A'(A')^\top)^{-1}=R^{-\top}(AA^\top)^{-1}R^{-1}$ and
$((B')^\top B')^{-1}=R(B^\top B)^{-1}R^\top$,
\[
\begin{aligned}
  P_{A'}
  &=(A')^\top(A'(A')^\top)^{-1}A'\\
  &=A^\top R^\top R^{-\top}(AA^\top)^{-1}R^{-1}RA
   =P_A,\\
  P_{B'}
  &=B'((B')^\top B')^{-1}(B')^\top\\
  &=BR^{-1}R(B^\top B)^{-1}R^\top R^{-\top}B^\top
   =P_B.
\end{aligned}
\]
Hence $\mathcal{P}_{T'}=\mathcal{P}_T$: the projector, and therefore the tangent
space itself, depends only on $W$.

The inner loop is unaffected as well. The initialization is
$X_0=\mathcal{P}_T(G_W)$, which is unchanged since both $\mathcal{P}_T$ and $G_W$
are. The iteration $X_{t+1}=\mathcal{P}_T(\mathrm{msign}(X_t))$ applies the same
operator $\mathcal{P}_T$ and the reparametrization-free map $\mathrm{msign}$, so by
induction every $X_t$, and in particular $X_\tau$, is identical for both
factorizations. Consequently the tangent update
$\Delta W=-\eta X_\tau$ is identical.

The reconstruction step transforms covariantly. With
$A^\dagger=A^\top(AA^\top)^{-1}$ and
$B^\dagger=(B^\top B)^{-1}B^\top$, the same manipulations as above give
$(A')^\dagger=A^\dagger R^{-1}$ and $(B')^\dagger=RB^\dagger$. Hence
\[
  \Delta B'=\Delta W(A')^\dagger=\Delta B\,R^{-1},
\]
and, since $\Delta B'A'=\Delta B R^{-1}RA=\Delta B A$,
\[
  \Delta A'=(B')^\dagger(\Delta W-\Delta B'A')
  =R\,B^\dagger(\Delta W-\Delta B A)=R\,\Delta A.
\]

Finally, the updated factors are
$B'+\Delta B'=(B+\Delta B)R^{-1}$ and $A'+\Delta A'=R(A+\Delta A)$, so
\[
  (B'+\Delta B')(A'+\Delta A')
  =(B+\Delta B)R^{-1}R(A+\Delta A)
  =(B+\Delta B)(A+\Delta A).
\]
Adding $W_0$ gives $W_{k+1}'=W_{k+1}$, which proves the claim. \qed


\subsection{Proof of Proposition~\ref{prop:lora-pro-equivalence}}

Consider the Frobenius-norm version of the tangent surrogate
Eq.~\eqref{eq:lora-surrogate-objective}:
\[
  \min_{\Delta A,\Delta B}
  \Bigl\{
    \langle G_A,\Delta A\rangle
    +
    \langle G_B,\Delta B\rangle
    +
    \frac{1}{2\lambda}
    \|\Delta B A+B\Delta A\|_F^2
  \Bigr\}.
\]
The linear terms rewrite as
\[
\begin{aligned}
  \langle G_A,\Delta A\rangle+\langle G_B,\Delta B\rangle
  =\langle B^\top G_W,\Delta A\rangle+\langle G_W A^\top,\Delta B\rangle
  =\langle G_W,\Delta B A+B\Delta A\rangle.
\end{aligned}
\]
Setting $\widetilde G:=\Delta B A+B\Delta A$ and completing the square,
\[
  \langle G_W,\widetilde G\rangle
  +\frac{1}{2\lambda}\|\widetilde G\|_F^2
  =
  \frac{1}{2\lambda}\|\widetilde G+\lambda G_W\|_F^2
  -\frac{\lambda}{2}\|G_W\|_F^2.
\]
Dropping the constant, the problem reduces to
\[
  \min_{\Delta A,\Delta B}\|\Delta B A+B\Delta A+\lambda G_W\|_F^2.
\]
Substituting $\widehat G_A:=-\Delta A/\lambda$ and
$\widehat G_B:=-\Delta B/\lambda$ gives
\[
  \|\Delta B A+B\Delta A+\lambda G_W\|_F^2
  =
  \lambda^2\|B\widehat G_A+\widehat G_B A-G_W\|_F^2.
\]
Since $\lambda^2>0$ does not affect the minimizers, the problem is equivalent to
\[
  \min_{\widehat G_A,\widehat G_B}
  \|B\widehat G_A+\widehat G_B A-G_W\|_F^2,
\]
which is precisely the auxiliary problem solved by LoRA-Pro. Its induced
weight-space direction is the orthogonal projection
\[
  B\widehat G_A^\star+\widehat G_B^\star A
  =\mathcal{P}_T(G_W).
\]
It follows from the substitution above that the surrogate solution is
\[
  X^\star=-\lambda\mathcal{P}_T(G_W).
\]
By contrast, a LoRA-Pro step with learning rate $\eta$ induces
$-\eta\mathcal{P}_T(G_W)=(\eta/\lambda)X^\star$. Thus the directions agree for
all positive $\eta$ and $\lambda$, while the increments are equal when
$\eta=\lambda$. \qed

\subsection{Setup for Theorem~\ref{thm:lora-tsd-convergence}}

\paragraph{Assumptions.}
We work under \textbf{(A1)}--\textbf{(A4)} and the theorem's step-size condition,
as stated in Section~\ref{subsec:convergence-guarantees}.
For convenience we recall the problem-specific ones.
\textbf{(A3)} requires the LoRA factors to stay full-rank:
$\sigma_r(A_k),\sigma_r(B_k)\ge\delta>0$.
\textbf{(A4)} is the descent-margin condition on the
alternating-projection output:
\[
  \langle g_k,X_\tau\rangle\ge c_0\|g_k\|_*,\qquad c_0>0.
\]
The step-size condition requires
$0<\eta_k\le\min\{1,\delta^2/(2\sqrt{2})\}$.

The following lemmas are used in the proof of
Theorem~\ref{thm:lora-tsd-convergence}.

\begin{lemma}[Row-space projector perturbation]
  \label{lem:projector-perturbation}
  Let $A \in \mathbb{R}^{r \times n}$ have rank $r$ with $\sigma_r(A)>0$,
  and let $A' = A + E$ satisfy $\|E\|_2 \le \sigma_r(A)/2$.
  Denote by $P_A = A^\top(AA^\top)^{-1}A$ and
  $P_{A'} = (A')^\top(A'(A')^\top)^{-1}A'$ the orthogonal projectors
  onto the row spaces of $A$ and $A'$, respectively.  Then
  \[
    \|P_{A'}-P_A\|_2 \le \frac{\|E\|_2}{\sigma_r(A)}.
  \]
\end{lemma}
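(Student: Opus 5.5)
The plan is a direct sine-of-principal-angle argument, comparing the two row spaces through the unperturbed matrix $A$ rather than through $A'$, since that is what produces the denominator $\sigma_r(A)$ instead of $\sigma_r(A')$.

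\textbf{Step 1: $A'$ has full row rank.} By Weyl's inequality (Theorem~\ref{thm:weyl}), $\sigma_r(A')\ge\sigma_r(A)-\|E\|_2\ge\sigma_r(A)/2>0$. Hence $A'$ has rank $r$, so $P_{A'}$ is well defined. Both $P_A$ and $P_{A'}$ are then orthogonal projectors of the same rank $r$.

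\textbf{Step 2: reduce to a one-sided quantity.} For two orthogonal projectors $P,Q$ of equal rank, the standard fact (via the CS decomposition, or the principal-angle definition recalled above) is
\[
  \|P-Q\|_2=\|(I-Q)P\|_2=\|(I-P)Q\|_2=\sin\theta_r .
\]
I will use the form $\|P_{A'}-P_A\|_2=\|(I-P_{A'})P_A\|_2$. This choice matters. The alternative $\|(I-P_A)P_{A'}\|_2$ leads to the weaker bound $\|E\|_2/(\sigma_r(A)-\|E\|_2)$, which is off by up to a factor of $2$.

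\textbf{Step 3: the core estimate.} Any unit vector $x$ in $\mathrm{row}(A)$ can be written as $x=A^\top y$, with $\|x\|_2\ge\sigma_r(A)\|y\|_2$, because $A^\top$ is injective with smallest singular value $\sigma_r(A)$. Since $A^\top=(A')^\top-E^\top$ and $(I-P_{A'})(A')^\top=0$,
\[
  (I-P_{A'})x=-(I-P_{A'})E^\top y,
  \qquad
  \|(I-P_{A'})x\|_2\le\|E\|_2\|y\|_2\le\frac{\|E\|_2}{\sigma_r(A)}.
\]
Taking the supremum over unit $x\in\mathrm{row}(A)$ bounds $\|(I-P_{A'})P_A\|_2$, and Step 2 then gives the claim.

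The only delicate point is the equal-rank identity in Step 2. The hypothesis $\|E\|_2\le\sigma_r(A)/2$ is used only through Step 1, to guarantee that the ranks agree. If a self-contained justification of the identity is wanted, I would quote it from Davis--Kahan/Wedin-type references, consistent with the $\mathrm{dist}$ notation in Theorem~\ref{thm:wedin}.
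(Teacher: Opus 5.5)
Your proof is correct, and it departs from the paper's where $\sin\theta_{\max}$ gets bounded.

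\textbf{Shared step.} Both arguments first reduce $\|P_{A'}-P_A\|_2$ to the sine of the largest principal angle. The paper proves this equal-rank identity: it splits $P_{A'}-P_A=(I-P_A)P_{A'}-P_A(I-P_{A'})$ into two pieces with orthogonal ranges and cites Bj\"orck--Golub for the one-sided norms. You quote the identity as the standard characterization of subspace distance for equal dimensions, which is acceptable.

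\textbf{Where the routes differ.} The paper invokes Wedin's $\sin\Theta$ theorem with $M=A$, $\hat M=A'$ and gap $\Delta=\sigma_r(A)-\sigma_{r+1}(A')=\sigma_r(A)$. The last equality holds because $A'$ has only $r$ rows. You instead write $x=A^\top y$ and use $(I-P_{A'})(A')^\top=0$ to get $\|(I-P_{A'})P_A\|_2\le\|E\|_2/\sigma_r(A)$ in one line. This is exactly what Wedin's bound collapses to here: the residual is $E^\top$ applied to the full (square) left singular basis of $A$, and $A'$ has no tail spectrum.

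\textbf{What each route buys.} Your route is self-contained and shows directly why the denominator is $\sigma_r(A)$ rather than $\sigma_r(A')$. Your remark about which one-sided quantity to bound is the elementary counterpart of choosing the gap convention $\sigma_r(M)-\sigma_{r+1}(\hat M)$ in Wedin's theorem. The paper's route needs that convention chosen correctly in a cited theorem, and its only extra generality (a nonzero tail spectrum in the perturbed matrix) is irrelevant here.

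Your Step 1 also makes explicit something the paper leaves implicit. The hypothesis $\|E\|_2\le\sigma_r(A)/2$ is only needed so that $A'$ has rank $r$, which makes $P_{A'}$ well defined and gives the two row spaces equal dimension. Any $\|E\|_2<\sigma_r(A)$ would suffice.
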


\begin{proof}
Let $\theta_{\max}$ denote the largest principal angle between the row spaces
of $A$ and $A'$.
Write $P_{A'}-P_A=(I-P_A)P_{A'}-P_A(I-P_{A'})$.
For any unit vector $x$, $(I-P_A)P_{A'}x\in\mathrm{col}(I-P_A)$ and
$P_A(I-P_{A'})x\in\mathrm{col}(P_A)$ are orthogonal, so
\[
  \|(P_{A'}-P_A)x\|^2
  =
  \|(I-P_A)P_{A'}x\|^2+\|P_A(I-P_{A'})x\|^2.
\]
Let $Q_{A'}$ be any orthonormal basis for the row space of $A'$.
By \citet{bjorck1973numerical} (eq.~13), $\|(I-P_A)P_{A'}\|_2
=\|(I-P_A)Q_{A'}\|_2=\sin\theta_{\max}$,
and by symmetry $\|P_A(I-P_{A'})\|_2=\sin\theta_{\max}$.
Bounding each term by the corresponding spectral norm,
\[
\begin{aligned}
  \|(I-P_A)P_{A'}x\|^2
  &\le \|(I-P_A)P_{A'}\|_2^2\,\|P_{A'}x\|^2
  = \sin^2\theta_{\max}\,\|P_{A'}x\|^2,\\
  \|P_A(I-P_{A'})x\|^2
  &\le \|P_A(I-P_{A'})\|_2^2\,\|(I-P_{A'})x\|^2
  = \sin^2\theta_{\max}\,\|(I-P_{A'})x\|^2,
\end{aligned}
\]
and using $\|P_{A'}x\|^2+\|(I-P_{A'})x\|^2=\|x\|^2=1$,
\[
\begin{aligned}
  \|(P_{A'}-P_A)x\|^2
  \le
  \sin^2\theta_{\max}\bigl(\|P_{A'}x\|^2+\|(I-P_{A'})x\|^2\bigr)
  =\sin^2\theta_{\max},
\end{aligned}
\]
with equality at the corresponding principal vector, so
$\|P_{A'}-P_A\|_2=\sin\theta_{\max}$.

Apply Theorem~\ref{thm:wedin} with $M=A$, $\hat{M}=A'$,
taking $V_0$ to be the orthonormal basis for the row space of $A$.
Since $\mathrm{dist}(\hat{V}_0,V_0)=\sin\theta_{\max}$ and
$\Delta=\sigma_r(A)-\sigma_{r+1}(A')=\sigma_r(A)$,
\[
  \sin\theta_{\max}\le\frac{\|E\|_2}{\Delta}=\frac{\|E\|_2}{\sigma_r(A)}.
\]
\end{proof}

\begin{lemma}[Spectral norm of the alternating-projection output]
  \label{lem:xk-spectral-bound}
  For every outer iteration $k$, $\sigma_1(X_\tau)\le\sqrt{2}$.
\end{lemma}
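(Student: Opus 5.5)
We need $\sigma_1(X_\tau)\le\sqrt2$. The case $\tau=0$ (where $X_0=g_k$) cannot be bounded by $\sqrt2$ in general, so we read the lemma for $\tau\ge1$, as in Algorithm~\ref{alg:lora-tsd-simple}.

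For $\tau\ge1$, the output has the form $X_\tau=\mathcal{P}_T(Z)$ with $Z=\mathrm{msign}(X_{\tau-1})$ a partial isometry, so $\|Z\|_2\le1$. (If $X_{\tau-1}$ is rank deficient, $\mathrm{msign}$ uses only the nonzero singular directions, and the bound $\|Z\|_2\le1$ still holds.) The claim therefore reduces to showing that the tangent projector has spectral-to-spectral operator norm at most $\sqrt2$. This bound is uniform in $k$ and needs none of \textbf{(A1)}--\textbf{(A4)}.

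The bound on the projector comes from an orthogonal block decomposition. Use the regrouping in Eq.~\eqref{eq:lora-projector-2term}:
\[
\mathcal{P}_T(Z)=P_BZ(I-P_A)+ZP_A .
\]
Call the two terms $Y_1=P_BZ(I-P_A)$ and $Y_2=ZP_A$. Since $(I-P_A)P_A=0$, we get $Y_1Y_2^\top=0$, so the two terms have orthogonal row spaces. Hence for any unit vector $x$,
\[
\|(Y_1+Y_2)x\|^2\le\bigl(\|Y_1\|_2\|(I-P_A)x\|+\|Y_2\|_2\|P_Ax\|\bigr)^2,
\]
because $Y_1x=Y_1(I-P_A)x$ and $Y_2x=Y_2P_Ax$. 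Each term satisfies $\|Y_i\|_2\le\|Z\|_2\le1$, since orthogonal projectors are contractions. Write $a=\|(I-P_A)x\|$ and $b=\|P_Ax\|$, so $a^2+b^2=1$. Cauchy--Schwarz then gives $a+b\le\sqrt2$, and therefore $\|X_\tau\|_2\le\sqrt2$.

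A cleaner route bounds $\|Y_1+Y_2\|_2^2$ through $(Y_1+Y_2)^\top(Y_1+Y_2)$, which equals $Y_1^\top Y_1+Y_2^\top Y_2$ plus cross terms. That requires $Y_1^\top Y_2$ to vanish, which does not hold in general. So I will use the per-vector argument above, which only needs the row-space split. One could try to sharpen the constant, but $\sqrt2$ is exactly what the step-size condition $\delta^2/(2\sqrt2)$ is built on, so I will not attempt it.

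The only point needing care is the justification that $\|Y_1\|_2\le1$ and $\|Y_2\|_2\le1$. It follows from submultiplicativity: $\|P_B\|_2\le1$, $\|I-P_A\|_2\le1$ and $\|P_A\|_2\le1$, because $P_A$ and $P_B$ are orthogonal projectors under \textbf{(A3)} (full-rank factors).
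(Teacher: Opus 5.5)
Your proof is correct and is essentially the paper's argument: the paper splits $\mathcal{P}_T(Z)x=P_BZx+(I-P_B)ZP_Ax$ into two pieces with orthogonal column spaces and applies Pythagoras to get $\|\mathcal{P}_T(Z)x\|_2^2\le\|Z\|_2^2(1+\|P_Ax\|_2^2)\le 2\|Z\|_2^2$, whereas you use the mirror-image row-space split with the triangle inequality and $a+b\le\sqrt{2}$. The ``cleaner route'' you set aside also works if you use the other Gram matrix: $Y_1Y_2^\top=0$ gives $(Y_1+Y_2)(Y_1+Y_2)^\top=Y_1Y_1^\top+Y_2Y_2^\top$, hence $\|Y_1+Y_2\|_2^2\le\|Y_1\|_2^2+\|Y_2\|_2^2\le 2$.
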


\begin{proof}
For any unit vector $x$, the two components
$P_B Zx$ and $(I-P_B)ZP_Ax$
of $\mathcal{P}_T(Z)x$ are orthogonal. Hence
\[
\begin{aligned}
  \|\mathcal{P}_T(Z)x\|_2^2
  =\|P_BZx\|_2^2+\|(I-P_B)ZP_Ax\|_2^2
  \le\|Z\|_2^2(1+\|P_Ax\|_2^2)\le 2\|Z\|_2^2.
\end{aligned}
\]
Since $X_\tau=\mathcal{P}_{T_k}(\mathrm{msign}(X_{\tau-1}))$ and
$\|\mathrm{msign}(X_{\tau-1})\|_2=1$, we get
$\sigma_1(X_\tau)\le\sqrt{2}$.
\end{proof}

\begin{lemma}[Factor-remainder bound]
  \label{lem:remainder-bound}
  Under \textnormal{\textbf{(A3)}}, the factor increments satisfy
  \[
    \|\Delta B_k\|_2\le \frac{\sigma_1(\Delta W_k)}{\sigma_r(A_k)},
    \qquad
    \|\Delta A_k\|_2\le \frac{\sigma_1(\Delta W_k)}{\sigma_r(B_k)},
  \]
  and consequently
  \[
    \|\Delta B_k\Delta A_k\|_2
    \le c_1\,\sigma_1(\Delta W_k)^2
    \le 2c_1\eta_k^2,
  \]
  where $c_1=1/\delta^2$.
\end{lemma}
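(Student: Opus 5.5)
The plan is to read the bounds directly off the canonical split in Eqs.~\eqref{eq:lora-delta-b}--\eqref{eq:lora-delta-a}, using only submultiplicativity of the spectral norm and the nondegeneracy in \textbf{(A3)}. Drop the index $k$ and write $A^\dagger=A^\top(AA^\top)^{-1}$ and $B^\dagger=(B^\top B)^{-1}B^\top$. Both pseudoinverses have spectral norm exactly $1/\sigma_r(A)$ and $1/\sigma_r(B)$. This follows from the thin SVD $A=U\Sigma V^\top$, which gives $A^\dagger=V\Sigma^{-1}U^\top$, and similarly for $B$.

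For $\Delta B$, I would write $\Delta B=\Delta W A^\dagger$. Submultiplicativity then gives $\|\Delta B\|_2\le\sigma_1(\Delta W)/\sigma_r(A)$ at once.

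For $\Delta A$, I would first rewrite the residual. Since $\Delta B A=\Delta W A^\dagger A=\Delta W P_A$, we have $\Delta W-\Delta BA=\Delta W(I-P_A)$, and so $\Delta A=B^\dagger\Delta W(I-P_A)$. The matrix $I-P_A$ is an orthogonal projector, so $\|I-P_A\|_2\le1$. Hence $\|\Delta A\|_2\le\|B^\dagger\|_2\,\|\Delta W\|_2=\sigma_1(\Delta W)/\sigma_r(B)$. This projector rewriting is the only point needing care. Bounding $\|\Delta W-\Delta BA\|_2$ by the triangle inequality would lose a factor of $2$, so I will use the projector identity instead.

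For the product, submultiplicativity gives
\[
\|\Delta B\Delta A\|_2\le\frac{\sigma_1(\Delta W)^2}{\sigma_r(A)\,\sigma_r(B)}\le\frac{\sigma_1(\Delta W)^2}{\delta^2}=c_1\sigma_1(\Delta W)^2,
\]
where the middle inequality uses \textbf{(A3)}.

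To finish, note that $\Delta W_k=-\eta_kX_\tau$. Lemma~\ref{lem:xk-spectral-bound} gives $\sigma_1(X_\tau)\le\sqrt2$, so $\sigma_1(\Delta W_k)^2\le2\eta_k^2$. This yields the final bound $\|\Delta B_k\Delta A_k\|_2\le 2c_1\eta_k^2$.

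No step is a serious obstacle. The only subtlety is the projector rewriting of the residual, which gives the clean constant in the $\Delta A$ bound.
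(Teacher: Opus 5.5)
Your proposal is correct and matches the paper's proof step for step. It uses the same pseudoinverse norms $1/\sigma_r(A_k)$ and $1/\sigma_r(B_k)$, the same rewriting $\Delta A_k=B_k^\dagger\Delta W_k(I-P_{A_k})$ via $\Delta B_kA_k=\Delta W_kP_{A_k}$, and the same appeal to Lemma~\ref{lem:xk-spectral-bound} to get $\sigma_1(\Delta W_k)\le\sqrt{2}\,\eta_k$.
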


\begin{proof}
Recall the canonical factor-update split
(Eqs.~\eqref{eq:lora-delta-b}--\eqref{eq:lora-delta-a}):
\[
  \Delta B_k = \Delta W_k A_k^\dagger,\qquad
  \Delta A_k = B_k^\dagger(\Delta W_k - \Delta B_k A_k),
\]
where $A_k^\dagger=A_k^\top(A_kA_k^\top)^{-1}$ and
$B_k^\dagger=(B_k^\top B_k)^{-1}B_k^\top$ are the right and left
Moore–Penrose pseudoinverses, satisfying
$\|A_k^\dagger\|_2=1/\sigma_r(A_k)$ and $\|B_k^\dagger\|_2=1/\sigma_r(B_k)$.

By submultiplicativity,
\[
  \|\Delta B_k\|_2
  \le\|\Delta W_k\|_2\|A_k^\dagger\|_2
  =\frac{\sigma_1(\Delta W_k)}{\sigma_r(A_k)}.
\]

Since $\Delta B_kA_k=\Delta W_kP_{A_k}$, the second increment satisfies
$\Delta A_k=B_k^\dagger\Delta W_k(I-P_{A_k})$. Therefore,
\[
  \|\Delta A_k\|_2
  \le\|B_k^\dagger\|_2\|\Delta W_k\|_2\|I-P_{A_k}\|_2
  \le\frac{\sigma_1(\Delta W_k)}{\sigma_r(B_k)}.
\]

Multiplying the two bounds and using
$\sigma_r(A_k),\sigma_r(B_k)\ge\delta$,
\[
\begin{aligned}
  \|\Delta B_k\Delta A_k\|_2
  \le\|\Delta B_k\|_2\|\Delta A_k\|_2
  \le\frac{\sigma_1(\Delta W_k)^2}{\sigma_r(A_k)\sigma_r(B_k)}
  \le\frac{1}{\delta^2}\,\sigma_1(\Delta W_k)^2
  =c_1\,\sigma_1(\Delta W_k)^2.
\end{aligned}
\]
Finally, $\sigma_1(\Delta W_k)=\eta_k\sigma_1(X_\tau)\le\sqrt{2}\,\eta_k$
by Lemma~\ref{lem:xk-spectral-bound}, so
$c_1\sigma_1(\Delta W_k)^2\le 2c_1\eta_k^2$.
\end{proof}

\begin{lemma}[Lipschitz continuity of $\|g_k\|_*$]
  \label{lem:gk-lipschitz}
  Under \textnormal{\textbf{(A1)}}--\textnormal{\textbf{(A4)}} and the step-size
  condition,
  \[
    \bigl|\|g_{k+1}\|_*-\|g_k\|_*\bigr|\le L'\eta_k,
  \]
  where
  \[
    L'=4r\!\left(2Mc_2+\frac{2\sqrt{2}G_{\max}}{\delta^2}\right).
  \]
\end{lemma}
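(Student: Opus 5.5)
We reduce $\bigl|\|g_{k+1}\|_*-\|g_k\|_*\bigr|\le\|g_{k+1}-g_k\|_*$ by the reverse triangle inequality. Every element of a tangent space has rank at most $2r$, so $g_{k+1}-g_k$ has rank at most $4r$. Hence $\|g_{k+1}-g_k\|_*\le 4r\,\|g_{k+1}-g_k\|_2$, and this is the source of the factor $4r$ in $L'$. It then suffices to show
\[
  \|g_{k+1}-g_k\|_2\le\eta_k\Bigl(2Mc_2+\tfrac{2\sqrt2 G_{\max}}{\delta^2}\Bigr).
\]
We split
\[
  g_{k+1}-g_k=\mathcal{P}_{T_{k+1}}(G_W^{k+1}-G_W^k)+(\mathcal{P}_{T_{k+1}}-\mathcal{P}_{T_k})(G_W^k),
\]
which separates a gradient-change term from a projector-change term.

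\textbf{Gradient-change term.} The argument in Lemma~\ref{lem:xk-spectral-bound} gives $\|\mathcal{P}_T(Z)\|_2\le\sqrt2\|Z\|_2\le 2\|Z\|_2$. By (A1), $\|G_W^{k+1}-G_W^k\|_2\le\|\cdot\|_*\le M\|W_{k+1}-W_k\|_2$. The actual weight change is $W_{k+1}-W_k=\Delta W_k+\Delta B_k\Delta A_k$. By Lemma~\ref{lem:xk-spectral-bound} and Lemma~\ref{lem:remainder-bound},
\[
  \|W_{k+1}-W_k\|_2\le\sqrt2\eta_k+2c_1\eta_k^2\le\sqrt2\eta_k(1+\sqrt2c_1\eta_k)\le\sqrt2\eta_k c_2,
\]
using $\eta_k\le1$. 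Multiplying these gives a contribution of at most $2\sqrt2 Mc_2\eta_k$ under the crude bound, or $2Mc_2\eta_k$ under the $\sqrt2$ projector bound. Either is absorbed into the first term of $L'$ after adjusting constants.

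\textbf{Projector-change term.} We use $\mathcal{P}_T(Z)=P_BZ+ZP_A-P_BZP_A$, which gives
\[
  (\mathcal{P}_{T_{k+1}}-\mathcal{P}_{T_k})(Z)=(P_{B'}-P_B)Z(I-P_{A'})+(I-P_B)Z(P_{A'}-P_A),
\]
after regrouping. Here primes denote step $k+1$; the regrouping is checked by expanding. The spectral norm of this difference is therefore at most $\|Z\|_2(\|P_{B'}-P_B\|_2+\|P_{A'}-P_A\|_2)$. Lemma~\ref{lem:projector-perturbation}, applied with $E=\Delta A_k$ (and transposed for $B$), needs $\|\Delta A_k\|_2\le\sigma_r(B_k)$-type smallness, i.e.\ $\|E\|_2\le\delta/2$. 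Lemma~\ref{lem:remainder-bound} gives $\|\Delta A_k\|_2\le\sqrt2\eta_k/\delta$. The step-size condition $\eta_k\le\delta^2/(2\sqrt2)$ yields exactly $\sqrt2\eta_k/\delta\le\delta/2$, which is precisely why that condition appears. Then $\|P_{A'}-P_A\|_2\le\sqrt2\eta_k/\delta^2$, and likewise for $B$. With $\|G_W^k\|_2\le G_{\max}$, the contribution is $2\sqrt2G_{\max}\eta_k/\delta^2$.

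\textbf{Main obstacle.} The delicate step is the projector term. One must verify the regrouped identity for the projector difference, so that no factor larger than $\|Z\|_2$ times the two projector gaps appears. One must also check that Lemma~\ref{lem:projector-perturbation} applies to $B$ by transposition, since it concerns column spaces there. The remaining work is constant bookkeeping to match $L'$ exactly; any slack in the $\sqrt2$ factors only tightens the bound.
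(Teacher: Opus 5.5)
Your proposal is correct and follows the paper's proof essentially step for step. It uses the same split into a gradient-change term, bounded to $2Mc_2\eta_k$ via the $\sqrt{2}$ projector estimate of Lemma~\ref{lem:xk-spectral-bound} and Lemma~\ref{lem:remainder-bound}, and a projector-change term, handled by the same regrouped identity plus Lemma~\ref{lem:projector-perturbation} under $\eta_k\le\delta^2/(2\sqrt{2})$, followed by the rank-$4r$ nuclear-norm conversion and the reverse triangle inequality. One correction: the crude factor-$2$ bound gives $2\sqrt{2}Mc_2\eta_k$, which does not yield the stated $L'$, so you must use the $\sqrt{2}$ bound, and that bound gives the constant exactly.
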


\begin{proof}
Decompose
\[
\begin{aligned}
  g_{k+1}-g_k
  =\mathcal{P}_{T_{k+1}}(G_W^{k+1})-\mathcal{P}_{T_k}(G_W^k)
  =\mathcal{P}_{T_{k+1}}(G_W^{k+1}-G_W^k)
    +(\mathcal{P}_{T_{k+1}}-\mathcal{P}_{T_k})(G_W^k).
\end{aligned}
\]

By Lemma~\ref{lem:xk-spectral-bound} and spectral Lipschitz continuity of
$\nabla\mathcal{L}$,
\[
\begin{aligned}
  \left\|\mathcal{P}_{T_{k+1}}(G_W^{k+1}-G_W^k)\right\|_2
  &\le\sqrt{2}\,\|G_W^{k+1}-G_W^k\|_2\\
  &\le\sqrt{2}\,M\,\|W_{k+1}-W_k\|_2\\
  &=\sqrt{2}\,M\,\|\Delta B_k A_k+B_k\Delta A_k+\Delta B_k\Delta A_k\|_2\\
  &\le\sqrt{2}\,M\bigl(\|\Delta W_k\|_2+\|\Delta B_k\Delta A_k\|_2\bigr)\\
  &\le\sqrt{2}\,M\bigl(\sqrt{2}\,\eta_k+2c_1\eta_k^2\bigr)\\
  &\le 2M\eta_k(1+\sqrt{2}\,c_1)=2Mc_2\eta_k,
\end{aligned}
\]
where we used $\|\Delta W_k\|_2=\eta_k\sigma_1(X_\tau)\le\sqrt{2}\,\eta_k$
(Lemma~\ref{lem:xk-spectral-bound}),
$\|\Delta B_k\Delta A_k\|_2\le c_1\sigma_1(\Delta W_k)^2\le 2c_1\eta_k^2$
(Lemma~\ref{lem:remainder-bound}),
and $\eta_k\le 1$ (the step-size condition).

Recall
$\mathcal{P}_T(Z)=P_BZ+ZP_A-P_BZP_A$.
Define
$\Delta P_A:=P_{A_{k+1}}-P_{A_k}$ and
$\Delta P_B:=P_{B_{k+1}}-P_{B_k}$.
Then
\[
\begin{aligned}
  (\mathcal{P}_{T_{k+1}}-\mathcal{P}_{T_k})(G_W^k)
  &=\Delta P_B G_W^k+G_W^k\Delta P_A
  -\bigl(P_{B_{k+1}}G_W^k P_{A_{k+1}}
  -P_{B_k}G_W^k P_{A_k}\bigr)\\
  &=\Delta P_B\,G_W^k(I-P_{A_{k+1}})
  +(I-P_{B_k})G_W^k\,\Delta P_A.
\end{aligned}
\]
Consequently,
\[
\begin{aligned}
  \left\|(\mathcal{P}_{T_{k+1}}-\mathcal{P}_{T_k})(G_W^k)\right\|_2
  \le \left\|\Delta P_B\,G_W^k(I-P_{A_{k+1}})\right\|_2
  + \left\|(I-P_{B_k})G_W^k\,\Delta P_A\right\|_2
  \le \bigl(\|\Delta P_A\|_2+\|\Delta P_B\|_2\bigr)G_{\max}.
\end{aligned}
\]

By Lemma~\ref{lem:projector-perturbation} (whose assumption
$\|\Delta A_k\|_2\le\sigma_r(A_k)/2$ is satisfied since
$\|\Delta A_k\|_2\le\sqrt{2}\eta_k/\delta\le\delta/2$
by Lemma~\ref{lem:remainder-bound} and the step-size condition),
\[
  \|\Delta P_A\|_2
  =
  \|P_{A_{k+1}}-P_{A_k}\|_2
  \le
  \frac{\|\Delta A_k\|_2}{\sigma_r(A_k)}.
\]
Using the factor-update bound
$\|\Delta A_k\|_2\le\sqrt{2}\eta_k/\delta$,
\[
  \|\Delta P_A\|_2
  \le
  \frac{\sqrt{2}}{\delta^2}\,\eta_k.
\]
Similarly,
\[
  \|\Delta P_B\|_2
  \le
  \frac{\|\Delta B_k\|_2}{\sigma_r(B_k)}
  \le
  \frac{\sqrt{2}}{\delta^2}\,\eta_k.
\]
Therefore,
\[
  \left\|(\mathcal{P}_{T_{k+1}}-\mathcal{P}_{T_k})(G_W^k)\right\|_2
  \le
  \frac{2\sqrt{2}G_{\max}}{\delta^2}\,\eta_k.
\]

Combining the two contributions,
\[
  \|g_{k+1}-g_k\|_2
  \le
  \left(2Mc_2+\frac{2\sqrt{2}G_{\max}}{\delta^2}\right)\eta_k.
\]
Since $\mathrm{rank}(g_{k+1}-g_k)\le 4r$,
\[
  \|g_{k+1}-g_k\|_*\le 4r\,\|g_{k+1}-g_k\|_2.
\]
Finally, by the reverse triangle inequality,
\[
\begin{aligned}
  \bigl|\|g_{k+1}\|_*-\|g_k\|_*\bigr|
  \le \|g_{k+1}-g_k\|_*
  \le 4r\left(2Mc_2+\frac{2\sqrt{2}G_{\max}}{\delta^2}\right)\eta_k
  = L'\eta_k.
\end{aligned}
\]
\end{proof}


\subsection{Proof of Theorem~\ref{thm:lora-tsd-convergence}}

\begin{theorem*}[Convergence of LoRA-TSD, restated]
  Let $\{W_k\}$ be the iterates produced by Algorithm~\ref{alg:lora-tsd-simple}
  (the momentum-free variant, Algorithm~\ref{alg:lora-tsd} differs only in
  the addition of momentum)
  and let $g_k=\mathcal{P}_{T_k}(G_W^k)$.
  Under assumptions \textbf{(A1)}--\textbf{(A4)} and the step-size condition,
  for any $N\ge 1$:
  \begin{enumerate}
    \item \emph{Constant step size.} If $\eta_k\equiv\eta$, then
    \[
      \min_{0\le k<N}\|g_k\|_*
      \le\frac{\mathcal{L}(W_0)-\mathcal{L}_*}{c_0\eta N}+\frac{C}{c_0}\eta,
    \]
    and for $\eta=\sqrt{(\mathcal{L}(W_0)-\mathcal{L}_*)/(CN)}$,
    \[
      \min_{0\le k<N}\|g_k\|_*
      \le\frac{2}{c_0}\sqrt{\frac{C(\mathcal{L}(W_0)-\mathcal{L}_*)}{N}}
      =O(N^{-1/2}).
    \]
    \item \emph{Diminishing step size.} If $\sum_k\eta_k=\infty$ and
    $\sum_k\eta_k^2<\infty$, then $\lim_{k\to\infty}\|g_k\|_*=0$.
  \end{enumerate}
\end{theorem*}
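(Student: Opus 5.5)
The plan is to derive a one-step sufficient-decrease inequality
\[
  \mathcal{L}(W_{k+1})\le\mathcal{L}(W_k)-c_0\eta_k\|g_k\|_*+C\eta_k^2,
\]
and then telescope it. To obtain it, we apply the spectral smoothness bound (A1) to the actual increment
\[
  W_{k+1}-W_k=\Delta W_k+\Delta B_k\Delta A_k .
\]
Here $\Delta W_k=-\eta_kX_\tau$, because the canonical split reconstructs $\Delta B_kA_k+B_k\Delta A_k=\Delta W_k$ exactly: $\Delta W_k\in T_k$, so $\Delta W_k=\Delta W_kP_{A_k}+P_{B_k}\Delta W_k(I-P_{A_k})$. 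This gives
\[
  \mathcal{L}(W_{k+1})\le\mathcal{L}(W_k)+\langle G_W^k,\Delta W_k\rangle+\langle G_W^k,\Delta B_k\Delta A_k\rangle+\tfrac{M}{2}\|W_{k+1}-W_k\|_2^2 .
\]

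We then bound the three terms in turn.

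\emph{Linear term.} Since $X_\tau\in T_k$ and $\mathcal{P}_{T_k}$ is self-adjoint, $\langle G_W^k,X_\tau\rangle=\langle g_k,X_\tau\rangle$. Assumption (A4) then gives
\[
  \langle G_W^k,\Delta W_k\rangle\le-c_0\eta_k\|g_k\|_* .
\]

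\emph{Remainder term.} Use the duality $|\langle G,Y\rangle|\le\|G\|_2\|Y\|_*$ and the fact that $\Delta B_k\Delta A_k$ has rank at most $r$. Together with Lemma~\ref{lem:remainder-bound} this gives
\[
  |\langle G_W^k,\Delta B_k\Delta A_k\rangle|\le G_{\max}\,r\,\|\Delta B_k\Delta A_k\|_2\le 2G_{\max}rc_1\eta_k^2 .
\]

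\emph{Quadratic term.} Lemmas~\ref{lem:xk-spectral-bound} and~\ref{lem:remainder-bound}, with $\eta_k\le1$, give
\[
  \|W_{k+1}-W_k\|_2\le\sqrt2\eta_k+2c_1\eta_k^2\le\sqrt2\,c_2\eta_k .
\]
Hence $\tfrac M2\|W_{k+1}-W_k\|_2^2\le Mc_2^2\eta_k^2$.

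Summing the three bounds yields the inequality above with $C=2G_{\max}rc_1+Mc_2^2$, which matches the paper's constant. This constant-bookkeeping step is where I expect the main friction. The duality pairing must be chosen carefully: the smoothness assumption pairs the gradient with a nuclear-norm dual, so the rank factor $r$ has to enter in exactly one place.

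\emph{Constant step.} Sum the inequality for $k=0,\dots,N-1$ and use (A2):
\[
  c_0\eta\sum_{k<N}\|g_k\|_*\le\mathcal{L}(W_0)-\mathcal{L}_*+CN\eta^2 .
\]
Dividing by $c_0\eta N$ and bounding the minimum by the average gives the first bound. Plugging in $\eta=\sqrt{(\mathcal{L}(W_0)-\mathcal{L}_*)/(CN)}$ balances the two terms and gives the $2/c_0$ rate. That choice of $\eta$ also has to satisfy the step-size condition, which holds for $N$ large enough.

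\emph{Diminishing steps.} The same telescoping gives
\[
  \sum_k\eta_k\|g_k\|_*\le\frac{\mathcal{L}(W_0)-\mathcal{L}_*+C\sum_k\eta_k^2}{c_0}<\infty .
\]
Together with $\sum_k\eta_k=\infty$, this forces $\liminf_k\|g_k\|_*=0$.

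To upgrade this to a true limit, I would use the standard up-crossing argument with Lemma~\ref{lem:gk-lipschitz}. Suppose $\|g_k\|_*>2\epsilon$ infinitely often. Then there are infinitely many disjoint excursions, each going from below $\epsilon$ to above $2\epsilon$. Along such an excursion the bound $|\|g_{k+1}\|_*-\|g_k\|_*|\le L'\eta_k$ forces $\sum\eta_k\ge\epsilon/L'$ over indices where $\|g_k\|_*\ge\epsilon$. Each excursion therefore contributes at least $\epsilon^2/L'$ to $\sum_k\eta_k\|g_k\|_*$, contradicting its finiteness. Hence $\lim_k\|g_k\|_*=0$.
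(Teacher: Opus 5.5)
Your proposal follows the paper's proof essentially step for step. It uses the same split $W_{k+1}-W_k=\Delta W_k+\Delta B_k\Delta A_k$ and the same three bounds giving $C=2G_{\max}rc_1+Mc_2^2$. It then uses the same telescoping and the same up-crossing argument via Lemma~\ref{lem:gk-lipschitz}. Your remark that the tuned $\eta$ must also satisfy the step-size condition, which holds for large $N$, is a point the paper leaves implicit.

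One small correction in the last step: the $\eta$-sum forced by the Lipschitz bound includes the starting index $m_j$, where $\|g_{m_j}\|_*\le\epsilon$. So an excursion is only guaranteed to contribute $\epsilon^2/L'-\epsilon\eta_{m_j}$, not $\epsilon^2/L'$. This still exceeds $\epsilon^2/(2L')$ for all large $j$ because $\eta_k\to0$, which is exactly how the paper closes the argument.
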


Since $X_\tau\in T_k$ and $\mathcal{P}_{T_k}=\mathcal{P}_{T_k}^*$,
\[
\begin{aligned}
  \langle G_W^k,\Delta W_k\rangle
  =-\eta_k\langle G_W^k,X_\tau\rangle
    =-\eta_k\langle\mathcal{P}_{T_k}(G_W^k),X_\tau\rangle
  =-\eta_k\langle g_k,X_\tau\rangle
  \le -\eta_k c_0\|g_k\|_*,
\end{aligned}
\]
where the last step uses \textbf{(A4)}.

By Lemma~\ref{lem:remainder-bound}, $\|\Delta B_k\Delta A_k\|_2\le 2c_1\eta_k^2$.
Combining with \textbf{(A1)} applied to the increment
$W_{k+1}-W_k=\Delta W_k+\Delta B_k\Delta A_k$,
where $\Delta W_k=\Delta B_k A_k+B_k\Delta A_k=-\eta_k X_\tau$ is the first-order factor increment,
\[
  \mathcal{L}(W_{k+1})-\mathcal{L}(W_k)
  \le \langle G_W^k,\Delta W_k+\Delta B_k\Delta A_k\rangle
  +\frac{M}{2}\|\Delta W_k+\Delta B_k\Delta A_k\|_2^2.
\]
The three terms are bounded as follows.
\begin{align*}
  \langle G_W^k,\Delta W_k\rangle
    &\le -\eta_k c_0\|g_k\|_*, \\
  |\langle G_W^k,\Delta B_k\Delta A_k\rangle|
    &\le G_{\max}r\sigma_1(\Delta B_k\Delta A_k)
      \le 2G_{\max}rc_1\eta_k^2, \\
  \|\Delta W_k+\Delta B_k\Delta A_k\|_2
    &\le \sqrt{2}\eta_k+2c_1\eta_k^2
      \le\sqrt{2}\,c_2\eta_k,
\end{align*}
giving
\[
  \mathcal{L}(W_{k+1})-\mathcal{L}(W_k)
  \le
  -\eta_k c_0\|g_k\|_*+C\eta_k^2.
  \label{eq:descent}
\]

\paragraph{Part (i): constant step size.}
Summing over $k=0,\ldots,N-1$ and telescoping,
\[
  c_0\eta\sum_{k=0}^{N-1}\|g_k\|_*
  \le
  \mathcal{L}(W_0)-\mathcal{L}(W_N)+C\eta^2 N.
\]
Since $\mathcal{L}(W_N)\ge\mathcal{L}_*$ by \textbf{(A2)},
\[
  c_0\eta\sum_{k=0}^{N-1}\|g_k\|_*
  \le
  \mathcal{L}(W_0)-\mathcal{L}_*+C\eta^2 N.
\]
Dividing by $c_0\eta N$ and using $\min\le\frac{1}{N}\sum$,
\[
  \min_{0\le k<N}\|g_k\|_*
  \le
  \frac{1}{N}\sum_{k=0}^{N-1}\|g_k\|_*
  \le
  \frac{\mathcal{L}(W_0)-\mathcal{L}_*}{c_0\eta N}+\frac{C\eta}{c_0}.
\]
The right-hand side is minimized over $\eta>0$ at
$\eta=\sqrt{(\mathcal{L}(W_0)-\mathcal{L}_*)/(CN)}$, giving
\[
  \min_{0\le k<N}\|g_k\|_*
  \le
  \frac{2}{c_0}\sqrt{\frac{C(\mathcal{L}(W_0)-\mathcal{L}_*)}{N}}
  =O(N^{-1/2}).
\]

\paragraph{Part (ii): diminishing step size.}
Summing the descent inequality over all $k$,
\[
  c_0\sum_{k=0}^{\infty}\eta_k\|g_k\|_*
  \le
  \mathcal{L}(W_0)-\mathcal{L}_*+C\sum_{k=0}^{\infty}\eta_k^2<\infty,
\]
so $\sum_k\eta_k\|g_k\|_*<\infty$.
If $\liminf_k\|g_k\|_*=2\varepsilon>0$, then $\|g_k\|_*>\varepsilon$
for all large $k$, and
$\sum_k\eta_k\|g_k\|_*\ge\varepsilon\sum_k\eta_k=\infty$, a contradiction.
Hence $\liminf_k\|g_k\|_*=0$.

To upgrade to $\lim_k\|g_k\|_*=0$, suppose for contradiction that
$\limsup_k\|g_k\|_*=s>0$ and fix $0<\varepsilon<s/3$.
Since $\liminf_k\|g_k\|_*=0$, we can extract an infinite sequence of
disjoint intervals $[m_j,\ell_j]$ defined as follows: let $m_j$ be the
last index before the $j$-th exceedance of $2\varepsilon$ at which
$\|g_{m_j}\|_*\le\varepsilon$, and let $\ell_j$ be the first subsequent
index with $\|g_{\ell_j}\|_*\ge 2\varepsilon$.
By construction, $\|g_k\|_*\ge\varepsilon$ for all $k\in(m_j,\ell_j]$.
Applying the triangle inequality and Lemma~\ref{lem:gk-lipschitz},
\[
\begin{aligned}
  \varepsilon
  \le\|g_{\ell_j}\|_*-\|g_{m_j}\|_*
  \le\sum_{k=m_j}^{\ell_j-1}\bigl|\|g_{k+1}\|_*-\|g_k\|_*\bigr|
  \le L'\sum_{k=m_j}^{\ell_j-1}\eta_k.
\end{aligned}
\]
Hence $\sum_{k=m_j}^{\ell_j-1}\eta_k\ge\varepsilon/L'$, and since
$\|g_k\|_*\ge\varepsilon$ on $(m_j,\ell_j]$,
\[
  \sum_{k=m_j}^{\ell_j-1}\eta_k\|g_k\|_*
  \ge\varepsilon\sum_{k=m_j+1}^{\ell_j-1}\eta_k
  \ge\frac{\varepsilon^2}{L'}-\varepsilon\,\eta_{m_j}.
\]
Since $\eta_k\to 0$, the right-hand side exceeds $\varepsilon^2/(2L')$ for
all large $j$.
Summing over the infinitely many disjoint intervals gives
$\sum_k\eta_k\|g_k\|_*=\infty$, contradicting the finiteness established
above.
Therefore $\lim_k\|g_k\|_*=0$. \qed

\paragraph{Factor-gradient consequence.}
The chain rule gives $G_A^k=B_k^\top G_W^k$. Left-multiplying
\[
  \mathcal{P}_{T_k}(G_W^k)
  =P_{B_k}G_W^k+G_W^kP_{A_k}-P_{B_k}G_W^kP_{A_k}
\]
by $B_k^\top$ and using $B_k^\top P_{B_k}=B_k^\top$ yields
\[
  B_k^\top\mathcal{P}_{T_k}(G_W^k)
  =B_k^\top G_W^k+G_A^kP_{A_k}-G_A^kP_{A_k}
  =G_A^k.
\]
Analogously, $G_B^k=\mathcal{P}_{T_k}(G_W^k)A_k^\top$. If
$\sigma_1(A_k),\sigma_1(B_k)\le\Gamma$, then
\[
  \|G_A^k\|_*,\|G_B^k\|_*
  \le \Gamma\|\mathcal{P}_{T_k}(G_W^k)\|_*,
\]
which proves Remark~\ref{rem:factor_convergence}.


\subsection{Setup for Theorem~\ref{thm:lora-pro-convergence}}

\begin{algorithm}{LoRA-Pro}
\label{alg:lora-pro}
\begin{algorithmic}[1]
  \For{$k=0,1,\ldots$}
    \State $G_A^k\gets\nabla_A\mathcal{L}(W_k)$,\;
      $G_B^k\gets\nabla_B\mathcal{L}(W_k)$
    \State $G_{A_k}^*\gets B_k^\dagger G_W^k$, \;
      $G_{B_k}^*\gets(I-P_{B_k})G_W^kA_k^\dagger$
      \Comment{canonical $X=0$ minimizer}
    \State $A_{k+1}\gets A_k-\eta_k G_{A_k}^*$,\;
      $B_{k+1}\gets B_k-\eta_k G_{B_k}^*$
  \EndFor
\end{algorithmic}
\end{algorithm}

\paragraph{Assumptions.}
The proof uses \textbf{(A1)}--\textbf{(A3)} and the step-size condition. The
descent-margin condition \textbf{(A4)} is not needed because LoRA-Pro solves
the inner problem exactly.
For convenience we recall the relevant ones.
\textbf{(A1)} requires Frobenius smoothness and gradient boundedness:
$\|\nabla\mathcal{L}(W)\|_F\le G_{\max}$.
\textbf{(A2)} requires $\mathcal{L}$ to be bounded below.
\textbf{(A3)} requires the factors to stay full-rank:
$\sigma_r(A_k),\sigma_r(B_k)\ge\delta>0$.
Here the step-size condition specialises to
$\eta_k\le\min\bigl\{1,1/(2C)\bigr\}$ with $C$ as defined below.

\paragraph{Explicit solution of the inner problem.}
The following result from \citet{lora-pro} gives the closed-form minimizer
of the LoRA-Pro inner problem.

\begin{lemma}[\citealt{lora-pro}, Theorem~2.1]
  \label{lem:lora-pro-solution}
  Let $B\in\mathbb{R}^{m\times r}$ and $A\in\mathbb{R}^{r\times n}$ be
  full-rank. The optimal solution of
  $\min_{G_A,G_B}\|BG_A+G_BA-G_W\|_F^2$
  is
  \[
    G_A^*=B^\dagger G_W+XA,
    \qquad
    G_B^*=(I-P_B)G_WA^\dagger-BX,
  \]
  where $X\in\mathbb{R}^{r\times r}$ is an arbitrary matrix,
  $B^\dagger=(B^\top B)^{-1}B^\top$, $A^\dagger=A^\top(AA^\top)^{-1}$,
  and $P_B=B(B^\top B)^{-1}B^\top$.
\end{lemma}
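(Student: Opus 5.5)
We need to prove Lemma (LoRA-Pro closed form): minimizers of $\|BG_A+G_BA-G_W\|_F^2$ are exactly $G_A^*=B^\dagger G_W+XA$, $G_B^*=(I-P_B)G_WA^\dagger-BX$ with $X$ arbitrary.

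The plan is to reduce the least-squares problem to an orthogonal projection onto the tangent space and then describe the full fiber of the linear map over the projected point. Define the linear map $\mathcal{A}(G_A,G_B)=BG_A+G_BA$. Its range is the tangent space $T$ of Eq.~\eqref{eq:lora-tangent-space}. Since $\mathcal{P}_T$ is the orthogonal projector onto $T$ (Eq.~\eqref{eq:lora-tangent-projector}), a pair $(G_A,G_B)$ is a minimizer if and only if $\mathcal{A}(G_A,G_B)=\mathcal{P}_T(G_W)$. This follows from the Pythagorean decomposition $\|\mathcal{A}(\cdot)-G_W\|_F^2=\|\mathcal{A}(\cdot)-\mathcal{P}_T(G_W)\|_F^2+\|(I-\mathcal{P}_T)G_W\|_F^2$. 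I will first confirm that Eq.~\eqref{eq:lora-tangent-projector} is indeed the orthogonal projector onto the range of $\mathcal{A}$: it is idempotent and self-adjoint, it fixes every $BY+ZA$, and its image lies in $T$.

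Second, I will check that the proposed pair attains $\mathcal{P}_T(G_W)$. Using $BB^\dagger=P_B$ and $A^\dagger A=P_A$,
\[
  BG_A^*+G_B^*A=P_BG_W+BXA+(I-P_B)G_WP_A-BXA=P_BG_W+G_WP_A-P_BG_WP_A=\mathcal{P}_T(G_W).
\]
So every member of the family is optimal.

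Third, and this is the main obstacle, I need to show that these are all the minimizers. It suffices to compute the kernel of $\mathcal{A}$. Suppose $BY+ZA=0$. Multiplying on the right by $A^\dagger$ gives $BYA^\dagger+Z=0$, so $Z=-BYA^\dagger$. Substituting back yields $BY(I-P_A)=0$. Since $B$ has full column rank, $Y(I-P_A)=0$, so the rows of $Y$ lie in $\mathrm{row}(A)$ and $Y=XA$ with $X=YA^\dagger\in\mathbb{R}^{r\times r}$. Then $Z=-BXAA^\dagger=-BX$. Hence
\[
  \ker\mathcal{A}=\{(XA,-BX):X\in\mathbb{R}^{r\times r}\},
\]
which has dimension $r^2$, matching $\dim$ of the domain minus $\dim T$.

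The minimizer set is one particular solution plus the kernel, which is exactly the stated family. This completes the proof.
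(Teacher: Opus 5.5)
Your proof is correct and complete. The paper gives no proof of this lemma. It imports it as Theorem~2.1 of LoRA-Pro~\citep{lora-pro}, whose derivation sets the partial derivatives of the quadratic to zero. That is, it solves the normal equations $B^\top(BG_A+G_BA-G_W)=0$ and $(BG_A+G_BA-G_W)A^\top=0$ algebraically by eliminating one unknown.

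You argue structurally instead. Optimality is equivalent to $\mathcal{A}(G_A,G_B)=\mathcal{P}_T(G_W)$ by the Pythagorean split, the $X=0$ pair is one preimage, and the solution set is that pair plus $\ker\mathcal{A}=\{(XA,-BX)\}$.

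Your route makes visible by construction the two facts the paper relies on downstream:
\begin{itemize}
\item the identity $BG_A^*+G_B^*A=\mathcal{P}_T(G_W)$, used in Proposition~\ref{prop:lora-pro-equivalence} and Theorem~\ref{thm:lora-pro-convergence};
\item the reading of $X$ as a pure gauge direction, since differentiating the reparametrization $(B(I+tX)^{-1},(I+tX)A)$ at $t=0$ gives exactly the kernel element $(-BX,XA)$ in $(\Delta B,\Delta A)$.
\end{itemize}
In particular, your kernel computation supplies the completeness half of the lemma: every minimizer has the stated form. The paper's own remark that the $BXA$ terms cancel only shows the converse, that every member of the family is a minimizer.

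The price is that your argument depends on the formula for $\mathcal{P}_T$ being the orthogonal projector onto $T$, which the paper asserts without proof. So the verification you plan (self-adjointness, fixing every $BY+ZA$, image contained in $T$) is a required step, not a sanity check. The normal-equation route needs no such input, because convexity makes the stationarity conditions necessary and sufficient.
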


The free matrix $X$ reflects a gauge ambiguity of the factor updates. Indeed,
adding $XA$ to $G_A^*$ and subtracting $BX$ from $G_B^*$ leaves the induced
tangent direction unchanged because the two terms $BXA$ cancel. We fix this
ambiguity by choosing $X=0$, which gives
$G_{A_k}^*=B_k^\dagger G_W^k$ and
$G_{B_k}^*=(I-P_{B_k})G_W^k A_k^\dagger$. This choice yields the canonical
orthogonal split
\[
  B_kG_{A_k}^*=P_{B_k}G_W^k,
  \qquad
  G_{B_k}^*A_k=(I-P_{B_k})G_W^kP_{A_k},
\]
whose two components have orthogonal column spaces. Thus $X=0$ removes only a
cancelling factor-space component and preserves the LoRA-Pro update in weight
space. Since the second-order product of the factor increments depends on the
chosen representative, the convergence result below concerns this explicit
canonical choice.

We also define
\[
  C_1:=\frac{1}{\delta^2},\quad
  C_3:=1+C_1G_{\max},\quad
  C:=C_1G_{\max}+\tfrac{M}{2}C_3^2.
\]


\subsection{Proof of Theorem~\ref{thm:lora-pro-convergence}}

Let $G_W^k:=\nabla\mathcal{L}(W_k)$ and $g_k:=\mathcal{P}_{T_k}(G_W^k)$.

By the LoRA-Pro update rule,
$A_{k+1}=A_k-\eta G_{A_k}^*$ and $B_{k+1}=B_k-\eta G_{B_k}^*$,
where $(G_{A_k}^*,G_{B_k}^*)$ is the minimizer of
$\|G_W^k-(B_kG_A+G_BA_k)\|_F^2$.
Since the feasible set is exactly the tangent space $T_k$, the minimizer
satisfies
\[
  B_kG_{A_k}^*+G_{B_k}^*A_k=\mathcal{P}_{T_k}(G_W^k)=g_k.
\]
Therefore the first-order inner product evaluates to
\[
\begin{aligned}
  \langle G_W^k,-\eta(B_kG_{A_k}^*+G_{B_k}^*A_k)\rangle
  &=-\eta\langle G_W^k,g_k\rangle
  =-\eta\langle\mathcal{P}_{T_k}(G_W^k),g_k\rangle
  =-\eta\|g_k\|_F^2.
\end{aligned}
\]

By Lemma~\ref{lem:lora-pro-solution} with $X=0$,
$G_{A_k}^*=B_k^\dagger G_W^k$ and
$G_{B_k}^*=(I-P_{B_k})G_W^k A_k^\dagger$.
We now show these can be expressed in terms of $g_k$.

For $G_{A_k}^*$: using
$\mathcal{P}_{T_k}(G_W^k)=P_{B_k}G_W^k+(I-P_{B_k})G_W^kP_{A_k}$
and $B_k^\dagger(I-P_{B_k})=0$,
\[
  B_k^\dagger g_k
  =B_k^\dagger P_{B_k}G_W^k
  +B_k^\dagger(I-P_{B_k})G_W^kP_{A_k}
  =B_k^\dagger G_W^k=G_{A_k}^*.
\]
Hence $G_{A_k}^*=B_k^\dagger g_k$, and
\[
  \|G_{A_k}^*\|_F
  =\|B_k^\dagger g_k\|_F
  \le\|B_k^\dagger\|_2\|g_k\|_F
  =\frac{\|g_k\|_F}{\sigma_r(B_k)}
  \le\frac{\|g_k\|_F}{\delta}.
\]

Next, we estimate $G_{B_k}^*$.
\[
\begin{aligned}
  G_W^k - g_k
  &= G_W^k - P_{B_k}G_W^k - (I-P_{B_k})G_W^kP_{A_k}\\
  &=(I-P_{B_k})G_W^k-(I-P_{B_k})G_W^kP_{A_k}
    =(I-P_{B_k})G_W^k(I-P_{A_k}).
\end{aligned}
\]
Multiplying from the left by $I-P_{B_k}$ and from the right by $A_k^\dagger$,
\[
  (I-P_{B_k})(G_W^k-g_k)A_k^\dagger
  = (I-P_{B_k})^2 G_W^k(I-P_{A_k})A_k^\dagger.
\]
Since $(I-P_{A_k})A_k^\dagger=0$, it follows that
$(I-P_{B_k})(G_W^k-g_k)A_k^\dagger=0$, i.e.\
\[
  (I-P_{B_k})G_W^kA_k^\dagger = (I-P_{B_k})g_kA_k^\dagger.
\]
Hence $G_{B_k}^*=(I-P_{B_k})G_W^kA_k^\dagger=(I-P_{B_k})g_kA_k^\dagger$ and
\[
\begin{aligned}
  \|G_{B_k}^*\|_F
  &=\|(I-P_{B_k})G_W^kA_k^\dagger\|_F
  =\|(I-P_{B_k})g_kA_k^\dagger\|_F\\
  &\le\|I-P_{B_k}\|_2\|g_k\|_F\|A_k^\dagger\|_2
  =\frac{\|g_k\|_F}{\sigma_r(A_k)}\le\frac{\|g_k\|_F}{\delta}.
\end{aligned}
\]

It follows that
\[
  \|\eta^2 G_{B_k}^*G_{A_k}^*\|_F
  \le\eta^2\|G_{B_k}^*\|_F\|G_{A_k}^*\|_F
  \le\frac{\eta^2}{\delta^2}\|g_k\|_F^2=C_1\eta^2\|g_k\|_F^2.
\]

\[
\begin{aligned}
  W_{k+1}-W_k
  &=(B_k-\eta G_{B_k}^*)(A_k-\eta G_{A_k}^*)-B_kA_k\\
  &=-\eta(G_{B_k}^*A_k+B_kG_{A_k}^*)
    +\eta^2G_{B_k}^*G_{A_k}^*
    =-\eta g_k+\eta^2G_{B_k}^*G_{A_k}^*.
\end{aligned}
\]
Since $\|g_k\|_F\le\|G_W^k\|_F\le G_{\max}$ and $\eta\le 1$,
\[
  \|W_{k+1}-W_k\|_F
  \le\eta\|g_k\|_F+C_1\eta^2\|g_k\|_F^2
  \le\eta\|g_k\|_F(1+C_1G_{\max})=C_3\eta\|g_k\|_F.
\]

By Frobenius smoothness \textbf{(A1)},
\[
\begin{aligned}
  \mathcal{L}(W_{k+1})-\mathcal{L}(W_k)
  &\le\langle G_W^k,W_{k+1}-W_k\rangle
    +\frac{M}{2}\|W_{k+1}-W_k\|_F^2\\
  &=-\eta\|g_k\|_F^2
    +\eta^2\langle G_W^k,G_{B_k}^*G_{A_k}^*\rangle
  +\frac{M}{2}C_3^2\eta^2\|g_k\|_F^2.
\end{aligned}
\]
Since
\[
  |\langle G_W^k,G_{B_k}^*G_{A_k}^*\rangle|
  \le\|G_W^k\|_F\|G_{B_k}^*G_{A_k}^*\|_F
  \le C_1G_{\max}\|g_k\|_F^2,
\]
\[
  \mathcal{L}(W_{k+1})-\mathcal{L}(W_k)
  \le-\eta\|g_k\|_F^2+C\eta^2\|g_k\|_F^2
  =-\eta(1-C\eta)\|g_k\|_F^2.
\]
For $\eta\le\min\{1,1/(2C)\}$, $1-C\eta\ge 1/2$, so
\[
  \mathcal{L}(W_{k+1})-\mathcal{L}(W_k)\le-\frac{\eta}{2}\|g_k\|_F^2.
\]

Summing over $k=0,\ldots,N-1$ and using $\mathcal{L}(W_N)\ge\mathcal{L}_*$,
\[
  \frac{\eta}{2}\sum_{k=0}^{N-1}\|g_k\|_F^2
  \le\mathcal{L}(W_0)-\mathcal{L}_*.
\]
Since $\min\le\frac{1}{N}\sum$ and $\min\|g_k\|_F\le(\min\|g_k\|_F^2)^{1/2}$,
\[
  \min_{0\le k<N}\|g_k\|_F
  \le\sqrt{\frac{2(\mathcal{L}(W_0)-\mathcal{L}_*)}{\eta N}}
  =O(N^{-1/2}).
\]
Letting $N\to\infty$ gives $\sum_{k=0}^\infty\|g_k\|_F^2<\infty$,
hence $\lim_{k\to\infty}\|g_k\|_F=0$. \qed


\section{Retraction Analysis}
\label{app:retraction}

This appendix proves that the factor-induced map used by LoRA-TSD is a
retraction on the fixed-rank manifold, and compares it with the truncated-SVD
retraction of Riemannion.

We work on the fixed-rank manifold
$\mathcal{M}_r=\{W\in\mathbb{R}^{m\times n}:\mathrm{rank}\,W=r\}$. The constant
affine shift $W_0$ does not affect the tangent geometry, so we omit it. Fix a
point $W=BA\in\mathcal{M}_r$ with full-rank factors $B\in\mathbb{R}^{m\times r}$
and $A\in\mathbb{R}^{r\times n}$, and recall the tangent space
\[
  T_W\mathcal{M}_r
  =\{\Delta B\,A+B\,\Delta A:
    \Delta B\in\mathbb{R}^{m\times r},\ \Delta A\in\mathbb{R}^{r\times n}\}.
\]
We use the standard notion of a retraction~\citep{absil2008optimization}. A map
$R_W:T_W\mathcal{M}_r\to\mathcal{M}_r$ that is smooth on a neighborhood of the
origin is a retraction at $W$ when
\begin{enumerate}
  \item[(i)] $R_W(0)=W$ \quad(centering), and
  \item[(ii)] $\tfrac{d}{dt}R_W(t\xi)\big|_{t=0}=\xi$ for every
    $\xi\in T_W\mathcal{M}_r$ \quad(local rigidity).
\end{enumerate}

\paragraph{The factor-induced map.}
Given a tangent direction $\xi\in T_W\mathcal{M}_r$, LoRA-TSD reconstructs the
factor increments by the canonical split of
Eqs.~\eqref{eq:lora-delta-b}--\eqref{eq:lora-delta-a},
\[
  \Delta B=\xi A^\dagger,
  \qquad
  \Delta A=B^\dagger\bigl(\xi-\Delta B\,A\bigr),
\]
with $A^\dagger=A^\top(AA^\top)^{-1}$ and $B^\dagger=(B^\top B)^{-1}B^\top$. Both
increments depend linearly on $\xi$. LoRA-TSD then applies the update
\[
  R_W(\xi):=\bigl(B+\Delta B\bigr)\bigl(A+\Delta A\bigr).
\]

\begin{proposition}[The factor-induced map is a retraction]
  \label{prop:factor-retraction}
  If $B$ and $A$ have full rank, then $R_W$ is a retraction at $W=BA$ in the
  sense of \citet{absil2008optimization}.
\end{proposition}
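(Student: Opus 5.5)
The plan is to verify the two defining properties directly from the explicit form of $R_W$, after settling well-definedness and smoothness. Since $\Delta B=\xi A^\dagger$ and $\Delta A=B^\dagger(\xi-\xi A^\dagger A)=B^\dagger\xi(I-P_A)$ are linear in $\xi$, the map
\[
  R_W(\xi)=\bigl(B+\xi A^\dagger\bigr)\bigl(A+B^\dagger\xi(I-P_A)\bigr)
\]
is a polynomial of degree two in $\xi$, hence smooth as a map $T_W\mathcal{M}_r\to\mathbb{R}^{m\times n}$.

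\textbf{Landing on $\mathcal{M}_r$ near the origin.} The one genuinely non-algebraic point is that $R_W(\xi)$ must have rank exactly $r$ for $\xi$ in a neighborhood of $0$. I will argue via factor nondegeneracy, as in Lemma~\ref{lem:remainder-bound}. We have $\|\Delta B\|_2\le\|\xi\|_2/\sigma_r(A)$ and $\|\Delta A\|_2\le\|\xi\|_2/\sigma_r(B)$. By Weyl's inequality (Theorem~\ref{thm:weyl}),
\[
  \sigma_r(B+\Delta B)\ge\sigma_r(B)-\|\Delta B\|_2>0
  \quad\text{and}\quad
  \sigma_r(A+\Delta A)\ge\sigma_r(A)-\|\Delta A\|_2>0
\]
whenever $\|\xi\|_2<\sigma_r(A)\sigma_r(B)$. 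Then $B+\Delta B$ has full column rank $r$ and $A+\Delta A$ has full row rank $r$, so their product has rank exactly $r$. On this open neighborhood $R_W$ is therefore a smooth map into $\mathcal{M}_r$. Smoothness as a manifold-valued map follows because $\mathcal{M}_r$ is an embedded submanifold of $\mathbb{R}^{m\times n}$.

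\textbf{Centering.} At $\xi=0$ both increments vanish, so $R_W(0)=BA=W$.

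\textbf{Local rigidity.} I expand
\[
  R_W(t\xi)=BA+t\bigl(\Delta B\,A+B\,\Delta A\bigr)+t^2\,\Delta B\,\Delta A,
\]
which gives $\tfrac{d}{dt}R_W(t\xi)\big|_{t=0}=\Delta B\,A+B\,\Delta A$. It remains to show this equals $\xi$, i.e. that the canonical split reconstructs $\xi$ exactly. First, $\Delta B\,A=\xi A^\dagger A=\xi P_A$. Second, $B\,\Delta A=P_B\,\xi(I-P_A)$. So the sum is $\xi P_A+P_B\xi(I-P_A)$, which is $\mathcal{P}_T(\xi)$ in the two-term form of Eq.~\eqref{eq:lora-projector-2term}.

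Since $\xi\in T_W\mathcal{M}_r$, we have $\mathcal{P}_T(\xi)=\xi$. Equivalently, writing $\xi=\Delta B_0A+B\Delta A_0$, the term $\Delta B_0A$ is fixed by right-multiplication by $P_A$, and $B\Delta A_0$ is fixed by left-multiplication by $P_B$. From these, $(I-P_B)\xi(I-P_A)=0$, which gives $\xi=\xi P_A+P_B\xi(I-P_A)$.

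I expect no real obstacle. The only step needing care is the rank-preservation neighborhood, and the Weyl bound above settles it.
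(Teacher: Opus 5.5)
Your proof is correct and follows essentially the same route as the paper: linear dependence of the increments on $\xi$ gives smoothness, centering is immediate, and rigidity comes from the canonical split reconstructing $\Delta B\,A+B\,\Delta A=\xi P_A+P_B\xi(I-P_A)=\mathcal{P}_T(\xi)=\xi$, with the remainder $t^2\,\Delta B\,\Delta A$ not affecting the derivative at $t=0$. The only differences are minor refinements. You make the rank-preserving neighborhood explicit ($\|\xi\|_2<\sigma_r(A)\sigma_r(B)$ via Weyl) where the paper just invokes openness of the full-rank set, and you justify $\mathcal{P}_T(\xi)=\xi$ directly through $(I-P_B)\xi(I-P_A)=0$.
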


\begin{proof}
For smoothness, note that the increments $\Delta B$ and $\Delta A$ depend
linearly, hence smoothly, on $\xi$. The full-rank matrices form an open set, so
for small $\|\xi\|_2$ both $B+\Delta B$ and $A+\Delta A$ remain full rank. Their
product then has rank exactly $r$ and lies in $\mathcal{M}_r$. Hence $R_W$ is
smooth from a neighborhood of $0\in T_W\mathcal{M}_r$ into $\mathcal{M}_r$.

Centering holds because at $\xi=0$ both increments vanish, so $R_W(0)=BA=W$.

For local rigidity, expand the product,
\[
  R_W(\xi)
  =BA+\bigl(\Delta B\,A+B\,\Delta A\bigr)+\Delta B\,\Delta A .
\]
The reconstructed increments satisfy
$\Delta B\,A+B\,\Delta A=\xi P_A+P_B\xi(I-P_A)=\mathcal{P}_T(\xi)=\xi$, using
$A^\dagger A=P_A$, $BB^\dagger=P_B$, the projector
formula~\eqref{eq:lora-tangent-projector}, and $\mathcal{P}_T(\xi)=\xi$ for
$\xi\in T_W\mathcal{M}_r$. Hence $R_W(\xi)=W+\xi+\Delta B\,\Delta A$.
Because the increments are linear in $\xi$, the scaled direction $t\xi$
reconstructs to $t\,\Delta B$ and $t\,\Delta A$, so
\[
  R_W(t\xi)=\bigl(B+t\,\Delta B\bigr)\bigl(A+t\,\Delta A\bigr)
  =W+t\,\xi+t^2\,\Delta B\,\Delta A.
\]
The second-order coefficient is finite. Since
$\Delta B\,A=\xi P_A$, we have
$\Delta A=B^\dagger\xi(I-P_A)$. Therefore,
\[
  \|\Delta B\|_2\le\frac{\|\xi\|_2}{\sigma_r(A)},
  \qquad
  \|\Delta A\|_2\le\frac{\|\xi\|_2}{\sigma_r(B)},
\]
so that
\[
  \|\Delta B\,\Delta A\|_2
  \le\frac{\|\xi\|_2^2}{\sigma_r(A)\sigma_r(B)},
\]
the same estimate as Lemma~\ref{lem:remainder-bound}. Hence
$R_W(t\xi)=W+t\,\xi+O(t^2)$, and
$\tfrac{d}{dt}R_W(t\xi)\big|_{t=0}=\xi$ for every $\xi\in T_W\mathcal{M}_r$, which
is condition (ii).
\end{proof}

\begin{table*}[t]
\centering
\small
\setlength{\tabcolsep}{4pt}
\begin{tabular}{rr cc c l}
\toprule
$m$ & $n$ & Factor-induced (LoRA-TSD) & Truncated SVD~\citep{riemannion} & Speedup & Layer \\
\midrule
4096  & 4096 & $0.434$ & $1.025$ & $2.36\times$ & \texttt{q\_proj} \\
1024  & 4096 & $0.434$ & $0.911$ & $2.10\times$ & \texttt{k/v\_proj} \\
14336 & 4096 & $0.431$ & $1.215$ & $2.82\times$ & \texttt{gate/up\_proj} \\
\bottomrule
\end{tabular}
\caption{Wall-clock time (ms) of a single retraction step,
  factor-induced (LoRA-TSD) versus truncated-SVD (Riemannion), at the weight shapes of
  Llama-3.1-8B, measured on an NVIDIA H200.}
\label{tab:retraction-time}
\end{table*}

\paragraph{Relation to the truncated-SVD retraction.}
The connection to Riemannion has two parts: the tangent direction and the
retraction. For $\tau=1$, LoRA-TSD recovers the tangent-direction part of
Riemannion, since both methods apply a matrix-sign orthogonalization to the
tangent-projected gradient. For $\tau>1$, LoRA-TSD repeats the tangent projection
and matrix-sign steps as an alternating-projection refinement toward the
constrained spectral oracle of Eq.~\eqref{eq:lora-constrained-tangent}. In our
hyperparameter tuning, $\tau=3$ to $5$ performs markedly better than $\tau=1$,
so this refinement accounts for part of the empirical gain.

Riemannion instead uses the metric-projection retraction
$R_W^{\mathrm{SVD}}(\xi)=\mathrm{SVD}_r(W+\xi)$, the best rank-$r$ approximation
of $W+\xi$. That this map is a smooth retraction on $\mathcal{M}_r$ is
standard~\citep{vandereycken2013lowrank}. Being retractions, both $R_W$ and
$R_W^{\mathrm{SVD}}$ satisfy $R_W(\xi)=W+\xi+O(\|\xi\|_2^2)$, so they agree to
first order in $\xi$ and can differ only at second order. For the factor-induced
map the second-order term is exactly $\Delta B\,\Delta A$, whereas for the
truncated-SVD map it is the curvature correction of $\mathcal{M}_r$. The two
therefore produce different iterates whenever $\xi\ne 0$, even though their
tangent directions coincide. In particular, equal tangent directions at
$\tau=1$ do not make LoRA-TSD and Riemannion identical after retraction.
The factor-induced retraction costs only the single product
$\Delta B\,\Delta A$ and follows directly from the LoRA parameterization, while
the truncated-SVD retraction requires an SVD at every step.
Table~\ref{tab:retraction-time} reports the wall-clock cost of a single
retraction step at the attention and MLP weight shapes of Llama-3.1-8B. The
factor-induced retraction is essentially independent of the layer size, whereas
the truncated-SVD cost grows with it, so the gap widens for the larger
projections.


\section{Experimental Details and Additional Results}
\label{app:experiments}

\paragraph{Datasets.}
We evaluate Llama-3.2-1B-Instruct and Llama-3.1-8B on six English benchmarks
covering commonsense reasoning and natural language inference. We also evaluate
Qwen3-32B on SIQA and OBQA. We report accuracy for every benchmark because each
test example has a discrete target, making correctness unambiguous. The label
distributions are sufficiently balanced for accuracy to provide an appropriate
summary of performance.
\begin{itemize}
  \item \textbf{BoolQ}~\citep{clark2019boolq} is a reading-comprehension dataset
    of naturally occurring yes/no questions, each paired with a Wikipedia passage
    from which the model predicts a yes or no answer.
  \item \textbf{PIQA}~\citep{bisk2020piqa} tests physical commonsense: given a
    goal, the model chooses which of two candidate solutions is physically
    plausible.
  \item \textbf{SIQA}~\citep{sap2019socialiqa} (Social IQa) probes social and
    emotional commonsense, asking the model to reason about the motivations,
    reactions, and consequences of everyday social interactions.
  \item \textbf{OBQA}~\citep{mihaylov2018openbookqa} (OpenBookQA) is a
    multiple-choice science question-answering dataset modeled on open-book
    exams, requiring elementary science facts to be combined with broad
    commonsense.
  \item \textbf{QNLI}~\citep{wang2019glue} is the question-answering
    natural-language-inference task of the GLUE benchmark, derived from
    SQuAD~\citep{rajpurkar2016squad}. Each example pairs a question with a
    sentence, and the model decides whether the sentence contains the answer.
  \item \textbf{MultiNLI}~\citep{williams2018multinli} is a multi-genre
    natural-language-inference corpus in which the model classifies the relation
    between a premise and a hypothesis as entailment, contradiction, or neutral.
\end{itemize}

\paragraph{Hyperparameter tuning protocol.}
For each optimizer and model size, we first evaluated three or four learning
rates on BoolQ. We then ran three additional trials around the best value from
this coarse sweep. We selected the learning rate by the highest validation
score reached at any point during training. The selected configuration was
frozen and reused without further tuning on all remaining benchmarks. This
protocol avoids per-dataset tuning and keeps the comparison focused on the
optimizer. The selected values are reported in Tables~\ref{tab:hparams-1b}
and~\ref{tab:hparams-8b}.
For Qwen3-32B, we transfer the Llama-3.1-8B settings with minimal tuning and
reduce gradient clipping from $1$ to $0.25$ for training stability.

\begin{wrapfigure}[17]{r}{0.58\textwidth}
  \centering
  \input{plots/fig_singular_values_detailed.tex}
  \captionsetup{font=normalsize}
  \caption{Per-step singular-value ratios $\sigma_1/\sigma_k$ of the LoRA-TSD
    update, colored by training step, with the mean highlighted. The spectrum
    stays close to flat across all steps, reflecting the spectral-norm geometry
    of the tangent-space update.}
  \label{fig:singular_values}
\end{wrapfigure}

\paragraph{Rank ablation.}
Table~\ref{tab:rank-ablation} gives the full rank sweep on
BoolQ and SIQA with Llama-3.1-8B, extending the summary in
Section~\ref{sec:experiments}. All hyperparameters other than the rank $r$ are held fixed. LoRA-TSD
achieves the best accuracy in 11 of 14 settings and stays within a narrow
accuracy band as $r$ varies. LoRA-Rite is competitive at small ranks but
degrades at larger ranks, including a sharp drop at $r=64$ on SIQA.

\paragraph{Spectrum of the update.}
Does the tangent-space update produce the balanced spectrum that motivates the
spectral-norm geometry? Figure~\ref{fig:singular_values} plots the ratio
$\sigma_1/\sigma_k$ of the LoRA-TSD update against the singular-value index $k$
for every recorded training step. The LoRA rank here is $r=16$, so
$\sigma_{32}$ is the last nonzero singular value of the tangent update
$\Delta B_k A_k + B_k \Delta A_k$, which has rank at most $2r=32$. The ratio
stays close to one along the whole spectrum, reaching only about $1.2$ on average
at the smallest singular value, and the per-step curves remain tightly clustered
throughout training. The update is therefore almost equalized rather than
dominated by a few leading directions, which is what the spectral-norm geometry
inherited from Muon is meant to produce.

\begin{table*}[h]
\centering
\small
\setlength{\tabcolsep}{6pt}
\begin{tabular}{r rrrr rrrr}
\toprule
& \multicolumn{4}{c}{BoolQ} & \multicolumn{4}{c}{SIQA} \\
\cmidrule(lr){2-5}\cmidrule(lr){6-9}
$r$ & \textbf{LoRA-TSD} & Muon & LoRA-Rite & AdamW
    & \textbf{LoRA-TSD} & Muon & LoRA-Rite & AdamW \\
\midrule
1  & \textbf{89.82} & 65.63 & 88.81 & 88.65 & \textbf{82.14} & 80.55 & 81.78 & 80.96 \\
2  & 89.17 & 64.07 & \textbf{89.66} & 88.96 & 82.60 & 80.71 & \textbf{82.65} & 81.78 \\
4  & \textbf{89.69} & 89.57 & 89.66 & 88.87 & 82.50 & 80.86 & \textbf{82.55} & 80.86 \\
8  & \textbf{90.49} & 89.82 & 63.70 & 89.48 & \textbf{83.47} & 81.53 & 81.73 & 81.73 \\
16 & \textbf{90.46} & 89.14 & 88.93 & 89.60 & \textbf{83.06} & 81.20 & 81.40 & 81.80 \\
32 & \textbf{89.79} & 89.14 & 87.86 & 89.02 & \textbf{82.45} & 81.01 & 81.06 & 81.83 \\
64 & \textbf{90.06} & 89.11 & 71.93 & 89.97 & \textbf{82.29} & 81.06 & 77.84 & 81.83 \\
\bottomrule
\end{tabular}
\caption{Rank ablation on BoolQ and SIQA (Llama-3.1-8B). Accuracy versus LoRA
  rank $r$ with all other hyperparameters fixed. Bold marks the best method per
  rank within each dataset.}
\label{tab:rank-ablation}
\end{table*}

\begin{table*}[t]
\centering
\small
\setlength{\tabcolsep}{6pt}
\begin{tabular}{l c l}
\toprule
Method & $\eta$ & Method-specific hyperparameters \\
\midrule
LoRA-SGD      & $10^{-3}$        & momentum $0.9$ \\
AdamW         & $10^{-3}$        & $\beta=(0.9,0.999)$, $\epsilon=10^{-8}$ \\
Muon          & $10^{-2}$        & momentum $0.95$, Nesterov, NS steps $5$ \\
Riem-SGD      & $2{\cdot}10^{-3}$ & momentum $0.95$ \\
LoRA-Rite     & $8{\cdot}10^{-5}$ & $\beta=(0.9,0.999)$, $\epsilon=10^{-6}$ \\
\textbf{LoRA-TSD} & $5{\cdot}10^{-3}$ & $\tau=5$, NS steps $5$ \\
Riemannion    & $3{\cdot}10^{-5}$ & momentum $0.95$, NS steps $6$ \\
LoRA-Pro      & $5{\cdot}10^{-4}$ & $\epsilon_{\mathrm{inv}}=10^{-6}$ \\
\midrule
Muon Full-FT  & $3{\cdot}10^{-2}$ & momentum $0.95$, Nesterov, NS steps $5$ \\
\bottomrule
\end{tabular}
\caption{Selected hyperparameters for Llama-3.2-1B-Instruct, tuned on BoolQ.
  Common to all methods: rank $r=16$, $\alpha=16$, dropout $0.05$, effective batch
  size $32$, $3000$ steps, linear schedule, adapters on
  \texttt{q\_proj}/\texttt{k\_proj}/\texttt{v\_proj}.}
\label{tab:hparams-1b}
\end{table*}

\begin{table*}[t]
\centering
\small
\setlength{\tabcolsep}{6pt}
\begin{tabular}{l c l}
\toprule
Method & $\eta$ & Method-specific hyperparameters \\
\midrule
LoRA-SGD      & $5{\cdot}10^{-3}$ & momentum $0.9$ \\
AdamW         & $10^{-4}$        & $\beta=(0.9,0.999)$, $\epsilon=10^{-8}$ \\
Muon          & $10^{-4}$        & momentum $0.95$, Nesterov, NS steps $5$ \\
Riem-SGD      & $2{\cdot}10^{-3}$ & momentum $0.95$ \\
LoRA-Rite     & $10^{-4}$        & $\beta=(0.9,0.999)$, $\epsilon=10^{-6}$ \\
\textbf{LoRA-TSD} & $4{\cdot}10^{-3}$ & $\tau=5$, NS steps $5$, rebalance every $250$ \\
Riemannion    & $3{\cdot}10^{-5}$ & momentum $0.95$, NS steps $6$ \\
LoRA-Pro      & $5{\cdot}10^{-5}$ & $\epsilon_{\mathrm{inv}}=10^{-6}$ \\
\bottomrule
\end{tabular}
\caption{Selected hyperparameters for Llama-3.1-8B, tuned on BoolQ.
  Common to all methods: rank $r=16$, $\alpha=16$, dropout $0.05$, effective batch
  size $32$, $3000$ steps, linear schedule, adapters on
  \texttt{q\_proj}/\texttt{k\_proj}/\texttt{v\_proj}.}
\label{tab:hparams-8b}
\end{table*}


\end{appendixpart}
\end{document}